%% file: iclr_main.tex
\documentclass{article} %
\usepackage{iclr2026/iclr2026_conference,times}

\usepackage{microtype}
\usepackage{graphicx}
\usepackage{booktabs}

\usepackage{amsmath}
\usepackage{amssymb}
\usepackage{amsthm}
\usepackage{mathtools}
\numberwithin{equation}{section}
\usepackage{algorithm}
\usepackage{algorithmic}

\usepackage{slashed}
\usepackage{braket}

\usepackage{url}
\usepackage{float}
\usepackage{pifont}
\usepackage{wrapfig}

\usepackage{enumitem}
\usepackage[normalem]{ulem}
\useunder{\uline}{\ul}{}

\setlist[itemize]{leftmargin=1em, before=\vspace{-0.5em}, after=\vspace{-0.5em}, itemsep=0.1em}
\setlist[enumerate]{leftmargin=1.5em, before=\vspace{-0.5em}, after=\vspace{-0.5em}, itemsep=0.1em}

\usepackage[svgnames]{xcolor}
\definecolor{DarkGreen}{rgb}{0,0.40,0}
\definecolor{FireBrick}{rgb}{0.698,0.133,0.133}
\usepackage[colorlinks,citecolor=DarkGreen,linkcolor=FireBrick,urlcolor=FireBrick,linktocpage,unicode]{hyperref}

\usepackage{caption}
\usepackage{subcaption}

\usepackage{tikz}
\usetikzlibrary{arrows.meta}
\usetikzlibrary{bayesnet}
\usetikzlibrary{calc}

\usepackage{titlesec}
\titlespacing*{\section}{0pt}{2pt}{2pt}
\titlespacing*{\subsection}{0pt}{0pt}{0pt}
\titlespacing*{\subsubsection}{0pt}{0pt}{0pt}

\usepackage{titletoc}

\input{iclr2026/iclr2026_macros}

\crefname{figure}{Fig.}{Figs.}
\Crefname{figure}{Fig.}{Figs.}
\crefname{table}{Tab.}{Tabs.}
\Crefname{table}{Tab.}{Tabs.}
\crefname{section}{Sec.}{Sec.}
\Crefname{section}{Sec.}{Sec.}
\crefname{equation}{Eq.}{Eqs.}
\Crefname{equation}{Eq.}{Eqs.}
\crefname{appendix}{App.}{App.}
\Crefname{appendix}{App.}{App.}
\crefname{definition}{Def.}{Defs.}
\Crefname{definition}{Def.}{Defs.}
\crefname{lemma}{Lem.}{Lems.}
\Crefname{lemma}{Lem.}{Lems.}
\crefname{theorem}{Thm.}{Thms.}
\Crefname{theorem}{Thm.}{Thms.}
\crefname{remark}{Rmk.}{Rmks.}
\Crefname{remark}{Rmk.}{Rmks.}
\crefname{algorithm}{Alg.}{Algs.}
\Crefname{algorithm}{Alg.}{Algs.}

\title{Decoupled Diffusion Solver for Inverse Problems on Function Spaces}

\author{
Thomas Y.L. Lin$^{1,*}$ \;
Jiachen Yao$^{2,*}$ \;
Lufang Chiang$^{3}$ \;
Julius Berner$^{4}$ \;
Anima Anandkumar$^{2}$
\\[0.6em]
$^{1}$University of Washington, Seattle, USA \\
$^{2}$California Institute of Technology, Pasadena, USA \\
$^{3}$National Taiwan University, Taipei, Taiwan \\
$^{4}$NVIDIA Corporation, USA \\
$^{*}$Equal contribution.
}

\iclrfinalcopy

\begin{document}

\maketitle

\begin{abstract}
\input{0abstract}

\end{abstract}

\section{Introduction}
\label{sec:intro}

\input{1intro}

\section{Posterior Sampling for Inverse Problems}
\label{sec:preliminary}
\input{2preliminary}

\section{Decoupled Diffusion Inverse Solver (DDIS)}
\label{sec:ddis}

\input{3ddis}

\section{Theoretical Analysis}
\label{sec:theory}

\input{4theory}

\section{Experiments}
\label{sec:exp}

\input{5exp}

\section{Conclusion and Discussion}
\label{sec:conclusion}

\input{6conclusion}

\subsubsection*{Acknowledgments}
Anima Anandkumar is supported in part by Bren endowed chair, ONR (MURI grant N00014-23-1-2654), and the AI2050 senior fellow program at Schmidt Sciences. Jiachen Yao is supported in part by the Naren and Vinita Gupta Fellowship.

\bibliography{reference}
\bibliographystyle{iclr2026/iclr2026_conference}

\newpage

\appendix
\label{sec:append}
\part*{Appendix}
{
\setlength{\parskip}{-0em}
\startcontents[sections]
\printcontents[sections]{}{1}{}
}
\input{appendix}

\end{document}

%% file: iclr2026/iclr2026_macros.tex
\allowdisplaybreaks
\usepackage{bm}
\usepackage{mdframed}
\usepackage{nicefrac}
\usepackage{multirow}
\usepackage{adjustbox}
\usepackage{tabularx}

\newcolumntype{Y}{>{\raggedright\arraybackslash}X}
\newcolumntype{M}{>{\centering\arraybackslash}X}
\newcolumntype{L}{>{\raggedright\arraybackslash}p{2.8cm}} 

\usepackage[nameinlink,capitalize,noabbrev]{cleveref}

\definecolor{LightCyan}{rgb}{0.8, 0.9, 1}
\newlength{\Oldarrayrulewidth}

\newcommand{\cellcolorcontents}[2]{%
  \begingroup
  \setlength{\fboxsep}{0pt}%
  \colorbox{#1}{\strut #2}%
  \endgroup
}

\newcolumntype{C}[1]{>{\collectcell\cellcolorcontents{LightCyan}}c<{#1}} 

\definecolor{LightCyan}{rgb}{0.8, 0.9, 1}
\newcolumntype{b}{>{\columncolor{LightCyan}\hspace{0pt}}c}
\definecolor{cadetgrey}{rgb}{0.57, 0.64, 0.69}
\newcolumntype{g}{>{\columncolor{cadetgrey}\hspace{0pt}}c}

\usepackage{array}
\usepackage{makecell}

\newcommand{\bea}{\begin{eqnarray}}
\newcommand{\eea}{\end{eqnarray}}

\usepackage{slashed}

\usepackage{dashbox}
\usepackage{colortbl}
\usepackage{arydshln}
\definecolor{lightyellow}{rgb}{1.0, 0.95, 0.7}
\definecolor{Blue}{rgb}{0, 0, 0.8}
\definecolor{blue}{rgb}{0,0,1}
\definecolor{darkgreen}{rgb}{0,0.40,0}
\definecolor{firebrick}{rgb}{0.698,0.133,0.133}

\definecolor{colorA}{rgb}{1,0,0}
\definecolor{colorB}{rgb}{0,0.3,1}
\definecolor{colorC}{rgb}{0.9,0.8,0.2}
\definecolor{colorD}{rgb}{0,0.65,0}
\definecolor{lesslightgray}{rgb}{0.5,0.5,0.5}
\definecolor{light-gray}{gray}{0.95}

\let\tilde\widetilde
\let\hat\widehat

\let\cite\citep 

\makeatletter
\def\th@remark{%
  \thm@headfont{\bfseries}%
  \normalfont 
  \thm@preskip\parskip 
  \thm@postskip\parskip 
}
\makeatother
\usepackage[many]{tcolorbox}
\theoremstyle{definition}
\newtheorem{theorem}{Theorem}[section]
\tcolorboxenvironment{theorem}{
  breakable,
  colback=black!10,
  colframe=white,%
  width=\dimexpr\linewidth+10pt\relax,%
  enlarge left by=-5pt,%
  enlarge right by=-5pt,%
  boxsep=5pt,%
  boxrule=0pt,
  left=0pt,right=0pt,top=0pt,bottom=0pt,
  sharp corners,
  before skip=\topsep,
  after skip=\topsep
}
\newtheorem{lemma}{Lemma}[section]
\tcolorboxenvironment{lemma}{
  breakable,
  colback=black!10,
  colframe=white,%
  width=\dimexpr\linewidth+10pt\relax,%
  enlarge left by=-5pt,%
  enlarge right by=-5pt,%
  boxsep=5pt,%
  boxrule=0pt,
  left=0pt,right=0pt,top=0pt,bottom=0pt,
  sharp corners,
  before skip=\topsep,
  after skip=\topsep
}
\newtheorem{corollary}{Corollary}[theorem]
\tcolorboxenvironment{corollary}{
  breakable,
  colback=black!10,
  colframe=white,%
  width=\dimexpr\linewidth+10pt\relax,%
  enlarge left by=-5pt,%
  enlarge right by=-5pt,%
  boxsep=5pt,%
  boxrule=0pt,
  left=0pt,right=0pt,top=0pt,bottom=0pt,
  sharp corners,
  before skip=\topsep,
  after skip=\topsep
}
\newtheorem{proposition}{Proposition}[section]
\theoremstyle{definition}
\newtheorem{definition}{Definition}[section]
\tcolorboxenvironment{definition}{
  breakable,
  colback=black!10,
  colframe=white,%
  width=\dimexpr\linewidth+10pt\relax,%
  enlarge left by=-5pt,%
  enlarge right by=-5pt,%
  boxsep=5pt,%
  boxrule=0pt,
  left=0pt,right=0pt,top=0pt,bottom=0pt,
  sharp corners,
  before skip=\topsep,
  after skip=\topsep
}
\theoremstyle{remark}
\newtheorem{remark}{Remark}[section]

\tcolorboxenvironment{assumption}{
  breakable,
  colback=black!10,
  colframe=white,%
  width=\dimexpr\linewidth+10pt\relax,%
  enlarge left by=-5pt,%
  enlarge right by=-5pt,%
  boxsep=5pt,%
  boxrule=0pt,
  left=0pt,right=0pt,top=0pt,bottom=0pt,
  sharp corners,
  before skip=\topsep,
  after skip=\topsep
}

\tcolorboxenvironment{problem}{
  breakable,
  colback=black!10,
  colframe=white,%
  width=\dimexpr\linewidth+10pt\relax,%
  enlarge left by=-5pt,%
  enlarge right by=-5pt,%
  boxsep=5pt,%
  boxrule=0pt,
  left=0pt,right=0pt,top=0pt,bottom=0pt,
  sharp corners,
  before skip=\topsep,
  after skip=\topsep
}

\crefname{theorem}{Theorem}{Theorems}
\crefname{proposition}{Proposition}{Propositions}
\crefname{lemma}{Lemma}{Lemmas}
\crefname{corollary}{Corollary}{Corollaries}
\crefname{definition}{Definition}{Definitions}
\crefname{assumption}{Assumption}{Assumptions}
\crefname{remark}{Remark}{Remarks}
\crefname{problem}{Problem}{Problems}
\crefname{property}{Property}{property}

\numberwithin{equation}{section}
\numberwithin{theorem}{section}
\numberwithin{proposition}{section}
\numberwithin{definition}{section}
\numberwithin{lemma}{section}
\numberwithin{assumption}{section}
\numberwithin{remark}{section}

\newcommand*{\annot}[1]{\tag*{\footnotesize{\textcolor{black!50}{\big(#1\big)}}}}

\makeatletter
\let\save@mathaccent\mathaccent
\newcommand*\if@single[3]{%
    \setbox0\hbox{${\mathaccent"0362{#1}}^H$}%
    \setbox2\hbox{${\mathaccent"0362{\kern0pt#1}}^H$}%
    \ifdim\ht0=\ht2 #3\else #2\fi
}
\newcommand*\rel@kern[1]{\kern#1\dimexpr\macc@kerna}
\newcommand*\widebar[1]{\@ifnextchar^{{\wide@bar{#1}{0}}}{\wide@bar{#1}{1}}}
\newcommand*\wide@bar[2]{\if@single{#1}{\wide@bar@{#1}{#2}{1}}{\wide@bar@{#1}{#2}{2}}}
\newcommand*\wide@bar@[3]{%
    \begingroup
    \def\mathaccent##1##2{%
        \let\mathaccent\save@mathaccent
        \if#32 \let\macc@nucleus\first@char \fi
        \setbox\z@\hbox{$\macc@style{\macc@nucleus}_{}$}%
        \setbox\tw@\hbox{$\macc@style{\macc@nucleus}{}_{}$}%
        \dimen@\wd\tw@
        \advance\dimen@-\wd\z@
        \divide\dimen@ 3
        \@tempdima\wd\tw@
        \advance\@tempdima-\scriptspace
        \divide\@tempdima 10
        \advance\dimen@-\@tempdima
        \ifdim\dimen@>\z@ \dimen@0pt\fi
        \rel@kern{0.6}\kern-\dimen@
        \if#31
        \overline{\rel@kern{-0.6}\kern\dimen@\macc@nucleus\rel@kern{0.4}\kern\dimen@}%
        \advance\dimen@0.4\dimexpr\macc@kerna
        \let\final@kern#2%
        \ifdim\dimen@<\z@ \let\final@kern1\fi
        \if\final@kern1 \kern-\dimen@\fi
        \else
        \overline{\rel@kern{-0.6}\kern\dimen@#1}%
        \fi
    }%
    \macc@depth\@ne
    \let\math@bgroup\@empty \let\math@egroup\macc@set@skewchar
    \mathsurround\z@ \frozen@everymath{\mathgroup\macc@group\relax}%
    \macc@set@skewchar\relax
    \let\mathaccentV\macc@nested@a
    \if#31
    \macc@nested@a\relax111{#1}%
    \else
    \def\gobble@till@marker##1\endmarker{}%
    \futurelet\first@char\gobble@till@marker#1\endmarker
    \ifcat\noexpand\first@char A\else
    \def\first@char{}%
    \fi
    \macc@nested@a\relax111{\first@char}%
    \fi
    \endgroup
    }
\makeatother

\makeatletter
\newcommand*{\redefinesymbolwitharg}[1]{%
  \expandafter\let\csname ltx#1\expandafter\endcsname\csname #1\endcsname
  \@namedef{#1}{\@ifnextchar{^}{\@nameuse{#1@}}{\@nameuse{#1@}^{}}}%
  \expandafter\def\csname #1@\endcsname^##1##2{%
     \csname ltx#1\endcsname\ifx!##1!\else^{##1}\fi\mathopen{}\mathclose\bgroup\left(##2\aftergroup\egroup\right)
     }%
}
\makeatother
\redefinesymbolwitharg{sin}
\redefinesymbolwitharg{cos}

%% file: 0abstract.tex
We propose a data-efficient, physics-aware generative framework in function space for inverse PDE problems.
Existing plug-and-play diffusion posterior samplers represent physics implicitly through joint coefficient-solution modeling, requiring substantial paired supervision.
In contrast, our Decoupled Diffusion Inverse Solver (DDIS) employs a \emph{decoupled} design: an unconditional diffusion learns the coefficient prior, while a neural operator explicitly models the forward PDE for guidance.
This decoupling enables superior data efficiency and effective physics-informed learning, while naturally supporting \emph{Decoupled} Annealing Posterior Sampling (DAPS) to avoid over-smoothing in Diffusion Posterior Sampling (DPS).
Theoretically, we prove that DDIS avoids the guidance attenuation failure of joint models when training data is scarce.
Empirically, DDIS achieves state-of-the-art performance under sparse observation, improving $l_2$ error by 11\% and spectral error by 54\% on average; when data is limited to 1\%, DDIS maintains accuracy with 40\% advantage in $l_2$ error compared to joint models.

%% file: 1intro.tex
\input{figure1}

Inverse problems are commonplace across science and engineering fields.
Typically, such problems are ill-posed, non-unique, and nonlinear, requiring computationally expensive solvers and hand-crafted knowledge~\citep{tarantola2005inverse}.
Inverse problems governed by partial differential equations (PDEs) aim to infer unknown coefficient fields $a \in \mathcal{A}$ (e.g., coefficients, sources) from partial or noisy observations $u_\mathrm{obs}$. 
Formally, with forward operator $L:\mathcal A\to\mathcal U$, mask operator $M$, and noise $\epsilon$, the observation model is
\begin{align}
\label{eqn:observation_model}
u = L(a), \quad u_\mathrm{obs} = M \odot u + \epsilon.
\end{align}
Applications such as weather forecasting~\citep{manshausen2024generative} and geophysical imaging~\citep{herrmann2012efficient}, operate under \emph{sparse} sensor coverage, where observations are available only on a small fraction of the spatial domain.
Separately, acquiring paired training data $(a,u)$ requires repeatedly solving the underlying PDE~\citep{chen2024data}, leading to an imbalanced regime with abundant coefficients but scarce paired coefficient-solution samples.

\begin{figure*}[t]
\centering
\includegraphics[width=.92\textwidth]{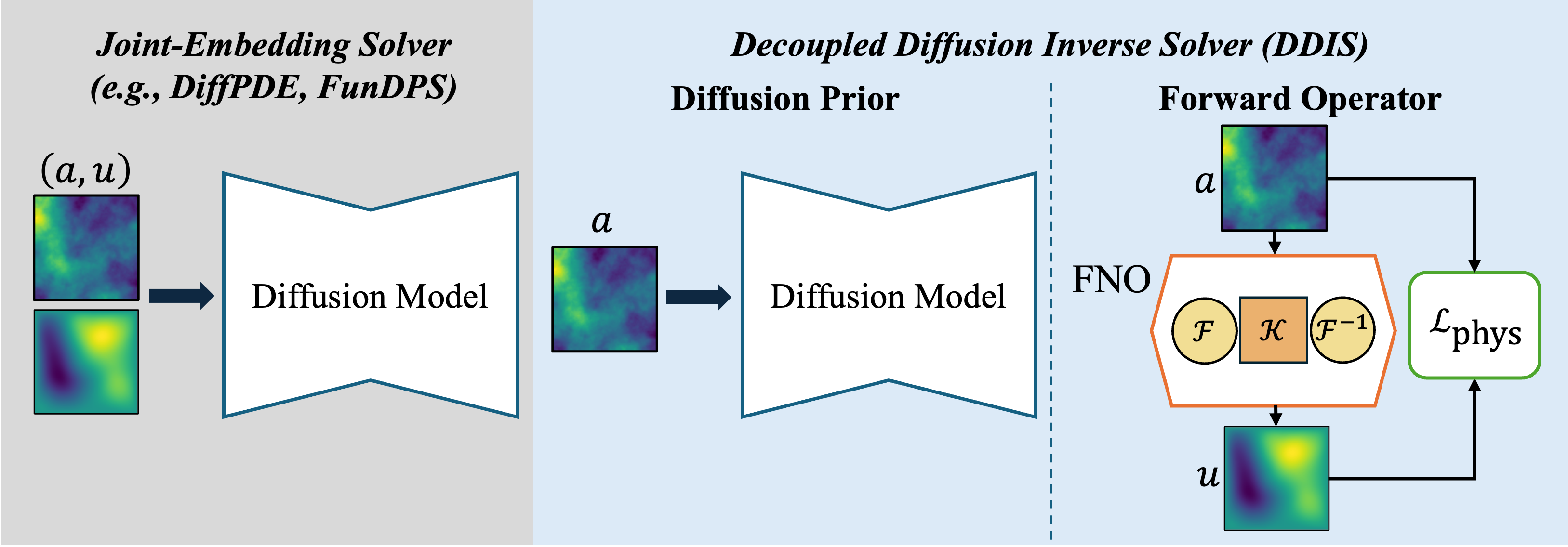}
\caption{
{Training pipeline comparison.} 
{Left (Gray):} Joint-embedding methods rely on \textit{paired} data $(a, u)$ to learn the joint distribution $p(a,u)$.
{Right (Cyan):} DDIS decouples the architecture: the diffusion prior learns $p(a)$ utilizing abundant \textit{unpaired} coefficients, while the neural operator takes paired data $(a, u)$ to directly learn the forward physics map.
}
\label{fig:training_components}
\vspace{-1.5em}
\end{figure*}

Bayesian generative modeling has emerged as a principled paradigm for solving PDE inverse problems by sampling from the posterior $p(a \mid u_\mathrm{obs})$.
Guidance-based methods such as Diffusion Posterior Sampling (DPS~\citep{chung2022diffusion}) evaluate the observation likelihood to steer prior samples toward the posterior during the reverse diffusion process.
Recent frameworks, including DiffusionPDE~\citep{huang2024diffusionpde} and FunDPS~\citep{yao2025guided}, implement this by learning a diffusion prior over the \emph{joint} distribution of coefficients and solutions $p(a,u)$ and apply guidance by masking the observed components.
As a result, joint-embedding models must recover the underlying physics purely from statistical cross-field correlations.
Conceptually, this reduces inverse PDE solving to inpainting problems in the joint space, raising a fundamental question:
\begin{center}
    \vspace{-0.5em}
    \emph{
    Can joint-embedding models effectively support cross-field guidance?
    }
    \vspace{-0.5em}
\end{center}

Our analysis reveals that the answer is no, especially under data scarcity. 
We identify two hurdles that limit the effectiveness of joint embeddings. 
First, when data is scarce, joint models suffer from \emph{guidance attenuation} (\cref{sec:likelihood-attenuation}); we characterize this through geometric criteria (\cref{fig:guidance-geometry}): 
non-vanishing guidance requires at least two training data points in the local neighborhood of current diffusion state, which the curse of dimensionality renders nearly impossible to satisfy.
Second, for advanced samplers like DAPS~\cite{zhang2025improving}, sparse observations cause correlations between observed points and their spatial neighbors to collapse to zero in joint models (\cref{sec:daps_failure}).
These motivate a decoupled design that avoids joint embeddings %
and enforces physics consistency through explicit representations.

We therefore propose {\textbf{Decoupled Diffusion Inverse Solver (DDIS)}}, a modular framework that separates prior modeling from physics-induced likelihood evaluation.

\vspace{-.5em}
\paragraph{Decoupling Physics during Training.}
Our key insight is that priors and physics serve fundamentally different roles:
the prior is defined over the coefficient space, while the likelihood is defined by the PDE solution operator mapping $a$ to $u$.
Modeling the physics via a forward operator is more data-efficient than learning it implicitly through joint embeddings.
Motivated by this decoupling, DDIS
(i) learns the coefficient prior using a diffusion model in function space, and
(ii) uses a neural operator $L_\phi(a)$ to represent the forward physics and evaluate the likelihood (\cref{fig:training_components}).
This enables data-efficient learning by exploiting abundant coefficient prior samples for prior modeling and limited paired data for operator training.

\vspace{-.5em}

\paragraph{Decoupling Sampling during Inference.}
Posterior sampling requires combining the learned prior with the physics-induced likelihood.
Unbiased posterior samplers such as \emph{Decoupled} Annealing Posterior Sampling (DAPS~\cite{zhang2025improving}) require dense guidance signals, which joint-embedding models fail to provide under sparse sensor coverage.
In contrast, the neural operator $L_\phi$ in DDIS propagates sparse observations in the solution space into dense guidance over the coefficient space.
(See \cref{fig:ddis_comparison}.)
This enables the effective use of DAPS during inference, avoiding the over-smoothing artifacts observed in prior DPS-based models.

\begin{figure*}[t]
    \centering
    \includegraphics[width=0.95\linewidth]{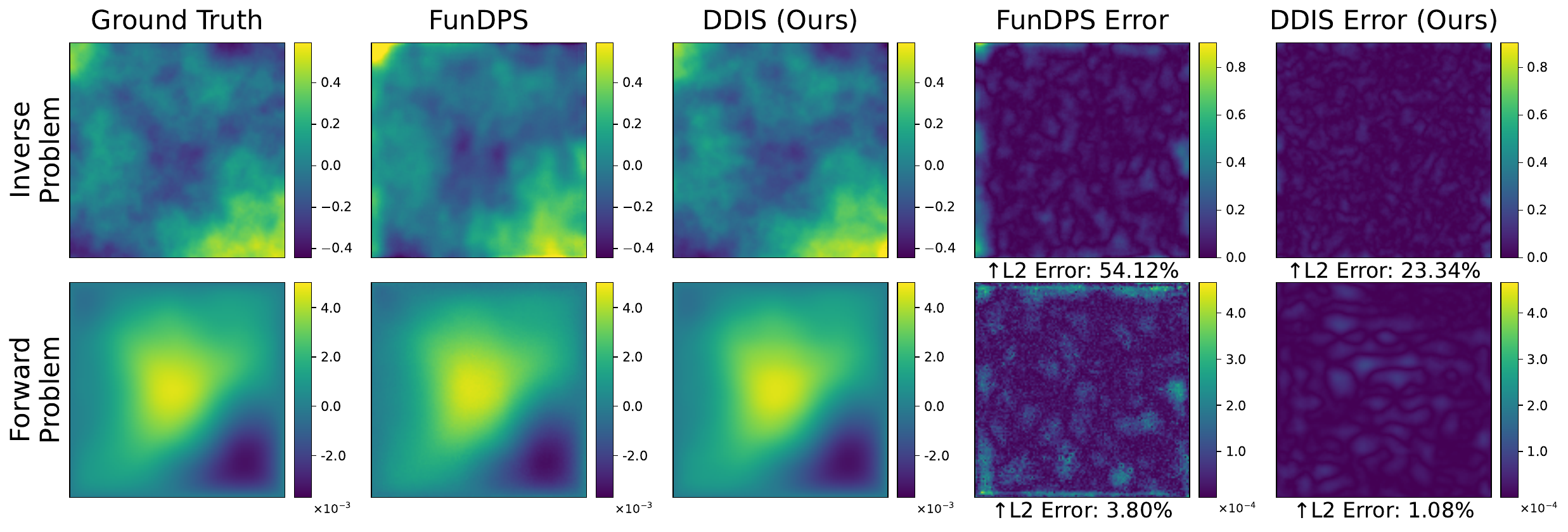}
    \vspace{-1em}
    \caption{Comparison of FunDPS and our DDIS on inverse Poisson problem reconstruction under 1\% paired data scarcity.
    FunDPS suffers from over-smoothing due to Jensen’s gap
    while DDIS achieves sharp and dense guidance with improved accuracy.
    }
    \label{fig:ddis_comparison}
    \vspace{-1em}
\end{figure*}

\paragraph{Advantages.} 
\vspace{-.5em}
DDIS's key advantages include:
\begin{itemize}
    \item \textbf{Data-efficient Learning.}
    DDIS leverages operator learning to bypass the limitations of joint embedding, thereby dominating the Pareto front of the accuracy-data tradeoff and sustaining performance under extreme data scarcity.
    \item \textbf{Physics Integration.}
    DDIS uses a neural operator surrogate to explicitly represent the forward PDE mapping and enforce physics consistency by design. 
    Incorporating physics loss during surrogate training is significantly more robust than previous attempts. Our physics-informed variant performs on par with the 100\%-data variant with only 1\% of paired data.
    
    \item \textbf{Theoretical Justification.}
    We provide theoretical justification for the decoupled design of DDIS by analyzing failure modes of joint-embedding diffusion models.
    In particular, we show that under precise geometric conditions,
    (i) joint-embeddings suffer guidance attenuation,
    (ii) DAPS fails due to sparse-guidance collapse,
    and (iii) DDIS avoids these limitations.
    \item \textbf{Empirical Performance.}
    DDIS achieves state-of-the-art accuracy and runtime efficiency on three challenging inverse PDE problems under sparse supervision, improving $\ell_2$ error by 11\% and spectral error by 54\% on average; the advantage increases to 40\% under data scarcity. 

\end{itemize}

\vspace{-.5em}
\paragraph{Organization.}
\cref{sec:preliminary} introduces preliminaries.
\cref{sec:ddis} presents the proposed Decoupled Diffusion Inverse Solver (DDIS).
\cref{sec:theory} analyzes joint-embedding failures and advantages of decoupling.
\cref{sec:exp} reports benchmark experimental results.
Related work is deferred to \cref{sec:related}.
Design rationales, ablations, and supplementary experiments appear in \cref{sec:design,sec:ablation_study}.

%% file: figure1.tex
\begin{wrapfigure}{r}{0.48\textwidth}
\centering
\vspace{-1.25em} %
\begin{tikzpicture}[
  >=Stealth,
  line cap=round,
  line join=round,
  font=\small,
  scale=0.8,
  every node/.style={scale=0.72}
]

\tikzset{
  xt/.style={circle, draw=blue, thick, inner sep=1.3pt},
  gaussblue/.style={
    shading=radial,
    inner color=blue!25,
    outer color=blue!5,
    draw=black!50,
    thick
  },
  gaussred/.style={
    shading=radial,
    inner color=red!25,
    outer color=red!5,
    draw=black!50,
    thick
  }
}

\def\r{1.0}

\coordinate (d2) at (5.7,1.3);
\coordinate (d3) at (2.7,1.9);

\begin{scope}
  \clip (0,0) rectangle (7.5,3.5);
  \foreach \px in {0.0,0.10,...,7.5} {
    \foreach \py in {0.0,0.10,...,3.5} {
      \pgfmathsetmacro{\valA}{exp(-((\px-5.7)^2+(\py-1.3)^2)/(2*0.7^2))}
      \pgfmathsetmacro{\valB}{exp(-((\px-2.7)^2+(\py-1.9)^2)/(2*0.7^2))}
      \pgfmathsetmacro{\val}{\valA+\valB}
      \pgfmathsetmacro{\intensity}{min(50*\val,100)}
      \ifdim\intensity pt>4pt
        \fill[blue!\intensity, opacity=0.9] (\px,\py) circle (0.05);
      \fi
    }
  }
\end{scope}

\fill (d2) circle (1.5pt);
\fill (d3) circle (1.5pt);

\coordinate (xI) at (0.4,3.0);
\node[circle, draw=black, thick, inner sep=1.5pt] at (xI) {};
\node[red, above right=3pt, font=\bfseries, anchor=south west] at (xI) {\textbf{case (i)}};
\draw[->, thick, black, dashed] (xI) -- ++(1.4,0);
\draw[->, thick, black] (xI) -- ++(0,-0.4);

\coordinate (xII) at (2.0,1.4);
\node[circle, draw=black, thick, inner sep=1.5pt] at (xII) {};
\node[red, above left=3pt, font=\bfseries, anchor=south east] at (xII) {\textbf{case (ii)}};
\draw[->, thick, black, dashed] (xII) -- ++(1.4,0);
\draw[->, thick, black] (xII) -- ++(0,-0.4);

\coordinate (xIII) at (4.0,1.6);
\node[circle, draw=black, thick, inner sep=1.5pt] at (xIII) {};
\node[green!60!black, above=3pt, font=\bfseries, anchor=south] at (xIII) {\textbf{case (iii)}};
\draw[->, thick, black, dashed] (xIII) -- ++(1.4,0);
\draw[->, thick, black] (xIII) -- ++(0,-1.0);

\begin{scope}[shift={(6,2.7)}]
  \draw[black!40, rounded corners, fill=white]
    (-1.6,-0.05) rectangle (1.45,1.35);

  \node[circle, draw=black, thick, inner sep=1.5pt] at (-1.3,1.15) {};
  \node[anchor=west] at (-0.9,1.15) {current state $x_t$};

  \fill (-1.3,0.8) circle (1.5pt);
  \node[anchor=west] at (-0.9,0.8) {training examples};

  \draw[->, thick, black, densely dashed] (-1.45,0.45) -- ++(0.45,0);
  \node[anchor=west] at (-0.9,0.45) {update on $u$};

  \draw[->, thick, black] (-1.3,0.3) -- ++(0,-0.3);
  \node[anchor=west] at (-0.9,0.1) {update on $a$};
\end{scope}

\draw[->] (0,0) -- (7.4,0) node[right] {$u$};
\node[anchor=north east] at (7.6,0) {(solution field, observed)};
\draw[->] (0,0) -- (0,3.5) node[above] {$a$};
\node[anchor=south, rotate=90] at (0.0,1.75) {(coefficient field, unknown)};

\end{tikzpicture}
\vspace{-1em} %
\caption{
\small{
\textbf{Gradient guidance by joint-embedding models vanishes under scarce paired data.}
Blue blobs visualize regions supported by the learned joint model around individual training examples (black dots), and the circle marker denotes diffusion state $x_t$.
We consider three cases: when $x_t$ is far from all blobs \textcolor{red}{(i)}, near a single training sample \textcolor{red}{(ii)}, or close to multiple training samples \textcolor{green!60!black}{(iii)}.
In all cases, the update on $u$ (dashed arrows) remains valid; however, the update on $a$ (solid arrows) vanishes in (i) and (ii), and is nonzero only in (iii).
In high dimensions, case (iii) is rare, rendering coefficient-space guidance ineffective.
}
}
\label{fig:guidance-geometry}
\vspace{-1.25em}
\end{wrapfigure}

%% file: 2preliminary.tex
\vspace{-.25em}

We employ diffusion models~\citep{ho2020denoising,song2020denoising,song2020score} to learn a prior distribution $p(a)$ over the coefficient space.
The diffusion model defines forward and reverse stochastic processes that enable sampling from $p(a)$ by denoising from Gaussian noise.
We refer readers to \cref{appendix:diffusion_background} for a detailed exposition of diffusion models and the diffusion prior.

Given $a = a_0$, the observation model~\eqref{eqn:observation_model} specifies a likelihood 
$p(u_{\mathrm{obs}}  \mid  a_0)$.
Combining with the diffusion prior, our goal is to sample from the posterior distribution
\begin{align}
\label{def:posterior}
p(a_0  \mid  u_{\mathrm{obs}})
~\propto~
p(a_0)\,p(u_{\mathrm{obs}}  \mid  a_0),
\end{align}
where $p(a_0)$ is the prior and $p(u_{\mathrm{obs}} \mid  a_0)$ the likelihood.
Posterior sampling requires modifying the prior score $\nabla_{a_t}\log p(a_t)$ in the reverse process \eqref{eqn:std_reverse_process} to the posterior score $\nabla_{a_t}\log p(a_t \mid  u_{\mathrm{obs}})$.
Yet, direct posterior score computation is intractable due to the dependence on the unknown 
$p(a_0  \mid  a_t)$. 
Thus, practical posterior samplers rely on approximations or
auxiliary latent transitions~\citep{kawar2021snips,kawar2022denoising,dou2024diffusion,wu2024principled}.
Before introducing our approach, we briefly review two representative
methods: DPS~\cite{chung2022diffusion} and DAPS~\cite{zhang2025improving}.

\textbf{Diffusion Posterior Sampling (DPS)}
DPS decomposes the posterior score $\nabla_{a_t} \log p(a_t  \mid  u_\mathrm{obs})$ into the sum of the unconditional score and a likelihood gradient.
Since the likelihood $p(u_\mathrm{obs}  \mid  a_t)$ requires marginalizing over the unknown conditional $p(a_0  \mid  a_t)$, DPS approximates it by a mean estimate $\mathbb{E}[a_0 \mid a_t]$.
The approximation introduces Jensen gap and results in over-smoothed reconstructions, as shown in \cref{remark:DPS_jensen,fig:ddis_comparison}.
A full derivation of DPS is provided in \cref{appendix:dps}.

\textbf{Decoupled Annealing Posterior Sampling (DAPS)}
Posterior sampling is naturally formulated as a correction on the noisy latent variable $a_t$, yet the likelihood term $p(u_{\mathrm{obs}}\mid a_0)$ is defined on the clean variable $a_0$.
DPS forces the correction to act on $a_t$, facing the intractability issue of $p(a_0\mid a_t)$.
DAPS instead separates these roles for updating $a_t$ to $a_{t-1}$: 
it first estimates the clean variable $a_0$,
then applies the likelihood-based correction to this clean estimate, 
and finally re-noises the corrected estimate to $a_{t-1}$.
DAPS avoids the intractability issue in DPS and improves reconstruction quality in many inverse problems \cite{zheng2025inversebench}.
However, under sparse observations and joint-embeddings, the likelihood gradients act only locally and offer ineffective update (\cref{remark:daps_sparse}).
We provide the details of its sparse-guidance failure in \cref{sec:daps}.

%% file: 3ddis.tex
\begin{table*}[t]
\centering
\setlength{\tabcolsep}{6pt}
\renewcommand{\arraystretch}{1.1}
\caption{Comparison between DDIS and joint-embedding diffusion methods.}
\label{tab:ddis_vs_joint}
\vspace{-0.5em}
\begin{tabularx}{\textwidth}{L Y Y}
\toprule
 & \textbf{DDIS} & \textbf{Joint-Embedding Models} \\
\midrule
\multicolumn{3}{l}{\textbf{Training Stage}} \\

Architecture
& $G_{\theta}: \mathcal{Z}\rightarrow \mathcal{A}$ and
  $L_{\phi}: \mathcal{A}\rightarrow \mathcal{U}$
& $H_{\psi}: \mathcal{Z}\rightarrow \mathcal{A}\times \mathcal{U}$\\

Objective
& model $p(a)$ and learn $L(\cdot)$
& model $p(a,u)$ \\

Data Utilization
& prior:\ $n_u+n_p$;\ \ operator:\ $n_p$
& $n_p$ only \\

Complexity$^{\ddagger}$
& prior:\ $d_P$;\ \ operator:\ $d_L$
& $d_J \geq \max(d_P,d_L)$ \\

Generalization Bound$^{\ddagger}$
& \small$\displaystyle \widetilde{\mathcal{O}}\!\left(
   \sqrt{\frac{d_L}{n_p}}
   +
   \sqrt{\frac{d_P}{n_p+n_u}}
   \right)$
& \small$\displaystyle \widetilde{\mathcal{O}}\!\left(
   \sqrt{\frac{d_J}{n_p}}
   \right)$ \\

Physics
& explicit via $L_{\phi}(a)$
& statistical correlations \\

\midrule
\multicolumn{3}{l}{\textbf{Posterior Sampling Stage}} \\

Target
& $p(a \mid u_{\mathrm{obs}})$
& $p(a \mid u_{\mathrm{obs}})$ via marginalizing out $u$ \\

Sampler
& DAPS~\cite{zhang2025improving}
& DPS~\cite{chung2022diffusion} \\

Likelihood
& $\log p(u_{\mathrm{obs}}\mid \hat a_{0|u})$
& $\log p(u_{\mathrm{obs}}\mid \mathbb{E}[a_0\mid a_{t+1}])$ \\

Bias
& asymptotically unbiased$^{\dagger}$
& Jensen-gap bias\\

Spectral Feature &
preserves high-frequency details &
degraded at high frequencies \\

Inference Cost
& low, diffusion forward + FNO backward
& high, diffusion forward and backward \\

\bottomrule
\multicolumn{3}{@{} p{\textwidth} @{}}{\footnotesize $^\dagger$ Asymptotic with infinite Langevin steps and paired data; DAPS also uses Gaussian approximation in practice. $^\ddagger$ Detailed derivation in \cref{sec:data-eff}.}
\end{tabularx}
\vspace{-2em}
\end{table*}

DDIS separates the roles of priors and physics-induced likelihoods in a Bayesian inverse formulation. Instead of forcing a diffusion model to represent coefficient-solution correlations directly from paired data, DDIS learns these components independently in their native domains and integrates them only during sampling.
DDIS consists of:
(i) diffusion prior learning in coefficient space,
(ii) neural operator learning of the forward physics, and
(iii) DAPS-based physics-aware posterior sampling (at inference).
\cref{fig:training_components} compares (i)+(ii) to previous joint-learning methods, and \cref{fig:pipeline_inference} visualizes (iii).
Moreover, each component offers immediate benefits:
(i) scalable prior learning using abundant prior-only data,
(ii) explicit physics consistency, and
(iii) dense guidance in posterior sampling, even under sparse observations.
We elaborate on each component below.

\subsection{Diffusion Prior in Coefficient Space}
\label{sec:diffusion_prior}
We train a diffusion prior $p(a)$ over PDE coefficients using a score-based diffusion model
$\boldsymbol{s}_\theta(a_t, t)$ based on noisy $a_t$ samples.
Training follows standard noise-prediction loss without paired supervision:
\begin{align}
\mathcal{L}_\text{prior} = \mathbb{E}_{a,\epsilon,t} \bigl[ \| \boldsymbol{s}_\theta(a_t,t) - \epsilon \|_2^2 \bigr].
\end{align}

\vspace{-.5em}
\subsection{Neural Operator as a Physics Surrogate} 
\label{sec:surrogate_learning}

Given scarce paired $(a,u)$ samples, we learn a surrogate of the forward map $L:\mathcal{A}\to\mathcal{U}$ using a neural operator $L_\phi$. The model is trained via supervised regression with an optional physics regularizer
\vspace{-1em}
\begin{align}
\label{eqn:no_loss}
\mathcal{L}_\text{operator} = \mathbb{E}_{(a,u)} \bigl[ \|L_\phi(a) - u\|_2^2 + \overbrace{\lambda\|\text{Res}(L_\phi(a),a)\|_2^2}^{\text{PINO regularization}}
\bigr],
\end{align}
where $\text{Res}(\cdot)$ evaluates PDE residual, weighted by $\lambda$.

The forward PDE problem $L:\mathcal{A}\to\mathcal{U}$ is well-posed, making it a natural target for neural operator learning~\citep{kovachki2023neural}. 
Unlike joint-embedding diffusion models that must reconstruct the full coefficient-solution correlations, the neural operator is trained only to represent the forward physics. 
This reduces reliance on paired data and shifts learning complexity into the coefficient prior. 
Operating directly in function space, neural operators are resolution-invariant and can exploit low- or multi-resolution paired data while supporting high-resolution inference. 
Moreover, the optional PDE-residual regularization term in \eqref{eqn:no_loss} further reduces the need for paired supervision, potentially enabling purely physics-based training.
Collectively, these characteristics enable DDIS to maximize data utilization and achieve data efficiency in multiple regimes.

\subsection{Physics-Aware Posterior Sampling via DAPS}
\label{sec:ddis_daps}

\begin{figure*}[t]
    \centering
    \includegraphics[width=0.95\textwidth]{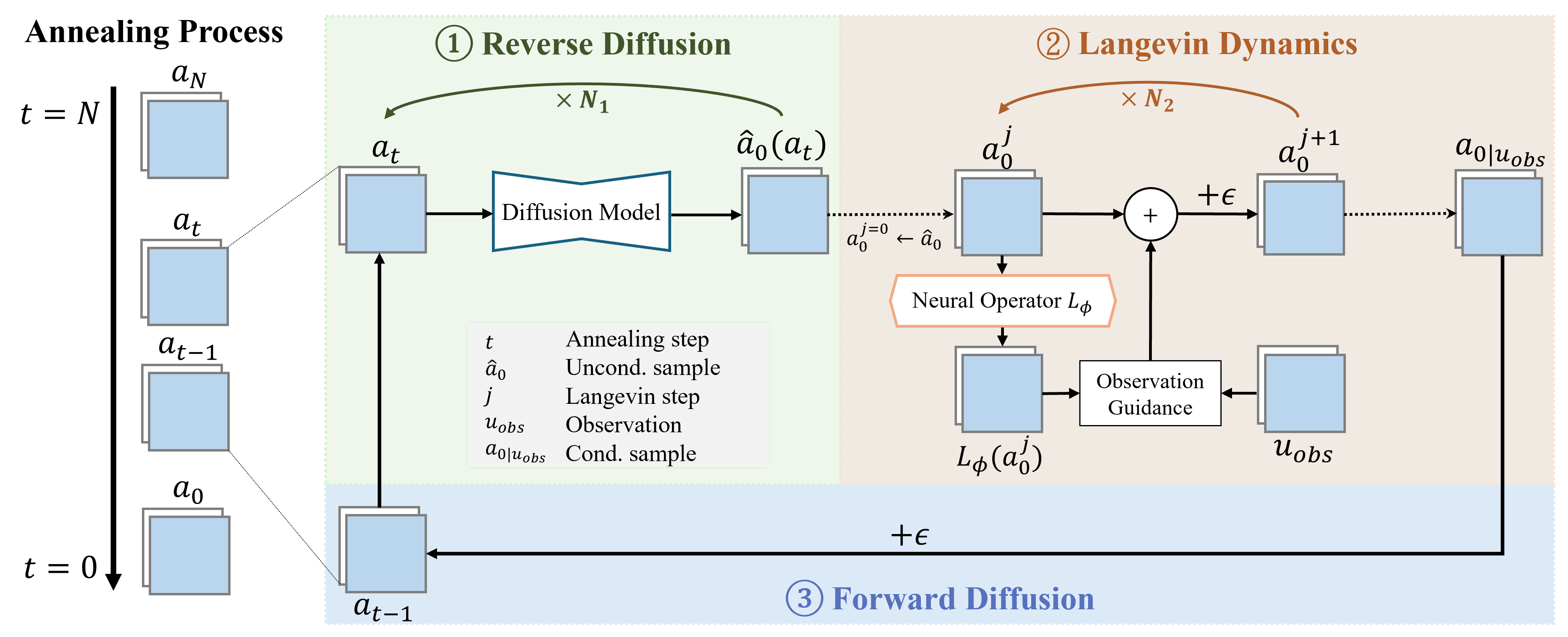}
    \caption{DDIS sampling process. 
    Each annealing step alternates between \textbf{\ding{172} reverse diffusion}, which estimates an unconditional sample $\hat{a}_0(a_t)$ by diffusion model;
    \textbf{\ding{173} Langevin dynamics} guided by the neural operator $L_\phi$ to enforce physics consistency; 
    and \textbf{\ding{174} forward diffusion}, which re-injects noise for the next annealing level. 
    The process iteratively refines the posterior sample $a_0 \mid u_{\mathrm{obs}}$.}
    \label{fig:pipeline_inference}
    \vspace{-2em}
\end{figure*}

Given sparse observations $u_\mathrm{obs} = M \odot L(a) + \epsilon$, DDIS performs posterior inference using DAPS (\cref{sec:daps}).
DAPS operates on the observation-conditioned time marginals
$p_t(a_t \mid u_\mathrm{obs})$ along the reverse diffusion process.
As the noise level $t$ decreases, these marginals anneal toward the desired posterior $p(a_0 \mid u_\mathrm{obs})$ at $t=0$.

At each diffusion timestep, DAPS combines the diffusion prior with likelihood-based guidance to approximate sampling from $p_t(a_t \mid u_\mathrm{obs})$.
In DDIS, this guidance is instantiated through the neural operator surrogate $L_\phi$ and the denoised estimate $\hat a_0(a_t)$ induced by the diffusion prior.
Concretely, given the current latent $a_t$, DDIS applies Langevin MCMC loop~\eqref{eqn:langevin_daps}:
\begin{align}
\label{eqn:langevin_surrogate}
\vspace{-.5em}
\small
a_{0}^{(j+1)} =& a_{0}^{(j)} - \eta\nabla_{a_{0}^{(j)}} \frac{\|a_{0}^{(j)} - \hat a_0(a_t)\|^2}{r_t^2} \\ 
&- \eta\nabla_{a_{0}^{(j)}} \frac{\| M \odot L(a_{0}^{(j)}) - u_\mathrm{obs} \|^2}{2\beta_y^2} + \sqrt{2\eta}\epsilon_{j}, \notag
\end{align}
where $\epsilon_j \sim \mathcal{N}(0, I)$, $\eta>0$ is step size, $r_t$ and $\beta_y$ are prior and likelihood scale.
The first term enforces consistency with the diffusion prior, while the second injects physics-aware guidance via the neural operator. 
After $N$ Langevin steps, DAPS samples the next latent $a_{t-1} \sim p_{t-1}(a_{t-1} \mid u_\mathrm{obs})$ by re-noising $a_0^N$ according to \eqref{eqn:renoise_daps}.
Iterating this procedure yields a trajectory towards the desired posterior.
\cref{alg:ddis_daps} shows the complete sampling pipeline.

The neural operator plays a critical role in enabling effective DAPS-based posterior sampling under sparse observations.
While prior joint-embedding models suffer from sparse-guidance failure (\cref{remark:daps_sparse}), the global receptive field of $L_\phi$ propagates observation errors across the entire coefficient space, ensuring dense guidance.
As a result, DDIS produces sharp, effective, and physics-aware samples without retraining the diffusion prior for new observation patterns.
\cref{sec:design} further discusses design rationale and choices.

%% file: 4theory.tex
This section serves two purposes: 
(i) to formalize the guidance attenuation phenomenon that motivates decoupled designs, and 
(ii) to provide a comparison between joint and decoupled formulations, explaining the empirical results in \cref{sec:exp}.
We begin by analyzing guidance behavior under data scarcity for joint-embedding and decoupled models (\cref{sec:likelihood-attenuation}).
Particularly, the guidance in joint models provably attenuates (\cref{sec:joint-attenuation-short}), whereas our DDIS preserves robust guidance (\cref{sec:ddis_guidance}).
We further analyze a naive application of DAPS in joint models and prove its sparse-guidance failure under sparse observations (\cref{sec:daps_failure}).
We also show that the decoupled design offers a tighter generalization bound than joint-embedding in \cref{sec:data-eff}.

\subsection{Guidance Behaviors under Data Scarcity}
\label{sec:likelihood-attenuation}

Assume observations are generated by a physical operator with additive homoscedastic noise:
\begin{align}
\label{eq:physical_obs_model}
u_{\mathrm{obs}} = L^*(a) + \varepsilon,
\qquad
\varepsilon \sim \mathcal N(0,\sigma_{\mathrm{obs}}^2 I),
\end{align}
where $L^*:\mathcal A \to \mathcal U$ is the true forward operator and $\sigma_{\mathrm{obs}}^2$ is a fixed
sensor noise variance.
Thus, the ideal inference-time guidance is given by the likelihood score
\vspace{-0.25em}
\begin{align}
\small
\nabla_a \log p(u_{\mathrm{obs}} | a)
\!=\!
\frac{1}{\sigma_{\mathrm{obs}}^2}
\big(\nabla_a L^*(a)\big)\!^\top
\big(u_{\mathrm{obs}} \!-\! L^*(a)\big),
\label{eqn:true-likelihood-score}
\end{align}
\vspace{-0.25em}
which defines a non-zero gradient on the function space $\mathcal A$,
with fixed noise scale $1/\sigma_{\mathrm{obs}}^2$.

\subsubsection{Guidance Attenuation in Joint-Embedding Models}
\label{sec:joint-attenuation-short}

We first summarize why guidance necessarily attenuates in joint-embedding diffusion models under data scarcity; full derivations are deferred to \cref{sec:joint-attenuation}.

Joint-embedding methods learn a diffusion prior over the joint variable
$x=(a,u)$ and impose observations with
\vspace{-1em}
\begin{align*}
u_{\mathrm{obs}} = M x_0 + \varepsilon, \qquad 
\varepsilon \sim \mathcal N(0,\tilde\sigma_{\mathrm{obs}}^2 I).
\end{align*}
For Diffusion Posterior Sampling (DPS), the likelihood gradient with respect
to the noisy state $x_t$ is approximated using a denoised estimate
$\hat x_0(x_t,t)=\mathbb E[x_0\mid x_t]$,
\begin{align}
\nabla_{x_t}\log p(u_{\mathrm{obs}}\mid x_t)
\approx
\frac{1}{\tilde\sigma_{\mathrm{obs}}^2}
J_{\hat x_0}(x_t,t)^\top M^\top
\big(u_{\mathrm{obs}}-M\hat x_0(x_t,t)\big).
\end{align}
We define the \emph{scale-free guidance} (\cref{def:joint-guidance}):
\begin{align}
g(x_t,t)
\;\coloneqq\;
J_{\hat x_0}(x_t,t)^\top M^\top r(x_t,t),
\quad
r=u_{\mathrm{obs}}-M\hat x_0.
\end{align}

Using the Tweedie estimator for score-based diffusion,
$\hat x_0(x_t,t)=\alpha_t^{-1}(x_t+\sigma_t^2 s_\theta(x_t,t))$,
the guidance admits the block decomposition (\cref{lemma:block-guidance}):
\begin{align}
\small
g(x_t,t)
\propto
\begin{pmatrix}
\sigma_t^2\,\partial_{a_t}s_{\theta,u}(x_t,t)^\top\\[2pt]
I+\sigma_t^2\,\partial_{u_t}s_{\theta,u}(x_t,t)^\top
\end{pmatrix}
r(x_t,t).
\end{align}
The coefficient update $g_a$ depends on the cross-partial
$\partial_{a_t}s_{\theta,u}$; thus, observations can influence $a$ only through learned $a$-$u$ coupling in the joint score model.

Under data scarcity, the learned score is well approximated by an empirical
Gaussian mixture (\cref{lemma:empirical-score}).
Let $\varphi_n(x)$ denote the $n$-th component density and
$w_n(x)=\varphi_n(x)/\sum_j\varphi_j(x)$ its responsibility.
Since a single isotropic Gaussian cannot couple $a$ and $u$ (\cref{rem:responsibility-source}), all cross-component
guidance arises solely through the responsibility gradients $\partial_a w_n$
(\cref{lemma:cross-block-from-w,lemma:single-gaussian-cross}).
We therefore analyze when these gradients vanish or remain nontrivial.
\begin{theorem}[Local dominance $\Rightarrow$ vanishing responsibility gradients (informal \cref{thm:local-dominance-gradw})]
\label{thm:local-dominance-gradw-informal}
Fix $(x,t)$.
If there exists $k\in[N]$ such that
$
\small
\varphi\big(x-\alpha_t x_0^{(k)}\big)
\gg
\varphi\big(x-\alpha_t x_0^{(j)}\big),
\; \forall j\neq k,
$
then all responsibility gradients vanish:
$
\small
\partial_a w_n(x,t)\approx 0,
\; \forall n\in[N].
$
\end{theorem}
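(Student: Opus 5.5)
We would prove this by computing the responsibility gradient explicitly and bounding it by the mass of the non-dominant components. Under the empirical Gaussian mixture of \cref{lemma:empirical-score}, with isotropic components $\varphi_n(x)=\varphi(x-\alpha_t x_0^{(n)})\propto\exp\!\big(-\|x-\alpha_t x_0^{(n)}\|^2/(2\sigma_t^2)\big)$, the gradient of each component is $\nabla_x\varphi_n=-\sigma_t^{-2}(x-\alpha_t x_0^{(n)})\varphi_n$. The quotient rule applied to $w_n=\varphi_n/\sum_j\varphi_j$ then gives the softmax identity
\begin{align*}
\nabla_x w_n(x,t) = w_n(x,t)\Big(\nabla_x\log\varphi_n - \sum_j w_j\nabla_x\log\varphi_j\Big)
= \frac{\alpha_t}{\sigma_t^2}\, w_n\Big(x_0^{(n)}-\sum_j w_j x_0^{(j)}\Big).
\end{align*}
Restricting to the $a$-block gives
\begin{align*}
\partial_a w_n=\frac{\alpha_t}{\sigma_t^2}\,w_n\big(a_0^{(n)}-\bar a\big),
\qquad \bar a=\sum_j w_j a_0^{(j)}.
\end{align*}

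Next we formalize the hypothesis "$\gg$". Set
\begin{align*}
\delta \coloneqq \sum_{j\neq k}\varphi_j/\varphi_k ,
\end{align*}
and read the hypothesis as $\delta\ll 1$. This yields the weight bounds
\begin{align*}
1-w_k=\frac{\delta}{1+\delta}\le\delta,
\qquad
w_j\le \varphi_j/\varphi_k\le\delta \ \text{ for } j\neq k.
\end{align*}

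We then treat the two cases separately.
\begin{itemize}
\item For $n=k$: $a_0^{(k)}-\bar a=\sum_{j\neq k}w_j\big(a_0^{(k)}-a_0^{(j)}\big)$, so $\|\partial_a w_k\|\le \frac{\alpha_t}{\sigma_t^2}\,\delta\, D$, where $D=\max_{i,j}\|a_0^{(i)}-a_0^{(j)}\|$.
\item For $n\neq k$: $\|\partial_a w_n\|\le \frac{\alpha_t}{\sigma_t^2}\, w_n D\le\frac{\alpha_t}{\sigma_t^2}\,\delta D$.
\end{itemize}
Hence $\max_n\|\partial_a w_n(x,t)\|\le \alpha_t\sigma_t^{-2}D\,\delta$, which tends to $0$ as $\delta\to0$. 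This is the precise meaning of "$\approx 0$".

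The main obstacle is the prefactor $\alpha_t\sigma_t^{-2}D$, which blows up as $\sigma_t\to0$, so "$\approx 0$" must be quantified relative to it. We plan to handle this in two ways. First, we state the formal version (\cref{thm:local-dominance-gradw}) as the explicit bound above. Second, we note that under local dominance $\delta$ itself decays like $\exp(-\Delta/(2\sigma_t^2))$ with a gap $\Delta>0$, and this exponential decay beats the polynomial factor $\sigma_t^{-2}$. If the formal statement requires a tighter, per-component bound, we can instead use $w_j\le\varphi_j/\varphi_k$ componentwise, i.e., $\|\partial_a w_n\|\le \frac{\alpha_t}{\sigma_t^2}\,\|x_0^{(n)}-x_0^{(k)}\|\,\varphi_n/\varphi_k$ for $n\neq k$. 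This refinement only changes constants.
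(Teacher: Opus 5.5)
Your proof is correct and follows the paper's argument: you use the same softmax identity $\partial_a w_n=w_n(\partial_a\log\varphi_n-\sum_j w_j\partial_a\log\varphi_j)$, the same off-mass bound $1-w_k\le\delta/(1+\delta)$, and the same split into $n=k$ (rewriting the bracket as a sum over $j\neq k$) and $n\neq k$ (using $w_n\le\delta$). The one difference is that you insert the explicit isotropic-Gaussian log-gradients, so the $x$-dependence cancels and the paper's constant $2G(x,t)$, which grows with the distance of $x$ from the centers, becomes the $x$-uniform $\alpha_t D/\sigma_t^2$; this is never larger than $2G(x,t)$, but your optional per-component refinement for $n\neq k$ omits the term $\|a_0^{(k)}-\bar a\|\le\delta D$ and should read $\|\partial_a w_n\|\le\alpha_t\sigma_t^{-2}(\varphi_n/\varphi_k)\big(\|a_0^{(n)}-a_0^{(k)}\|+\delta D\big)$.
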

This yields a sufficient condition for guidance attenuation.
\begin{corollary}[Locality implies attenuation (informal \cref{cor:local-dominance-attenuation})]
\label{cor:local-dominance-attenuation-informal}
If $x_t$ lies close to a single mixture center $\alpha_t x_0^{(k)}$, the $a$-component of the scale-free guidance attenuates by
$g_a(x_t,t)\approx 0$.
\end{corollary}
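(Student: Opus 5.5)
We prove the corollary by combining the block decomposition of \cref{lemma:block-guidance} with the empirical-mixture structure of the score. Local dominance forces every cross term in the $a$-block of the guidance to vanish. Concretely, the $a$-component of the scale-free guidance is
\[
g_a(x_t,t)\propto \sigma_t^2\,\partial_{a_t}s_{\theta,u}(x_t,t)^\top r(x_t,t),
\]
so it suffices to show $\partial_{a_t}s_{\theta,u}(x_t,t)\approx 0$ while $r$ stays bounded. Since $r = u_{\mathrm{obs}}-M\hat x_0$, it is bounded whenever $\hat x_0$ is; near a dominant center, $\hat x_0\approx x_0^{(k)}$.

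\textbf{Step 1: write the score as an empirical mixture.} Invoke \cref{lemma:empirical-score} to replace $s_\theta$ by the empirical Gaussian-mixture score
\[
s(x,t)=\sum_n w_n(x)\,\frac{\alpha_t x_0^{(n)}-x}{\sigma_t^2}.
\]
Differentiating the $u$-block with respect to $a$ splits the result into two kinds of terms:
\[
\partial_a s_u=\sum_n w_n\,\partial_a\!\Big(\tfrac{\alpha_t u_0^{(n)}-u}{\sigma_t^2}\Big)+\sum_n \tfrac{\alpha_t u_0^{(n)}-u}{\sigma_t^2}\,(\partial_a w_n)^\top .
\]
The first sum is identically zero: each component is isotropic, so its $u$-mean is independent of $a$. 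This is exactly \cref{lemma:single-gaussian-cross} and \cref{lemma:cross-block-from-w}. Hence $\partial_a s_u=\sigma_t^{-2}\sum_n (\alpha_t u_0^{(n)}-u)(\partial_a w_n)^\top$.

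\textbf{Step 2: apply the local-dominance theorem.} If $x_t$ is close to $\alpha_t x_0^{(k)}$ and the other centers are separated, then $\varphi(x_t-\alpha_t x_0^{(k)})\gg\varphi(x_t-\alpha_t x_0^{(j)})$ for all $j\neq k$. \Cref{thm:local-dominance-gradw-informal} then gives $\partial_a w_n\approx 0$ for all $n$. To make this quantitative, I would use the explicit formula
\[
\partial_a w_n = w_n\Big(\partial_a\log\varphi_n-\sum_j w_j\,\partial_a\log\varphi_j\Big),
\qquad
\partial_a\log\varphi_n=\frac{\alpha_t a_0^{(n)}-a}{\sigma_t^2}.
\]
With $w_k=1-\delta$, where $\delta=\sum_{j\ne k}w_j\ll1$, each $\partial_a w_n$ is $O(\delta)$ times a difference of centers divided by $\sigma_t^2$.

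\textbf{Step 3: bound the attenuated guidance.} Multiplying through by $\sigma_t^2$ gives
\[
\|g_a\|\lesssim \delta\cdot \max_{n,j}\frac{\alpha_t^2\|u_0^{(n)}-u\|\,\|a_0^{(n)}-a_0^{(j)}\|}{\sigma_t^2}\,\|r\|.
\]
This tends to zero as $\delta\to 0$ at fixed $t$. Moreover, $\delta$ decays like $\exp(-\Delta/(2\sigma_t^2))$, where $\Delta$ is the gap in squared distances to the centers, so the exponential beats the polynomial factor $\sigma_t^{-2}$.

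\textbf{Main obstacle.} The hard part is making ``$\approx$'' uniform. The prefactors $\|u_0^{(n)}-u\|/\sigma_t^2$ can be large, so I will state the result as $\|g_a\|\le C(t,\mathrm{data})\,\delta(x_t)$. Here $C$ is explicit from the mixture geometry, and the condition that $x_t$ lies close to a single center is interpreted as $\delta$ being small relative to $C^{-1}$.
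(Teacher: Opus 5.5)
Your proposal is correct and follows essentially the same route as the paper's proof. Both use the block decomposition of $g$, reduce $\partial_a s_{N,u}$ to a sum over responsibility gradients via the empirical mixture score, and then invoke the local-dominance bound $\sum_n\|\partial_a w_n\|=O(\eta)$ from \cref{thm:local-dominance-gradw}. Your bound through the $x$-independent center differences $\alpha_t(a_0^{(n)}-a_0^{(j)})/\sigma_t^2$ (instead of the paper's $G(x,t)$), and your control of $r$ via $\hat x_0=\sum_n w_n x_0^{(n)}$, are mild sharpenings rather than a different approach.
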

The first result shows that, under the locality condition, responsibility gradients sufficiently vanish.
We now state the complementary: nontrivial responsibility gradients can occur only
when $x$ lies in an overlap region of the mixture (informal \cref{thm:dw-implies-overlap}).
\begin{theorem}[Non-vanishing responsibility gradients require overlap]
\label{thm:dw-implies-overlap-informal}
Fix $(x,t)$.
If
$
\big\|\partial_a w_n(x,t)\big\|>0
\; \text{for some } n\in[N],
$
then there exist $p\neq q$ such that
$
\small
\Big|
\|x-\alpha_t x_0^{(p)}\|_2^2
-
\|x-\alpha_t x_0^{(q)}\|_2^2
\Big|
\lesssim
\sigma^2(t).
$
\end{theorem}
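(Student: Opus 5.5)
We argue by contraposition against the informal \cref{thm:local-dominance-gradw-informal}, and make its ``$\gg$'' quantitative through an explicit formula for the responsibility gradients. Work with the empirical Gaussian mixture of \cref{lemma:empirical-score}, with isotropic components $\varphi_n(x)\propto\exp\big(-\|x-\alpha_t x_0^{(n)}\|_2^2/(2\sigma^2(t))\big)$. Write $d_n(x):=\|x-\alpha_t x_0^{(n)}\|_2^2$ and $\mu_n:=\alpha_t x_0^{(n)}$.

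\textbf{Step 1 (gradient of a softmax).} Since $w_n=\mathrm{softmax}_n\big(-d_n/(2\sigma^2)\big)$, the standard identity gives
\begin{align*}
\nabla_x w_n \;=\; \frac{w_n}{\sigma^2}\Big(\mu_n-\sum_j w_j\mu_j\Big) \;=\; \frac{1}{\sigma^2}\sum_{j} w_n w_j\,(\mu_n-\mu_j).
\end{align*}
Taking the $a$-block, $\partial_a w_n=\sigma^{-2}\sum_{j\neq n} w_n w_j\,(\mu_n-\mu_j)_a$.

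\textbf{Step 2 (nonzero gradient forces two non-negligible responsibilities).} If $\|\partial_a w_n\|>0$, then some term with $j\neq n$ must have $w_nw_j>0$. In the exact mixture every weight is positive, so the statement has to be read quantitatively, with $\|\partial_a w_n\|>0$ meaning ``exceeding a threshold $\delta$''. I will therefore bound the gradient by $\sigma^{-2} D\sum_{j\neq n}w_nw_j$, where $D:=\max_{p,q}\|\mu_p-\mu_q\|$. A gradient of size at least $\delta$ then yields a pair $p=n$, $q=j$ with
\begin{align*}
w_pw_q\;\ge\; c:=\frac{\delta\sigma^2}{(N-1)D}.
\end{align*}

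\textbf{Step 3 (ratio of weights gives the distance gap).} We have $w_p/w_q=\exp\big((d_q-d_p)/(2\sigma^2)\big)$. Because $w_p,w_q\le 1$ and $w_pw_q\ge c$, each weight is at least $c$, so $w_p/w_q\in[c,1/c]$. Hence
\begin{align*}
|d_p-d_q|\;\le\; 2\sigma^2\log(1/c),
\end{align*}
which is the claimed bound $\lesssim\sigma^2(t)$ with constant $2\log(1/c)$.

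\textbf{Main obstacle.} The hard part is making ``$>0$'' and ``$\lesssim$'' honest. The constant $\log(1/c)$ depends on $\delta$, $N$, $D$ and $\sigma$, so the result is only meaningful up to logarithmic factors. I would state the formal version with an explicit threshold $\delta$ and absorb $\log\big((N-1)D/(\delta\sigma^2)\big)$ into $\lesssim$, mirroring how \cref{thm:local-dominance-gradw-informal} quantifies dominance by a ratio. This also matches the contrapositive reading: if every pairwise gap satisfies $|d_p-d_q|\gg\sigma^2$ up to log factors, then one component dominates and all $\partial_a w_n\approx0$.
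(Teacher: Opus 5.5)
Your proof is correct and follows the same skeleton as the paper's formal version (\cref{thm:dw-implies-overlap}). Both express $\partial_a w_n$ through the responsibility identity, turn a threshold $\delta$ into lower bounds on two responsibilities, and convert their ratio into a squared-distance gap via the Gaussian log-ratio. The bookkeeping differs, as follows.

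\textbf{The paper's route.} It bounds the bracket in the identity by $2G(x,t)$, where $G=\max_j\|\partial_a\log\varphi_j\|=\max_j\|(x-\alpha_t x_0^{(j)})_a\|/\sigma^2$. It takes $p$ to be the argmax responsibility and derives $1-w_p\ge\delta/(2G)$. Pigeonhole then gives $q\neq p$ with $w_q\ge\tau=\delta/(2G(N-1))$, and the bound ends as $2\sigma^2\log((1-\tau)/\tau)$.

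\textbf{Your route.} You use the exact softmax gradient, in which $x$ cancels from $\mu_n-\mu_j$. So your constant $\log((N-1)D/(\delta\sigma^2))$ is uniform in $x$, whereas the paper's constant depends on $x$ through $G$. You also read the pair $(n,j)$ directly off $w_nw_j\ge c$, instead of forcing the dominant index into the pair. The paper's form still follows from yours: both $w_n,w_j\ge c$, one of $n,j$ differs from $p$, and $w_p$ is maximal.

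\textbf{Two slips your route avoids.} Working with the offending index $n$ sidesteps two small slips in the written proof. First, it applies the Case-A bound to $\partial_a w_p$ although the hypothesis concerns $\partial_a w_n$. This is repairable via $\|\partial_a w_n\|\le 2Gw_n\le 2G(1-w_p)$ for $n\neq p$. Second, it asserts $w_p\ge 1-w_p$, which need not hold; $w_p\ge w_q\ge\tau$ is what is actually needed.

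In both versions $\delta$, $N$ and $\sigma^2$ sit inside the logarithm. So your reading of ``$>0$'' as a threshold, and of ``$\lesssim$'' as holding up to a log factor, is exactly how the formal statement is quantified. The only cosmetic issue is that you announce a contraposition but then give a direct proof.
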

We then have a necessary condition for non-zero guidance.
\begin{corollary}[Non-vanishing guidance requires overlap (informal \cref{cor:nonvanishing-requires-overlap})]
\label{cor:nonvanishing-requires-overlap-informal}
If the $a$-component of the scale-free guidance is nonzero $\|g_a(x_t,t)\|>0$,
then $x_t$ must lie in an overlap region of the mixture.
\end{corollary}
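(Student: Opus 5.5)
The plan is to prove the contrapositive through the block decomposition of \cref{lemma:block-guidance}. Suppose $x_t$ is not in an overlap region, i.e.\ for every pair $p\neq q$ the gap $\big|\|x_t-\alpha_t x_0^{(p)}\|^2-\|x_t-\alpha_t x_0^{(q)}\|^2\big|$ exceeds a large multiple of $\sigma^2(t)$. We want to conclude that $g_a(x_t,t)=0$ up to the approximation level used throughout this section.

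By the block decomposition,
\[
g_a(x_t,t)\propto \sigma_t^2\,\partial_{a_t}s_{\theta,u}(x_t,t)^\top r(x_t,t).
\]
So $\|g_a\|>0$ forces $\partial_{a_t}s_{\theta,u}(x_t,t)\neq 0$. Next we invoke \cref{lemma:empirical-score} to replace $s_\theta$ by the empirical Gaussian-mixture score
\[
s(x,t)=\sum_n w_n(x)\,\frac{\alpha_t x_0^{(n)}-x}{\sigma^2(t)}.
\]
Differentiating its $u$-block with respect to $a$ gives two kinds of terms. The first is $\sum_n w_n\,\partial_a\big[(\alpha_t u_0^{(n)}-u)/\sigma^2\big]$, which is zero, since a single isotropic Gaussian has no $a$–$u$ cross term (\cref{lemma:single-gaussian-cross}). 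The second is $\sum_n (\alpha_t u_0^{(n)}-u)\,\partial_a w_n^\top/\sigma^2$. Therefore, by \cref{lemma:cross-block-from-w}, a nonzero cross-partial requires $\partial_a w_n(x_t,t)\neq 0$ for some $n$. This reduces the corollary to the informal \cref{thm:dw-implies-overlap-informal}, which then gives the existence of $p\neq q$ with squared-distance gap $\lesssim\sigma^2(t)$. That is exactly the overlap condition.

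To keep the argument honest about the quantitative version, I would make the responsibility-gradient bound explicit:
\[
\partial_a w_n = w_n\sum_j w_j\,\frac{\alpha_t(a_0^{(n)}-a_0^{(j)})}{\sigma^2}.
\]
This is bounded by $\frac{\alpha_t D}{\sigma^2}\,w_n(1-w_n)$, where $D$ is the diameter of the training set in $a$-space. Outside overlap, one weight $w_k$ is within $e^{-\Delta/(2\sigma^2)}$ of $1$, where $\Delta$ is the minimal squared-distance gap. Hence $\sum_n w_n(1-w_n)\le 2e^{-\Delta/(2\sigma^2)}$. Combined with bounded residual $r$ and bounded $\|\alpha_t u_0^{(n)}-u\|$, this gives $\|g_a\|\lesssim e^{-\Delta/(2\sigma^2)}$ times polynomial factors. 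A non-negligible $\|g_a\|$ thus forces $\Delta\lesssim\sigma^2(t)$, up to logarithmic factors.

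The main obstacle is the mismatch between the literal hypothesis $\|g_a\|>0$ and the conclusion. For an exact Gaussian mixture the weights are never exactly one-hot, so $g_a$ is strictly positive almost everywhere. The statement is only meaningful in the quantitative form: $\|g_a\|\ge\delta$ implies $\Delta\le C\sigma^2\log(\mathrm{poly}/\delta)$. I would therefore state and prove that version, and read the informal "$>0$" and "$\lesssim$" as non-negligible and up-to-log, matching the conventions of the formal \cref{thm:dw-implies-overlap}.
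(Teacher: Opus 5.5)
Your proposal is correct and follows the paper's own proof. \cref{lemma:block-guidance} together with \eqref{eq:exact-cross-block-from-w} reduces $\|g_a\|>0$ to $\partial_a w_n\neq 0$ for some $n$, and the overlap theorem then supplies the pair $p\neq q$; the paper's proof cites \cref{thm:local-dominance-gradw} at this step, but \cref{thm:dw-implies-overlap} is the result actually used.

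Your quantitative add-on is a valid sharpening of \cref{thm:dw-implies-overlap}. The exact identity $\partial_a w_n=\frac{\alpha_t}{\sigma^2}w_n\sum_j w_j\bigl(a_0^{(n)}-a_0^{(j)}\bigr)$ replaces the paper's position-dependent $G(x,t)$ by the uniform constant $\alpha_t D/\sigma^2$. Also, $\sum_n\partial_a w_n=0$ lets you drop the state-dependent factor $\|\alpha_t u_0^{(n)}-u\|$ in favor of $\alpha_t\max_n\|u_0^{(n)}\|$. One caveat, shared with the paper: passing from $s_N$ to $s_\theta$ requires the Jacobians to be close, and the sup-norm hypothesis of \cref{lemma:empirical-score} does not provide that.
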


\cref{thm:local-dominance-gradw-informal,thm:dw-implies-overlap-informal} offer a clear geometric interpretation of guidance behavior in joint-embedding diffusion models during posterior sampling.
Guidance attenuates when $x$ lies far from all mixture centers and also when $x$ is dominated by a single component (\cref{cor:local-dominance-attenuation-informal}).
Non-vanishing guidance is possible only when $x$ lies in an overlap region where at least two mixture components have comparable responsibility (\cref{cor:nonvanishing-requires-overlap-informal}).
We provide a visualization for the two criteria in \cref{fig:guidance-geometry}.
Under data scarcity, such overlap regions may be rare or entirely absent, causing coefficient-space guidance to collapse throughout the sampling process.
This failure mode is intrinsic to joint-embedding formulations and cannot be remedied by alternative noise schedules or sampling heuristics.
These observations motivate a decoupled formulation in which observation consistency is enforced through an explicit forward operator rather than learned joint correlations.

\subsubsection{Guidance Robustness in DDIS}
\label{sec:ddis_guidance}

Unlike joint-embedding formulations, DDIS enforces observation consistency through a neural operator rather than learned joint correlations.
As a result, likelihood-based guidance in DDIS is mediated directly by the Jacobian of a deterministic operator and does not rely on data-dependent coupling between coefficients and solutions.

DDIS approximates the physical operator by a neural operator $L_\phi \approx L^*$ and performs posterior sampling via DAPS, where likelihood corrections are applied to clean-variable estimates $a_0^{(j)}$.
For a noise scale $\tilde{\sigma}_{\mathrm{obs}}$, the resulting guidance approximates \eqref{eqn:true-likelihood-score} by 
\begin{align}
\frac{1}{\tilde{\sigma}_{\mathrm{obs}}^2}
\big(\nabla_{a} L_\phi(a_0^{(j)})\big)^\top
\big(u_{\mathrm{obs}} - L_\phi(a_0^{(j)})\big),
\label{eqn:ddis-like-grad-daps}
\end{align}
which directly approximates the ideal likelihood score~\eqref{eqn:true-likelihood-score}.

Here, the magnitude of this guidance is governed by the explicit Jacobian $\nabla_a L_\phi(a)$.
Since $L_\phi$ is trained via regression independently of any joint data density, data scarcity affects approximation accuracy but does not induce guidance attenuation, unlike joint-embedding models.

\begin{proposition}[Structural robustness of DDIS guidance]
\label{prop:ddis-robust-guidance}
In contrast to joint-embedding models, DDIS admits no data-dependent mechanism that forces guidance attenuation as the number of paired data decreases.
\end{proposition}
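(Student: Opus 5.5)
The claim is structural, so the plan is to make it precise first and then prove it by inspecting how the DDIS guidance \eqref{eqn:ddis-like-grad-daps} depends on the paired data. The attenuation mechanism for joint models, from \cref{thm:local-dominance-gradw-informal,cor:local-dominance-attenuation-informal}, has a specific form: the coefficient component $g_a$ factors through the cross-partial $\partial_{a_t}s_{\theta,u}$. Under the empirical-mixture approximation (\cref{lemma:empirical-score}), this cross-partial equals a sum of responsibility gradients $\partial_a w_n$. Those gradients are exponentially small in $\min_{j\neq k}\big(\|x-\alpha_t x_0^{(j)}\|^2-\|x-\alpha_t x_0^{(k)}\|^2\big)/\sigma^2(t)$, and locality becomes generic as $n_p$ decreases. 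I would therefore formalize ``data-dependent mechanism forcing attenuation'' as follows: a factor in the guidance, determined by the training set $\mathcal D_{n_p}$, that tends to zero on all but a vanishing-measure set of states as the data become sparse, \emph{uniformly} over the true operator and the residual.

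Step one is to write the DDIS guidance as $g^{\mathrm{DDIS}}(a)=J_\phi(a)^\top M^\top r(a)$ with $r=u_{\mathrm{obs}}-M L_\phi(a)$ and $J_\phi=\nabla_a L_\phi$. Both $L_\phi$ and $J_\phi$ are deterministic functions of $a$ and $\phi$. The paired data enter only through the trained parameters $\phi=\phi(\mathcal D_{n_p})$, and never through a density over $(a,u)$, responsibilities, or a noise level $\sigma_t$. In particular, the guidance is evaluated at the clean DAPS iterate $a_0^{(j)}$ with the fixed scale $1/\tilde\sigma_{\mathrm{obs}}^2$. No Tweedie factor $\sigma_t^2$ and no mixture weights appear, in contrast with the block form in \cref{lemma:block-guidance}.

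Step two is to turn the fact that $L_\phi$ is trained to approximate $L^*$ into a non-attenuation statement. Write $J_\phi=J^*+E_\phi$ and $L_\phi=L^*+e_\phi$. Then
\[
g^{\mathrm{DDIS}}=J^{*\top}M^\top\big(u_{\mathrm{obs}}-ML^*\big)+\Delta_\phi ,
\]
where $\|\Delta_\phi\|\le \|E_\phi\|\,\|M^\top r\|+\|J^*\|\,\|M e_\phi\|$. The leading term is the ideal score \eqref{eqn:true-likelihood-score} up to scale, and it is independent of $n_p$. Data scarcity enters only through the approximation errors $(e_\phi,E_\phi)$, which are governed by the operator generalization bound of order $\sqrt{d_L/n_p}$ from \cref{sec:data-eff}. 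These errors bias the direction and magnitude of the guidance, but nothing forces them to cancel the leading term; they have no systematic sign or decay toward zero. By contrast, the joint model's $g_a$ is multiplied by a quantity that vanishes exponentially under locality, whatever the true operator is. I would make the contrast explicit by exhibiting a regime, for example a linear $L^*$ with a linear or FNO model class containing it, where $J_\phi$ is bounded below as $n_p\downarrow$ in the well-specified case. This shows that the DDIS guidance norm is bounded away from zero whenever $M J^{*\top}\neq 0$ and the residual is nonzero.

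The main obstacle is that ``no mechanism forces attenuation'' is a negative, somewhat informal claim. A poorly trained $L_\phi$ could, in principle, have a small Jacobian, for instance if it regresses toward a constant under extreme scarcity. I would handle this by stating the proposition relative to the approximation error: guidance attenuation can occur only if $\|J_\phi\|\to 0$, which is an approximation failure controlled by $\|E_\phi\|$ and is not implied geometrically by locality as in \cref{cor:nonvanishing-requires-overlap-informal}. The PINO residual in \eqref{eqn:no_loss} further pins $J_\phi$ to the physics independently of $n_p$, and I would cite this as the final step.
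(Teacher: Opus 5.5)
Your Step one and your closing reformulation match the paper's justification, which has no separate proof. The paper only observes that the guidance \eqref{eqn:ddis-like-grad-daps} is the Jacobian of a deterministic, regression-trained operator. Hence paired data enter only through $\phi$, and scarcity degrades accuracy rather than triggering a locality/responsibility-type attenuation. At that level your proposal is fine and follows the paper's route. The problem is the step where you try to make the contrast rigorous.

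You assert that the errors $(e_\phi,E_\phi)$ have no systematic tendency to cancel the leading term. You also offer a well-specified linear class as a regime where $J_\phi$ stays bounded below as $n_p$ decreases. For a generic linear class this is false whenever $n_p<\dim\mathcal A$ (here $128^2$).

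Write $A=[a_1,\dots,a_{n_p}]$. The noiseless least-squares problem $WA=L^*A$ is then underdetermined. Its minimum-norm solution, which gradient descent from zero initialization reaches, is $L_\phi=L^*P_A$, where $P_A=AA^+$ projects onto $\mathrm{span}\{a_i\}$.

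Consequently $E_\phi=-L^*(I-P_A)$, and the DDIS guidance $P_A L^{*\top}M^\top r$ always lies in the $n_p$-dimensional training span. That span shrinks as paired data are removed. Your $\Delta_\phi$ cancels the component of the ideal score orthogonal to it exactly. This is a data-dependent attenuation mechanism of exactly the kind the proposition rules out.

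With a random initialization $W_0$ you instead get $L^*P_A+W_0(I-P_A)$. That is not attenuated but arbitrary off the span, so $J_\phi$ is still not close to $J^*$. Regression by itself therefore buys nothing; the inductive bias of the operator class has to do the work.

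An example that does work is a mode-diagonal (Fourier/sine-multiplier) class containing the Poisson or Helmholtz solution map. There the least-squares problem decouples across modes. A single GRF sample with nonzero energy in every mode then identifies the multipliers, giving $J_\phi=J^*$ already at $n_p=1$. A rigorous Step two must be conditioned on such structure, or else kept at the paper's informal level: attenuation can enter only through $E_\phi$, never through a geometric locality mechanism.

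Two smaller points. First, the $\tilde{\mathcal O}(\sqrt{d_L/n_p})$ bound of \cref{sec:data-eff} controls only the excess regression risk $\mathbb E_{a\sim p(a)}\|L_\phi(a)-L^*(a)\|^2$. It says nothing about the Jacobian error $E_\phi$, since small $L^2$ error does not control derivatives. Nor does it control the error at DAPS iterates $a_0^{(j)}$, which can lie far from the coefficient prior early in annealing.

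Second, the PINO term pins $J_\phi$ only if the residual is enforced on a rich set of inputs, such as the abundant unpaired coefficients. As written in \eqref{eqn:no_loss} it is averaged over the paired $a$'s. In the linear example it would again constrain $W$ only on $\mathrm{span}\{a_i\}$.

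Also, your condition $MJ^{*\top}\neq 0$ should read $J^{*\top}M^\top\neq 0$.
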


\vspace{-0.5em}
\subsection{Joint Embeddings and DAPS: Failure Modes with Sparse Guidance}
\label{sec:daps_failure}
\vspace{-0.5em}
In this section, we analyze a structural failure mode of applying DAPS~\cite{zhang2025improving} with joint embeddings.
The Langevin dynamics inside DAPS create spatially discontinuous gradient updates under sparse observations.
These updates alter the covariance structure of generated samples.
Thus, the sampling processes produce outputs that are out-of-distribution relative to the diffusion model's learned manifold, resulting in poor performance.
Extending DAPS to function spaces does not resolve this issue.
We present an informal derivation here; the rigorous one is in \cref{app:daps-failure}.

The DAPS inner Langevin chain (\cref{eqn:langevin_daps}) targets a density proportional to
$p(x\mid x_t)\,p(u_{\mathrm{obs}}\mid x)$, where $x=(a,u)$ is the joint variable.
We extend DAPS's Gaussian assumptions to function spaces as $p(x \mid x_t) \approx \mathcal{N}(\hat{{x}}_0({x}_t),C)$, where $\hat{{x}}_0$ is an estimator of $x_0$ given $x_t$ and $C$ is a covariance function.
Under sparse observations, it suffices to analyze a single point observation $x_i=c$ (well-separated constraints contribute approximately additively), which we model as
$p(u_{\mathrm{obs}}\mid x) \approx \mathcal{N}(c, \sigma_s^2)$.
Taking the continuous-time limit yields the preconditioned Langevin SDE:
\begin{align}
dx_t = -\Sigma\,\nabla U(x_t)\, dt + \sqrt{2} \Sigma^{1/2} dW_t,
\label{eq:langevin}
\end{align}
where $\Sigma$ is the noise covariance and the potential $U(x)$ combines the GRF prior and the pointwise constraint:
\vspace{-1em}
\begin{align}
U(x) = \frac{1}{2}(x - x_0)^T C^{-1} (x - x_0) + \frac{1}{2\sigma_s^2}(e_i^T x - c)^2.
\label{eq:potential}
\end{align}

\vspace{-1em}
We denote by $\Sigma_\infty$ the stationary covariance of $x_t$.

\begin{theorem}[Sparse constraint induces correlation shrinkage (informal \cref{thm:constrained_terms})]
\label{thm:sparse-constraint-corr-shrink-informal}
Under the dynamics induced by \cref{eq:langevin,eq:potential}, all covariance terms involving the constrained index $i$ are uniformly scaled:
\vspace{-1em}
\begin{align}
\small (\Sigma_\infty)_{ik} = (\Sigma_\infty)_{ki}
= \frac{\sigma_s^2}{\sigma_s^2 + C_{ii}} \, C_{ik,} \quad \forall k
\label{eq:scaled_terms}
\end{align}
\end{theorem}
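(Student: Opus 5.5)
The plan is to work with the discretized (finite-dimensional) version of the joint field $x$, where $C$ is a symmetric positive definite matrix, and to compute $\Sigma_\infty$ in closed form. The potential \eqref{eq:potential} is quadratic, so \eqref{eq:langevin} is a linear (Ornstein--Uhlenbeck) SDE. Expanding $U$ gives Hessian $H = C^{-1} + \sigma_s^{-2} e_i e_i^\top$, which is symmetric positive definite, and a unique minimizer $m$. The drift is therefore $-\Sigma H (x - m)$. I assume the preconditioner $\Sigma$ is symmetric positive definite, as in DAPS.

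First I will identify the stationary law. Preconditioned Langevin with a constant symmetric positive definite preconditioner leaves the Gibbs measure $\propto e^{-U}$ invariant. To verify this directly, I will show that $P = H^{-1}$ solves the stationary Lyapunov equation
\begin{align*}
\Sigma H P + P H \Sigma = 2\Sigma ,
\end{align*}
which holds trivially: both terms on the left equal $\Sigma$. For uniqueness, note that $\Sigma H$ is similar to $\Sigma^{1/2} H \Sigma^{1/2}$, which is positive definite. Hence $-\Sigma H$ is Hurwitz, the Lyapunov equation has a unique solution, and the process converges to it. This gives $\Sigma_\infty = H^{-1}$, independent of $\Sigma$. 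That independence is what makes the statement hold for any preconditioning.

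Next I will invert the rank-one update with Sherman--Morrison:
\begin{align*}
\Sigma_\infty = \bigl(C^{-1} + \sigma_s^{-2} e_i e_i^\top\bigr)^{-1} = C - \frac{C e_i e_i^\top C}{\sigma_s^2 + e_i^\top C e_i}.
\end{align*}
Reading off the $(i,k)$ entry gives
\begin{align*}
C_{ik} - \frac{C_{ii} C_{ik}}{\sigma_s^2 + C_{ii}} = \frac{\sigma_s^2}{\sigma_s^2 + C_{ii}}\, C_{ik}.
\end{align*}
Symmetry of $\Sigma_\infty$ gives the $(k,i)$ entry. The same formula shows that entries $(j,k)$ with $j,k\neq i$ are corrected by $-C_{ji}C_{ik}/(\sigma_s^2+C_{ii})$, so they are not uniformly scaled. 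This is consistent with the statement being restricted to terms that involve the index $i$.

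The main obstacle is justifying the setting rather than the algebra. Two points need care. The first is the passage from function space to a finite grid: in infinite dimensions the pointwise functional $e_i$ is not bounded on $L^2$, so one must either work on the discretization or assume that $C$ is trace-class with continuous kernel, so that point evaluation is well-defined on its Cameron--Martin space. The second is the reduction of many sparse observations to a single one. I would handle it informally, as the paper does: well-separated constraints give a low-rank update whose cross terms are small when the corresponding $C_{ij}$ are small. The theorem itself is stated only for a single point constraint, so the rigorous part is the computation above.
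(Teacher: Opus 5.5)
Your proposal is correct and follows the paper's proof. The paper likewise verifies that $\Sigma_\infty = B^{-1}$ (your $H^{-1}$) solves the Lyapunov equation $\Sigma B\Sigma_\infty + \Sigma_\infty B\Sigma = 2\Sigma$, inverts the rank-one update with Sherman--Morrison, reads off the $(i,k)$ entry, and uses symmetry for $(k,i)$. Your Hurwitz argument for uniqueness and convergence fills in a step the paper only cites (Pavliotis). Your choice to work on the discretization also avoids the unbounded point-evaluation issue that the paper's Hilbert-space formulation treats only formally.
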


\begin{corollary}[Strong sparse guidance collapses correlations]
\label{cor:sparse-cov-collapse-informal}
If $\sigma_s^2 \ll C_{ii}$, then 

\center{$(\Sigma_\infty)_{ik} = (\Sigma_\infty)_{ki} \approx 0 \quad \forall k$}.
\end{corollary}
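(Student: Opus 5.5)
The plan is to read the corollary off the closed form in \cref{thm:sparse-constraint-corr-shrink-informal} and make the vague ``$\approx 0$'' quantitative. I will take as given (from \cref{thm:sparse-constraint-corr-shrink-informal}, proved rigorously in \cref{thm:constrained_terms}) that the stationary covariance of the preconditioned Langevin dynamics \eqref{eq:langevin} with potential \eqref{eq:potential} satisfies
\begin{align*}
(\Sigma_\infty)_{ik} = (\Sigma_\infty)_{ki} = \frac{\sigma_s^2}{\sigma_s^2 + C_{ii}}\, C_{ik}, \qquad \forall k .
\end{align*}
It is worth recalling why this holds. The target density $\propto e^{-U}$ is Gaussian with precision $C^{-1} + \sigma_s^{-2} e_i e_i^\top$. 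Its covariance therefore follows from the Sherman--Morrison formula:
\begin{align*}
\Sigma_\infty = C - \frac{C e_i e_i^\top C}{\sigma_s^2 + C_{ii}} .
\end{align*}
Reading off the $i$-th row gives the displayed scaling. Preconditioning by $\Sigma$ does not alter the stationary law, so this is independent of $\Sigma$.

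The key step is then a uniform bound on the prefactor together with control of $C_{ik}$. Write $\rho \coloneqq \sigma_s^2 / C_{ii}$. The shrinkage factor is
\begin{align*}
\frac{\sigma_s^2}{\sigma_s^2 + C_{ii}} = \frac{\rho}{1+\rho} \le \rho .
\end{align*}
Since $C$ is a covariance (positive semidefinite), Cauchy--Schwarz gives $|C_{ik}| \le \sqrt{C_{ii} C_{kk}}$. Combining the two,
\begin{align*}
|(\Sigma_\infty)_{ik}| \le \rho \sqrt{C_{ii} C_{kk}} = \sigma_s^2 \sqrt{C_{kk}/C_{ii}} .
\end{align*}
This tends to $0$ as $\rho \to 0$ for every fixed $k$. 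Under $\sup_k C_{kk} < \infty$, which holds for a continuous-covariance GRF, the convergence is uniform in $k$. The diagonal entry is included as the case $k = i$, where $(\Sigma_\infty)_{ii} \le \sigma_s^2 \to 0$.

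For the interpretation, I will also compare against the correlation scale rather than only absolute covariances. The cross-covariance with the neighbours of $i$ is shrunk by the factor $\rho/(1+\rho)$ relative to the prior $C_{ik}$. The off-$i$ block is then $C_{jk} - C_{ji} C_{ik}/(\sigma_s^2 + C_{ii})$, which is essentially the conditional covariance given $x_i$. So the constrained point decouples from its spatial neighbours; this is exactly the collapse claimed.

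The only real obstacle is making ``$\approx 0$'' meaningful in the function-space setting. Absolute smallness needs $\sigma_s^2 \sqrt{C_{kk}/C_{ii}}$ small, which requires bounded variance ratios across the domain. If that fails, I would state the result in the relative form $(\Sigma_\infty)_{ik} / C_{ik} = \rho/(1+\rho) \to 0$ instead.
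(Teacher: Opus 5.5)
Your proposal is correct and follows the paper's route: the paper obtains $\Sigma_\infty = B^{-1}$ from the Lyapunov equation and applies Sherman--Morrison to get $(\Sigma_\infty)_{ik} = \frac{\sigma_s^2}{\sigma_s^2 + C_{ii}}\, C_{ik}$, and the corollary is then read off from the vanishing prefactor when $\sigma_s^2 \ll C_{ii}$. Your Cauchy--Schwarz bound $|(\Sigma_\infty)_{ik}| \le (\sigma_s^2/C_{ii})\sqrt{C_{ii}C_{kk}}$ is a harmless quantitative sharpening that makes the paper's informal ``$\approx 0$'' scale-free.
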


In practice, sparse-guidance methods often set $\sigma_s^2$ to a small value (e.g., $10^{-3}$ \citep{zhang2025improving}), so the constrained point rapidly becomes (nearly) uncorrelated from the rest.

This covariance collapse has a geometric interpretation.
Physical coefficient fields are typically continuous and correlated across space, but under sparse constraints, the constrained location becomes nearly independent of its neighbors, yielding discontinuities that are atypical under the diffusion prior.
As a result, the sample is pushed off the data manifold, which degrades reconstruction quality.

%% file: 5exp.tex
\begin{table*}[t]
\vspace{-2em}
\centering
\small
\begin{minipage}[t]{0.53\textwidth}
\setlength{\tabcolsep}{0.85pt}
\caption{
Standard supervision results for the inverse PDE problems under 3\% sparse observations.
Relative $\ell_2$ error (\%), spectral error $E_s$, and runtime per sample are reported.
Rows are grouped by comparable time budgets.
\cref{fig:pareto} visualizes the table.
}
\label{table:performance}
\vspace{-0.5em}
\begin{tabular}{lccccccc}
\toprule
 \multirow{2}{*}{Method} & \multicolumn{2}{c}{Poisson} &
 \multicolumn{2}{c}{Helmholtz} &
 \multicolumn{2}{c}{N-S} &
 \multirow{2}{*}{\makecell{Time \\ (s)}} \\
\cmidrule(lr){2-3}
\cmidrule(lr){4-5}
\cmidrule(lr){6-7}
 & $\ell_2$ & $E_s$
& $\ell_2$ & $E_s$
& $\ell_2$ & $E_s$
&  \\
\midrule

DiffusionPDE & 74.68 & .566 & 46.10 & .315 & 32.78 & 2.099 & 17.75 \\
FunDPS       & {\ul 19.96} & {\ul .192} & {\ul 17.16} & {\ul .140} & {\ul 8.99} & {\ul .382} & 14.58 \\
DDIS
             & \textbf{15.78} & \textbf{.074} & \textbf{15.08} & \textbf{.044} & \textbf{8.93} & \textbf{.165} & 16.75 \\
\midrule

DiffusionPDE & 35.53 & .281 & 23.86 & .164 & 11.01 & {\ul.221} & 35.63 \\
FunDPS       & {\ul 17.14} & {\ul .135} & {\ul 16.05} & {\ul .157} & {\ul 8.23} & .423 & 28.75 \\
DDIS & \textbf{14.36} & \textbf{.086} & \textbf{14.24} & \textbf{.055} & \textbf{8.02} & \textbf{.178} & 32.67 \\
\midrule

DiffusionPDE & 16.65 & \textbf{.085} & 17.73 & \textbf{.096} & 9.24 & {\ul .219} & 142.62 \\
FunDPS       & {\ul 14.73} & .136 & {\ul 14.14} & .198 & {\ul 7.98} & {.479} & 113.83 \\
DDIS & \textbf{12.32} & {\ul .087} & \textbf{12.20} & {\ul .097} & \textbf{7.81} & \textbf{.188} & 127.42 \\
\midrule

ECI-sampling$^{*}$ & 94.63 & / & 92.83 & / & 42.36 & / & 0.20 \\
ECI-sampling$^{*}$ & 93.47 & / & 93.23 & / & 41.68 & / & 0.38 \\
OFM$^{*}$ & 71.87 & / & 49.60 & / & 37.57 & / & 470.44 \\
OFM$^{*}$ & 47.04 & / & 42.07 & / & 20.98 & / & 4366.34 \\

\bottomrule
\end{tabular}
\parbox{\linewidth}{\footnotesize
}
\end{minipage}%
\hfill
\begin{minipage}[t]{0.455\textwidth}
\setlength{\tabcolsep}{0.84pt}
\caption{Relative $\ell_2$ error (\%) under scarce paired-data supervision (100\%, 5\%, and 1\% data) for inverse PDE problems.
Bold numbers highlight better performance under same data scarcity.
Two models in the same group share comparable time budgets as in \cref{table:performance}.
}
\label{table:data-eff_performance}
\vspace{-.5em}
\begin{tabular}{ccccccccc}
\toprule
\multirow{2}{*}{\makecell{Data \\ Scarcity}} &
\multicolumn{2}{c}{Poisson} &
\multicolumn{2}{c}{Helmholtz} &
\multicolumn{2}{c}{N-S} &
\multirow{2}{*}{\makecell{T \\ (s)}} \\
\cmidrule(lr){2-3}
\cmidrule(lr){4-5}
\cmidrule(lr){6-7}
 & DDIS & Fun$.^\ddagger$ & DDIS & Fun$.^\ddagger$ & DDIS & Fun$.^\ddagger$ & \\
\midrule

100\% & \textbf{16.36} & 20.47 & \textbf{15.19} & 17.16 & \textbf{8.22} & 8.48 & \multirow{4}{*}{16} \\
5\% & \textbf{17.28} & 23.24 & \textbf{15.69} & 20.97 & \textbf{9.21} & 11.56 & \\
1\% & \textbf{{18.70}} & 35.81 & \textbf{{16.40}} & 41.69 & \textbf{12.05} & 13.65 & \\
1\%+Phys & \textbf{16.56} & 35.81 & \textbf{16.05} & 41.69 & / $^{\dagger}$ & / $^{\dagger}$ & \\

\midrule
100\% & \textbf{15.34} & 17.30 & \textbf{14.03} & 16.05 & 8.00 & \textbf{7.67} & \multirow{3}{*}{32} \\
5\% & \textbf{15.76} & 22.66 & \textbf{15.10} & 19.49 & \textbf{9.12} & 11.15 & \\
1\% & \textbf{{17.79}} & 35.79 & \textbf{{15.90}} & 41.44 & \textbf{{12.28}} & 13.50 & \\
1\%+Phys & \textbf{15.63} & 35.81 & \textbf{15.21} & 41.69 & / $^{\dagger}$ & / $^{\dagger}$ & \\

\bottomrule
\end{tabular}
\parbox{\linewidth}{\footnotesize
$^{\dagger}$ Physics loss is not applicable for Navier-Stokes due to insufficient information; see \cref{para:dataset-ns}.
}
\parbox{\linewidth}{\footnotesize
$^{\ddagger}$ Fun. stands for FunDPS.
}
\parbox{\linewidth}{\footnotesize
\vspace{0.25em}
$^{\ast}$ See \cref{sec: A.5} for analysis of the flow models.
}
\end{minipage}
\vspace{-1.5em}
\end{table*}

\paragraph{Task.}

We consider inverse Helmholtz, Poisson, and Navier-Stokes problems with sparse observations.
For each instance, we observe 500 randomly sampled solution points from $\small u(x)\in\mathbb{R}^{128^2}$ ($\sim$3\% of the domain) and reconstruct the underlying coefficient field $a(x)$.
Performance is measured by relative $\ell_2$ error and spectral energy error $E_s$ between reconstructed and ground-truth coefficient.

To reflect practical scenarios of available data, we train the neural operator under three supervision regimes:
(1) \emph{Standard supervision}, where the operator is trained on full-resolution $128^2$ paired data;
(2) \emph{Scarce paired-data supervision}, where the operator is trained using only $5\%$ or $1\%$ of the full dataset; and
(3) \emph{Low-/multi-resolution supervision}, where the operator is trained either on full $64^2$ low-res data, or on a mixed dataset consisting of $64^2$ data and $10\%$ of $128^2$ high-res samples.

\vspace{-1em}
\paragraph{Benchmark.}
We compare DDIS against prior diffusion-based solvers and variants, including DiffusionPDE~\citep{huang2024diffusionpde}, previous state-of-the-art FunDPS~\citep{yao2025guided}, recent flow-based ECI-sampling~\citep{cheng2025gradientfree} and OFM~\citep{shi2025stochastic}. 
For ablations, we test FunDAPS, a baseline where the posterior sampler in FunDPS is replaced by DAPS, and DecoupledDPS, which applies DPS sampling within our modular DDIS framework. 
This setup allows us to isolate the effects of 
(i) the sampling scheme (DPS vs. DAPS) and 
(ii) the architectural choice of decoupling prior learning from operator-informed sampling.

\textbf{Standard supervision.}
We compare our DDIS with the benchmarks DiffusionPDE~\citep{huang2024diffusionpde} and FunDPS~\citep{yao2025guided}. 
Besides relative $\ell_2$ error, we report a spectral error $E_s$ to assess model's ability to reconstruct features across different spatial frequencies, measured by wave number $k$.
Since the power spectrum scales logarithmically as $k$ decreases, directly averaging errors would be heavily biased toward low-frequency modes.
We thus take the geometric mean of the spectral error over all $k$. DDIS achieves state-of-the-art performance on the inverse Poisson, Helmholtz, and Navier-Stokes problems across various budgets (\cref{table:performance}).

\begin{wrapfigure}{r}{0.6\textwidth}
    \centering
    \includegraphics[width=0.6\textwidth]{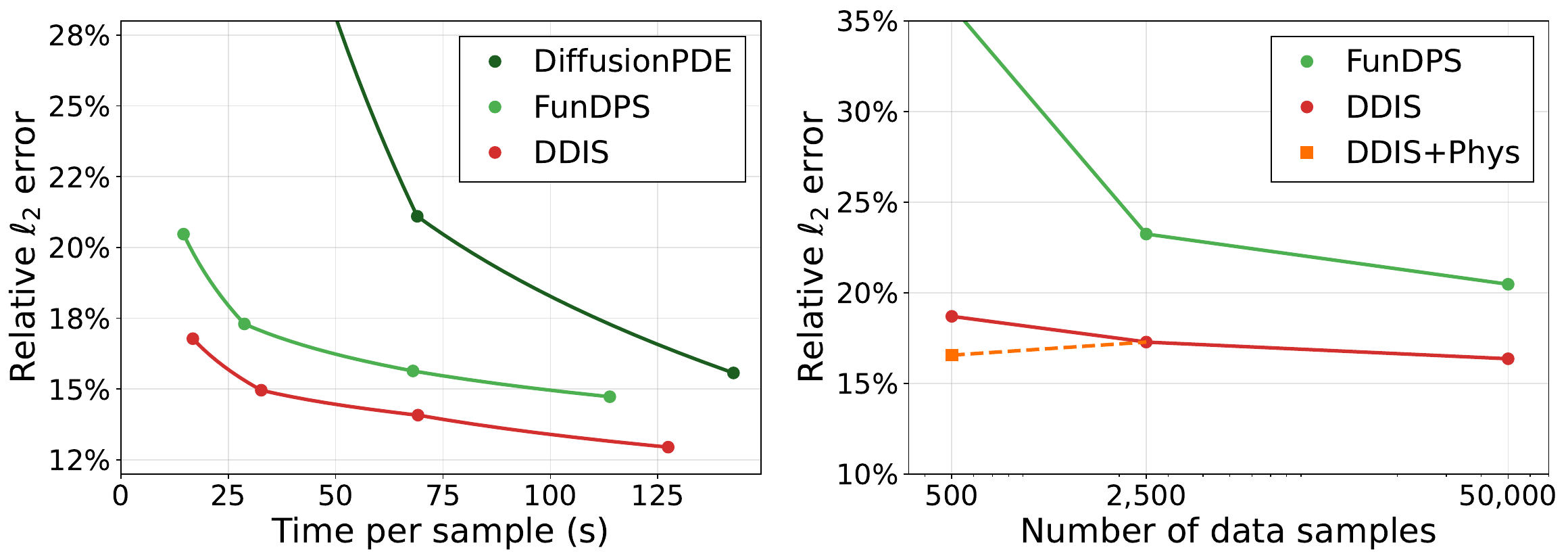}
    \caption{Pareto frontiers for the inverse Poisson problem. (Left) Relative $\ell_2$ error versus inference time. (Right) Relative $\ell_2$ error versus training set size. 
    DDIS dominates baselines across both computational and data efficiency dimensions, with the physics-informed variant (DDIS+Phys) further enhances sample efficiency.}
    \label{fig:pareto}
    \vspace{-1em}
\end{wrapfigure}
As the number of sampling steps increases, DDIS continues to improve, whereas the baseline models saturate.
As shown in \cref{fig:pareto}, DDIS lies near the accuracy-runtime Pareto frontier for both Poisson and Helmholtz.
Across all tasks, including Navier-Stokes where $\ell_2$ differences are small, DDIS achieves substantially lower spectral error than FunDPS by a factor up to $3.2$.
We visualize the spectral error in \cref{sec:power_spec_comp}, showing that FunDPS fails to capture high-frequency features, whereas DDIS preserves fidelity across frequencies.

\textbf{Scarce paired-data supervision.}
To assess the data-efficiency of DDIS, we consider data-scarcity scenarios with $100\%$, $5\%$, and $1\%$ of the original paired-data supervision and FunDPS as the benchmark model, keeping the observation sparsity at around $3\%$.
As paired data become scarce, FunDPS degrades sharply, while DDIS maintains stable reconstruction accuracy even at $1\%$ paired data, indicating superior data efficiency (\cref{table:data-eff_performance}).
We further observe that DecoupledDPS improves over FunDPS (\cref{tab:decoupled_dps}), but DDIS achieves larger gains, highlighting the additional benefit of DAPS enabled by decoupling.
This confirms our claim: a deterministic representation of physics is more appropriate and data-efficient than joint statistical modeling.

\textbf{Low-/Multi-resolution supervision}
\label{sec:multi-res-supervision}
To examine the resolution-invariant properties of DDIS, we evaluate performance when the neural operator is trained on low-resolution ($64^2$) data or on a mixed dataset combining $64^2$ data with 10\% of the $128^2$ samples.

As shown in \cref{table:mixres_inf_performance}, DDIS achieves the best accuracy when trained on full $128^2$ data, while training on low- or mixed-resolution data results in only modest degradation.
This demonstrates DDIS’s robustness to resolution enabled by the resolution-invariant neural operator.

%% file: 6conclusion.tex
We proposed Decoupled Diffusion Inverse Solver (DDIS), a physics-aware generative framework for inverse PDE problems under sparse observations and imbalanced data regimes.
Unlike prior joint-embedding approaches, DDIS separates prior modeling in coefficient space from physics-based likelihood evaluation via a neural operator, aligning with Bayesian inverse formulations (\cref{sec:ddis}).

Our central finding is that joint-embedding models fail to provide effective cross-field guidance under data scarcity or sparse sensor layouts (\cref{sec:theory}).
Specifically, we characterize geometric conditions under which joint models suffer guidance attenuation (\cref{sec:likelihood-attenuation}) and identify a failure mode of joint models with DAPS-based sampling under sparse observation (\cref{sec:daps_failure}).
By decoupling, DDIS provides reliable guidance via the neural operator (\cref{sec:ddis_daps}).

Empirically (\cref{sec:exp}), DDIS achieves state-of-the-art performance.
Across budgets, DDIS creates new accuracy-runtime Pareto frontiers (\cref{fig:pareto})
and attains lower spectral error (\cref{table:performance}).
Under scarce paired-data supervision, DDIS remains stable down to $1\%$ paired data, whereas joint-embedding methods degrade (\cref{table:data-eff_performance}).
DDIS is also robust to low- and mixed-resolution supervision (\cref{table:multi_res}).

%% file: appendix.tex
\clearpage

\section{Design Rationale and Justification}
\label{sec:design}

In this section, we analyze the design choices behind DDIS by addressing questions regarding alternative approaches in \cref{sec:related}. 
We provide theoretical justifications and empirical evidence to explain why DDIS adopts a decoupled generative framework with neural operator surrogates over other common strategies.

\subsection{Why not train conditional diffusion?}
A straightforward approach to inverse problems is to train a conditional diffusion model $p_\theta(a|u_{obs})$ directly. 
While effective for fixed tasks, this ``supervised" strategy suffers from severe limitations in scientific contexts:
\begin{itemize}
    \item \textbf{Combinatorial Generalization Failure:} Conditional models are tied to the specific observation mask $M$ and forward operator $L$ seen during training. 
    Changing the sensor layout (mask) or the physics (operator) requires retraining the entire model from scratch.
    \item \textbf{Data Inefficiency:} Learning the conditional distribution requires massive paired datasets $(a, u_{obs})$ covering all potential measurement configurations. 
    In contrast, DDIS learns the prior $p(a)$ unconditionally from unpaired data, allowing it to generalize to {any} observation mask and operator at inference time without retraining.
\end{itemize}

\subsection{Why not use posterior sampling directly with numerical solver?}
\label{sec:B.2}
In theory, direct posterior sampling strategies could handle arbitrary, sparse observations by masking the exact numerical PDE solver during likelihood evaluation, i.e., computing $\log p(u_{obs} \mid M \odot L_\text{num}(a))$. 
However, this approach is impractical due to:
\begin{itemize}
    \item \textbf{Backpropagation Computational Instability:} 
    Backpropagating gradients through iterative solvers is known to suffer from computational instability and vanishing gradients \citep{zheng2025inversebench} even under dense observation.

    \item \textbf{Violation of Solver Stability Conditions:} 
    The noisy intermediate samples generated during diffusion frequently violate strict stability conditions (e.g., CFL condition \citep{courant1967partial}), causing the solver to diverge or provide unreliable gradients \citep{zheng2025inversebench}.
    
    \item \textbf{Ill-Conditioning under Sparsity:} 
    These instabilities are amplified under sparse observations, where the gradient calculation becomes ill-conditioned \citep{plessix2006review,leung2021level} and the solver fails to propagate observations back into a meaningful coefficient update.
\end{itemize}

In contrast, DDIS adopts learned neural operators in place of numerical solvers, avoiding the above failure modes and offering the following advantages.
\begin{itemize}
    \item \textbf{Differentiability and Stability:} 
    Neural operator provides a smooth, differentiable surrogate. It ensures stable gradient flow for guidance and enables rapid likelihood evaluation ($1000\times$ faster) during the iterative sampling process.

    \item \textbf{Dense Guidance:} 
    Neural operator processes information globally using a spectral basis. 
    This naturally ``smears'' the sparse pointwise errors across the spatial domain, converting sparse observations into the {dense, global guidance} required for stable Langevin updates.
    
    \item \textbf{Resolution Invariance:} 
    As shown in \cref{table:multi_res}, the FNO allows DDIS to be trained on multi-resolution data and sampling at resolutions different from the observation grid.
\end{itemize}

\subsection{Why use DAPS-based posterior sampling?}
We employ DAPS \citep{zhang2025improving} as our sampling backend after ablating other unsupervised posterior sampling strategies. 
Our selection is motivated by the specific limitations of alternative methods in non-linear PDE inverse problems:

\begin{itemize}
    \item \textbf{Inapplicability of Decomposition Methods:} 
    Decomposition-based methods such as DDRM \citep{kawar2022denoising} and DDNM \citep{wang2022zero} rely on the linearity of the forward operator, so they are inapplicable to the non-linear PDE.

    \item \textbf{Bias in Guidance-Based Methods:} 
    Guidance-based methods like DPS \citep{chung2022diffusion} and LGD \citep{song2023loss} rely on approximations of the intractable likelihood score. 
    In practical implementations (e.g., using Tweedie's formula or limited Monte Carlo samples), these approximations introduce Jensen gap (\cref{remark:DPS_jensen}) that results in higher error rates (\cref{tab:decoupled_dps}) and over-smoothed reconstructions lacking high-frequency physical details (\cref{fig:ddis_comparison}).
    
    \item \textbf{Inefficiency of Asymptotically Exact Samplers:} 
    Asymptotically exact samplers like FPS \citep{dou2024diffusion} and MCGDiff \citep{cardoso2023monte} avoid this approximation bias by using Sequential Monte Carlo (SMC) to properly weigh samples. 
    However, the heavy computational costs make them prohibitive for high-dimensional PDE problems.
\end{itemize}
DAPS effectively functions as an efficient asymptotically exact sampler. 
Unlike SMC-based methods that require maintaining and resampling a large population to represent the posterior distribution, DAPS operates on a single sample trajectory and achieves the rigorous sampling quality of exact samplers, making it the our most Pareto-optimal choice.

\subsection{Why not use joint-embedding with better posterior sampling?}
\label{sec:B.4}
A natural ablation is to integrate DAPS into joint-embedding framework like FunDPS---a baseline we term \emph{FunDAPS}. 
However, our 
analysis (\cref{sec:daps_failure}) shows that this combination fails under sparse observations. 
The failure comes from the joint-embedding design because:
\begin{itemize}
    \item \textbf{Correlation does not imply Causation:} Joint-embedding models learn $p(a, u)$ via statistical correlations rather than an explicit causal mechanism. 
    They lack a representation of physics operator to transport information between the solution space $\mathcal{U}$ and the coefficient space $\mathcal{A}$.
    \item \textbf{Sparse Guidance Failure:} 
    As shown in \cref{sec:daps_failure} and \cref{fig:fundaps_failure}, applying Langevin dynamics to a joint embedding (e.g., FunDAPS) under sparse constraints causes the stationary covariance to collapse.
    Without a differentiable operator to smooth these signals, the sampling trajectory is pushed off the data manifold.
\end{itemize}

\begin{figure}[t]
    \centering
    \includegraphics[width=1.0\linewidth]{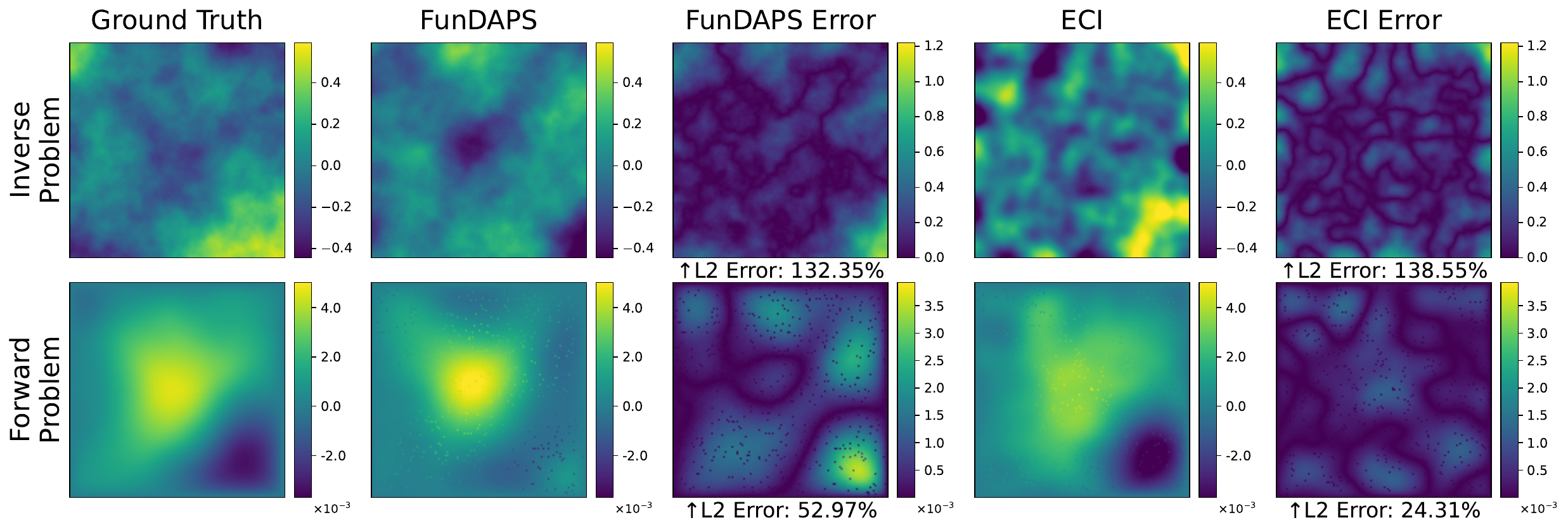} 
    \caption{\textbf{Failure of FunDAPS and ECI-sampling under sparse observations.} 
    \textbf{Top:} The joint-embedding model fails to recover the coefficient field despite using DAPS or ECI-sampling. 
    \textbf{Bottom:} Forward error reveals overfitting to specific sensor locations (visible as dots), with no generalization to unobserved regions.}
    \label{fig:fundaps_failure}
    \vspace{-2em}
\end{figure}

\subsection{Current Challenges in Flow-Based Posterior Sampling}
\label{sec: A.5}

A natural extension is to integrate flow-based inverse solvers, such as ECI-sampling \citep{cheng2025gradientfree} and OFM regression \citep{shi2025stochastic}, into a joint solution--coefficient embedding framework.
However, our empirical analysis (\cref{sec:exp}; experimental details are provided in \cref{appendix:exp-flow}) shows that these approaches exhibit clear limitations under sparse observations, either in terms of reconstruction performance or computational cost.
These failures stem from structural issues in how joint embeddings are handled during inference, for the following reasons:

\begin{itemize}
    \item \textbf{Heuristic Cross-Channel Coupling:} 
    As shown in \cref{table:performance} and \cref{fig:fundaps_failure}, ECI-sampling's heuristic replacement of $u$ with observations—lacking both $\nabla_a \| M \odot u - u_{\mathrm{obs}}\|^2$ to guide the $a$-channel and a manifold projection step—causes the inference trajectory to drift from the joint manifold, even destabilizing the $u$-channel's own manifold consistency. As also noted in PCFM \citep{utkarsh2025physicsconstrainedflowmatchingsampling}, ECI shows limited robustness in scenarios with discontinuities, making it less effective for our sparse and masked observations that require resolving high-frequency details from partial data.

    \item \textbf{High Computational Cost of Exact Langevin Inference.} 
    As shown in \cref{table:performance} and \cref{table:flow-setup}, OFM regression requires a large number of Langevin iterations to obtain satisfactory posterior samples. 
    Using accurate ODE solvers further necessitates backpropagation through the entire trajectory, resulting in substantial computational and memory overhead (\cref{exp:ofm-issue}) and making it impractical.
\end{itemize}

\begin{algorithm}[H]
\caption{DDIS-DAPS Sampler}
\label{alg:ddis_daps}
\textbf{Require:}
Sparse observation $u_{\mathrm{obs}}$, mask $M$;
diffusion model $\hat a_0(\cdot)$;
neural operator $L_\phi$;
noise schedule $\{\sigma(t_i)\}$;
prior scale $\{r_{t_i}\}$;
likelihood scale $\beta_y$;
Langevin step size $\eta$ and step count~$N_{\mathrm{L}}$. \\
\vspace{-1em}
\begin{algorithmic}[1]
\STATE $a_N \sim \mathcal{N}(0,I)$ \hfill \{\textit{Initialize from prior}\}
\FOR{$i = N \ \textbf{to}\ 1$}
    \STATE $a_0^{(0)} \leftarrow \hat a_0(a_i)$
            \hfill \{\textit{Reverse diffusion (denoising)}\}
    \FOR{$j = 0 \ \textbf{to}\ N_{\mathrm{L}}-1$}
        \STATE $\epsilon_j \sim \mathcal{N}(0,I)$
        \STATE $g_{\mathrm{prior}} \leftarrow
        - \nabla_{a_0^{(j)}} \dfrac{\|a_0^{(j)}-a_0^{(0)}\|_2^2}{r_{t_i}^2}$
        \STATE $g_{\mathrm{like}} \leftarrow
        - \nabla_{a_0^{(j)}} \dfrac{\|M \odot L_\phi(a_0^{(j)}) - u_{\mathrm{obs}}\|_2^2}{2\beta_y^2}$
        \STATE $a_0^{(j+1)} \leftarrow a_0^{(j)}
        + \eta\big(g_{\mathrm{prior}} + g_{\mathrm{like}}\big)
        + \sqrt{2\eta}\,\epsilon_j$  \hfill  \{\textit{Langevin MCMC update}\}
    \ENDFOR
    \STATE $\xi_i \sim \mathcal{N}(0,I)$
    \STATE $a_{i-1} \leftarrow a_0^{(N_{\mathrm{L}})} + \sigma(t_{i-1})\,\xi_i$ \hfill \{\textit{Re-noising with Gaussian approximation}\}
\ENDFOR
\STATE \textbf{return} $a_0$
\end{algorithmic}
\end{algorithm}

\section{Ablation Studies and Supplementary Results}
\label{sec:ablation_study}

\subsection{Pareto Fronts: Accuracy-Speed Trade-offs}

\begin{figure}[H]
    \centering
    \includegraphics[width=\textwidth]{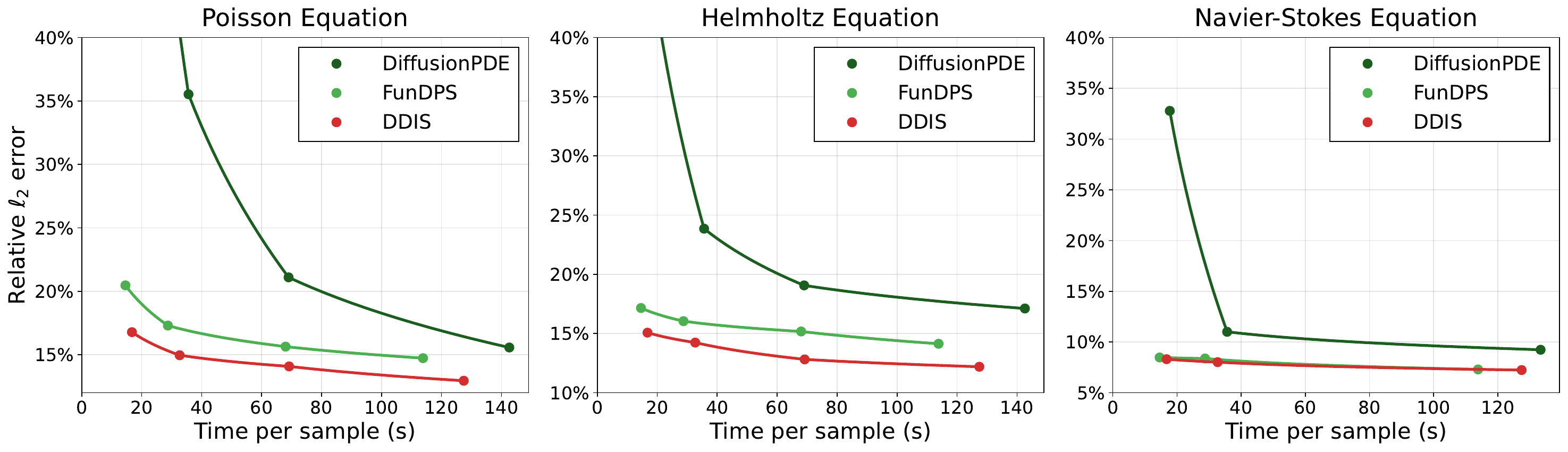}
    \caption{\textbf{DDIS consistently dominates the Pareto frontier across three PDEs.} Pareto fronts of relative $\ell_2$ error versus wall-clock time (see Table~\ref{table:performance} for settings). DDIS (red envelopes) achieves  superior accuracy-speed trade-offs, demonstrating Pareto optimality over all joint-embedding baselines (green envelopes). Even on Navier-Stokes, where the advantage on $\ell_2$ error over FunDPS is modest, DDIS achieves 60\% lower spectral error, resulting in significantly higher quality.
    }
    \label{fig:pareto_full}
\end{figure}

\clearpage
\subsection{Power Spectrum Comparison}
\label{sec:power_spec_comp}

\begin{figure}[H]
    \centering
    \includegraphics[width=\textwidth]{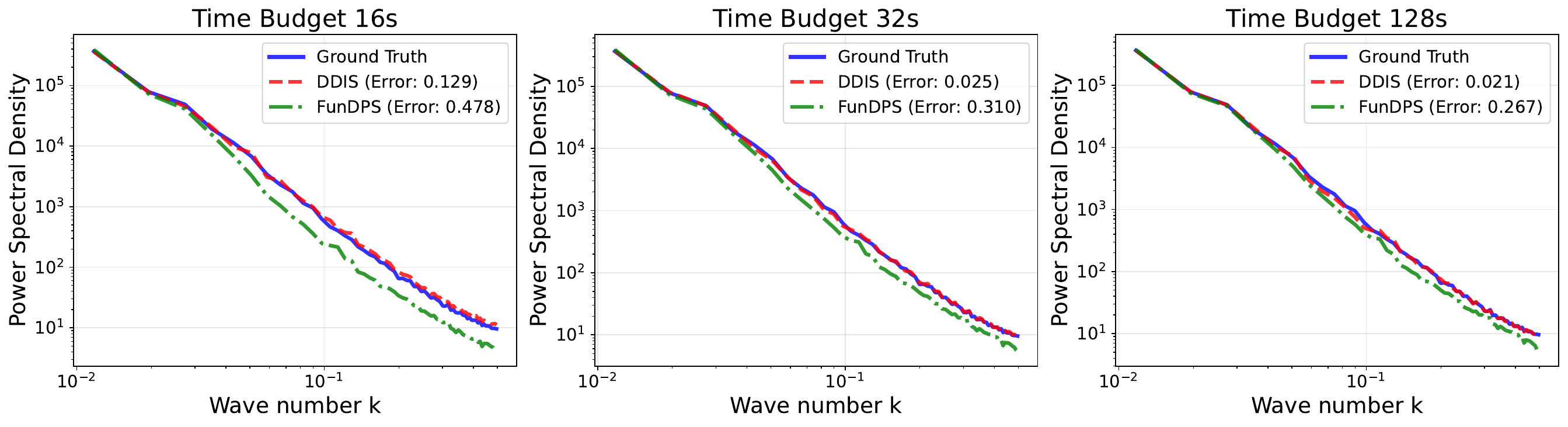}
    \caption{
    \textbf{Power spectrum comparison for the inverse Poisson problem.}
    A representative batch comparison between FunDPS and DDIS is shown.
    The predicted power spectral density is plotted against the ground truth as a function of wave number $k$.
    }
    \label{fig:poisson_spectrum}
\end{figure}

\begin{figure}[H]
    \centering
    \includegraphics[width=\textwidth]{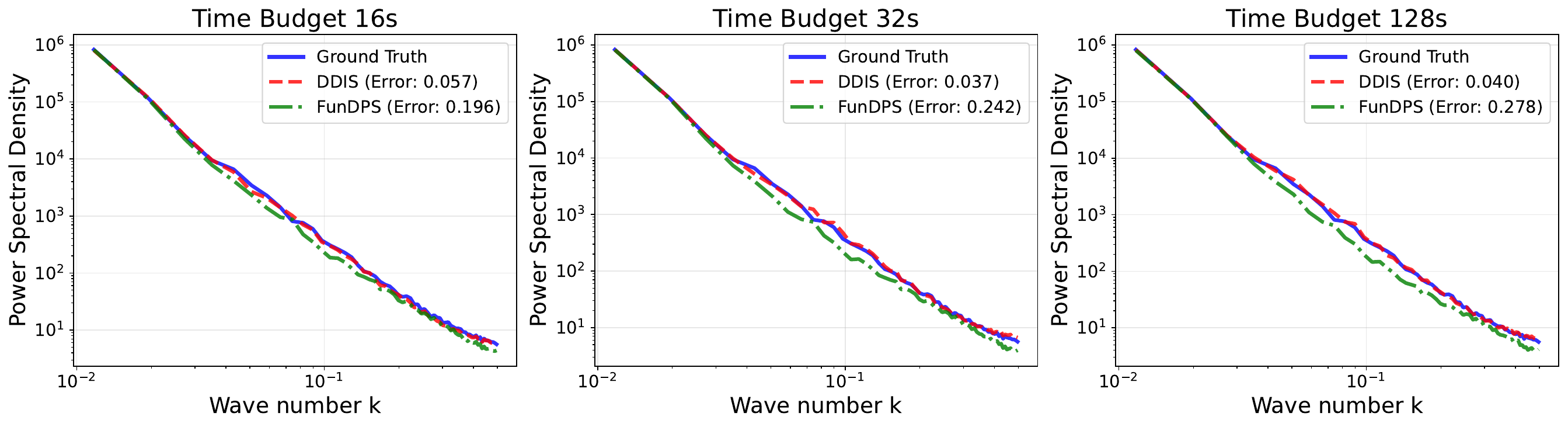}
    \caption{
    \textbf{Power spectrum comparison for the inverse Helmholtz problem.}
    DDIS accurately captures the fine-scale structures.
    }
    \label{fig:helmholtz_spectrum}
\end{figure}

\begin{figure}[H]
    \centering
    \includegraphics[width=\textwidth]{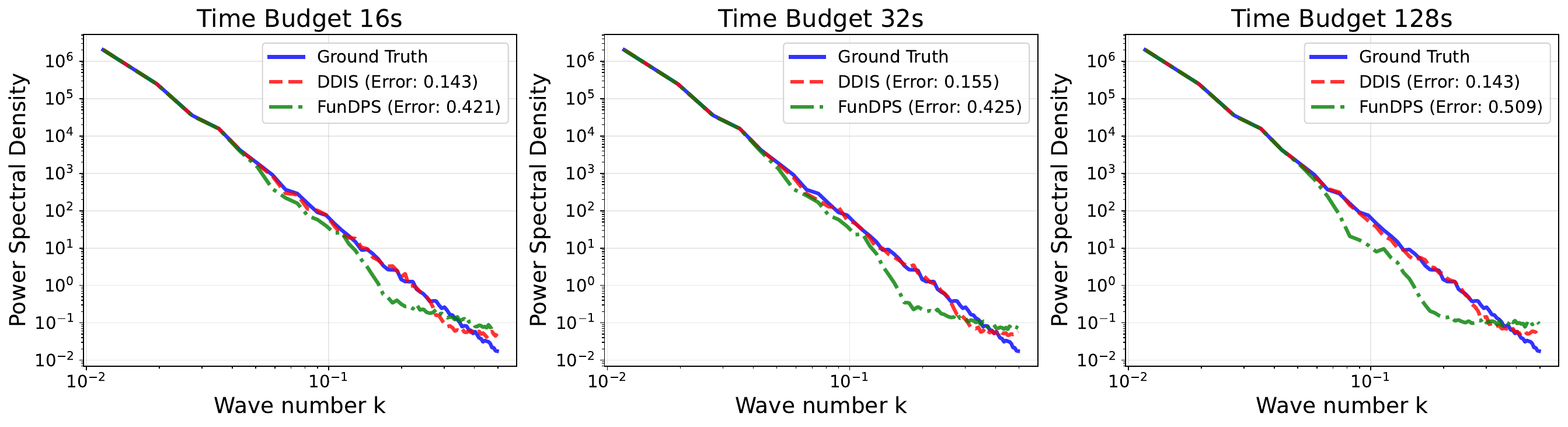}
    \caption{
    \textbf{Power spectrum comparison for the inverse Navier-Stokes problem.}
    This case is particularly challenging due to the high nonlinearity of the Navier-Stokes equation, reflected in FunDPS's gap at high wave numbers.
    Yet DDIS accurately captures them.
    }
    \label{fig:ns_spectrum}
\end{figure}

\subsection{Mixed-resolution Training}

\begin{table}[H]
\centering
\caption{
DDIS performance under low- and mixed-resolution supervision.
Relative $\ell_2$ error (\%) is reported.
Mixed-resolution training uses full $64^2$ data with an additional 10\% of $128^2$ data, maintaining accuracy while significantly reducing training costs.
}
\label{table:multi_res}
\renewcommand{\arraystretch}{0.85}
\begin{tabular}{cccc}
\toprule
Training Resolution & Poisson & Helmholtz & Steps (Diff/Lang) \\
\midrule

128   & 17.19 & 16.50 &  \\
64    & 18.28 & 16.47 & 500/20 \\
Mixed & 17.75 & 16.39 &  \\

\midrule
128   & 16.28 & 15.37 &  \\
64    & 17.27 & 15.67 & 1000/20 \\
Mixed & 16.76 & 15.58 &  \\

\midrule
128   & 15.93 & 14.94 &  \\
64    & 17.64 & 15.10 & 1000/50 \\
Mixed & 16.61 & 14.85 &  \\

\bottomrule
\end{tabular}
\end{table}

\subsection{Mixed-resolution Sampling}

\begin{table}[H]
\centering
\setlength{\tabcolsep}{3pt}
\caption{
Mixed-resolution sampling results under $3\%$ sparse observations.
\emph{DDIS (Full)} performs posterior sampling entirely at $128^2$ resolution, whereas \emph{DDIS (Mixed)} uses a coarse-to-fine pipeline (first half of the annealing at $64^2$, then $128^2$).
Relative $\ell_2$ error (\%), spectral error $E_s$, and runtime per sample are reported.
DDIS (Mixed) achieves comparable accuracy with 30\% lower runtime.
}
\label{table:mixres_inf_performance}
\begin{tabular}{lccccccc}
\toprule
 \multirow{2}{*}{Method} & \multicolumn{2}{c}{Poisson} &
 \multicolumn{2}{c}{Helmholtz} &
 \multicolumn{2}{c}{N-S} &
 \multirow{2}{*}{\makecell{Time \\ (s)}} \\
\cmidrule(lr){2-3}
\cmidrule(lr){4-5}
\cmidrule(lr){6-7}
 & $\ell_2$ & $E_s$
& $\ell_2$ & $E_s$
& $\ell_2$ & $E_s$
&  \\
\midrule

DDIS (Full)         & \textbf{16.36} & { .066} & { 15.19} & { .050} & \textbf{8.22} & \textbf{.163} & 25.92 \\
DDIS (Mixed)
             & 16.78 & \textbf{.059} & \textbf{15.08} & \textbf{.044} & 8.31 & .185 & 16.75 \\
\midrule

DDIS (Full)         & { 15.34} & \textbf{.057} & \textbf{14.03} & \textbf{.053} & \textbf{8.00} & { .174} & 51.83 \\
DDIS (Mixed) & \textbf{14.96} & .080 & 14.24 & .055 & 8.04 & \textbf{.162} & 32.67 \\
\midrule

DDIS (Full)         & 12.99 & { .098} & { 12.28} & \textbf{.079} & {7.31} & \textbf{.188} & 207.83 \\
DDIS (Mixed) & \textbf{12.95} & \textbf{.071} & \textbf{12.20} & .097 & \textbf{7.24} & .200 & 127.42 \\

\bottomrule
\end{tabular}
\end{table}

\section{Related Works}
\label{sec:related}

\input{appendix_related_work}

\section{Preliminaries}
\label{appendix:diffusion_background}

\subsection{Diffusion Model}

The minimal form of diffusion models~\citep{sohl2015deep,ho2020denoising,song2020denoising,song2020score} learns to sample from a prior distribution. 
In the inverse problems we consider, the prior is over the coefficient $p(a)$.
Formally, a diffusion model defines a forward stochastic process $\{a_t\}_{t=0}^T$ with the initial one $ a_0 \coloneqq a $ 
and is governed by:
\begin{align}
    da_t = f(a_t, t)  dt + g(t)  dw,
    \label{eqn:std_forward_process}
\end{align}
where $ dw $ is a Wiener process and $ f,g $ are drift and diffusion coefficients.
Let $ p(a_t) $ be the marginal distribution of $ a_t $.
As $ T \to \infty $, the distribution $ p(a_T) $ converges to an unstructured noise distribution determined by \eqref{eqn:std_forward_process}, irrespective of $ p(a) $. 
Then, by sampling $ a_T $ from noise distribution, e.g., Gaussian noise $\mathcal{N}(0,I)$, and reversing the process:
\begin{align}
da_t = \Big[ f(a_t, t) - g^2(t) \nabla_{a_t} \log p(a_t) \Big] dt + g(t) dw,
\label{eqn:std_reverse_process}
\end{align}
one ultimately obtains samples $ a_0 $ from $ p(a)$.
Here, the score term $ \nabla_{a_t} \log p(a_t) $ is learned by a neural network $s_{\theta}(a_t,t)$ during forward process~\eqref{eqn:std_forward_process} via score-matching \citep{vincent2011connection}.
We define
the diffusion prior as the distribution $p(a_0)$ obtained by
evolving the reverse-time dynamics~\eqref{eqn:std_reverse_process} from a noisy sample at $t=T$ down to $t=0$.

\subsection{Diffusion Posterior Sampling (DPS)}
\label{appendix:dps}

DPS modifies the reverse process \eqref{eqn:std_reverse_process} so that we obtain samples from the posterior $ p(a  \mid  u_\mathrm{obs})$:
\begin{align}
    da_t 
    &= \Big[ f(a_t, t) - g^2(t) \big( \nabla_{a_t} \log p(a_t) + \nabla_{a_t} \log p(u_\mathrm{obs}  \mid  a_t) \big) \Big] dt + g(t) dw_t.  \label{eqn:DPS}
\end{align}
Here, $ \nabla_{a_t} \log p(a_t) $ is obtained via score-matching \citep{vincent2011connection} and $ \nabla_{a_t} \log p(u_\mathrm{obs}  \mid  a_t) $ is the log-likelihood gradient that guides the reverse process toward samples consistent with $ u_\mathrm{obs} $.

However, it is impossible to compute the likelihood by $p(u_\mathrm{obs}  \mid  a_t) = \int p(u_\mathrm{obs}  \mid  a_0) p(a_0  \mid  a_t) da_0$ due to the intractability of $ p(a_0  \mid  a_t) $.
Therefore, DPS takes the approximation:
\begin{align}%
p(u_\mathrm{obs}  \mid  a_t) = \mathbb{E}_{a_0\sim p(\cdot \mid  a_t)}\left[p(u_\mathrm{obs} \mid  a_0)\right]\approx p\big( u_\mathrm{obs}  \mid  a_0 = \hat{a}_0(a_t) \big), \quad \hat{a}_0(a_t) \coloneqq \mathbb{E}[a_0  \mid  a_t].
\label{eqn:dps_approx}
\end{align}

In practice, DPS implements the process \eqref{eqn:DPS} iteratively, effectively constructing a Markov chain $p(a_t  \mid  u_\mathrm{obs}, a_{t+1})$. Each update is therefore incremental, limiting the ability to make large corrective moves toward the posterior. This motivates the Decoupled Annealing Posterior Sampling (DAPS) approach, which avoids the Markov chain formulation and enables direct posterior alignment.

\begin{remark}[Jensen-gap Approximation of DPS.]
\label{remark:DPS_jensen}
The approximation \eqref{eqn:dps_approx} incurs a \emph{Jensen's gap}:
\begin{align*}
\mathbb{E}[f(X)] \neq f(\mathbb{E}[X]),
\end{align*}
where $f(a_0)\coloneqq\log p(u_\mathrm{obs} \mid  a_0)$.
Thus, this approximation is exact only when $p(u_\mathrm{obs} \mid  a_0)$ is affine in $a_0$, and otherwise introduces bias that depends on the distribution of $a_0 \mid  a_t$.
\end{remark}

\subsubsection{Extension: DecoupledDPS}
\label{appendix:ddis_dps}

As an ablation, we consider combining the DDIS insight with the standard DPS update in \eqref{eqn:DPS}, yielding a \emph{DecoupledDPS} variant.
Unlike joint-embedding DPS~\citep{huang2024diffusionpde,yao2025guided}, which defines the likelihood directly on the noisy latent $a_t$, this variant preserves the DDIS separation between the diffusion prior and the physics-induced likelihood.

Concretely, the diffusion prior is defined over coefficients $a$ as in \cref{sec:diffusion_prior}, while the likelihood is specified on the clean variable through the neural operator surrogate:
\begin{align}
p(u_\mathrm{obs} \mid a_0)
\;\propto\;
\exp\!\left(
-\frac{\|M \odot L_\phi(a_0) - u_\mathrm{obs}\|_2^2}{2\beta_y^2}
\right).
\label{eqn:ddis_dps_likelihood}
\end{align}
Following the DPS approximation in \eqref{eqn:dps_approx}, the likelihood gradient in \eqref{eqn:DPS} is instantiated by substituting $a_0 \approx \hat a_0(a_t)$:
\begin{align}
\nabla_{a_t} \log p(u_\mathrm{obs} \mid a_t)
\;\approx\;
\nabla_{a_t}
\log p\!\left(u_\mathrm{obs} \mid a_0 = \hat a_0(a_t)\right),
\label{eqn:ddis_dps_guidance}
\end{align}
where $\hat a_0(a_t)$ is the denoised estimate induced by the diffusion prior.

Combining \cref{eqn:ddis_dps_likelihood,eqn:ddis_dps_guidance,eqn:DPS} yields a DPS-style reverse process whose guidance is mediated by the neural operator rather than a joint-embedding model.
While this decoupled DPS variant consistently improves over FunDPS (\cref{tab:decoupled_dps}), it still inherits the Jensen gap discussed in \cref{remark:DPS_jensen} and is generally worse than DDIS.
This shows that decoupling already improves over joint embeddings, while greater gains arise from DAPS naturally enabled by this decoupled design.
We therefore include decoupled DPS solely as an ablation to isolate this effect.

\begin{table}[t]
\centering
\caption{Ablation of decoupled DPS.
Relative $\ell_2$ error (\%) for FunDPS, DecoupledDPS, and DDIS under 100\% and 1\% paired data.
DecoupledDPS improves over joint embeddings but remains inferior to DDIS.
All methods use $T=16$s.}
\label{tab:decoupled_dps}
\begin{tabular}{c c c c c}
\toprule
Method & Data & Poisson & Helmholtz & Navier--Stokes \\
\midrule
FunDPS 
& 100\% & 20.47 & \underline{17.16} & 8.48 \\
& 1\%   & 35.81 & 41.69 & 13.65 \\
\midrule
DecoupledDPS
& 100\% & \underline{19.23} & 19.24 & \underline{8.37} \\
& 1\%   & \underline{24.25} & \underline{21.68} & \underline{13.10} \\
\midrule
DDIS
& 100\% & \textbf{16.36} & \textbf{15.19} & \textbf{8.22} \\
& 1\%   & \textbf{18.70} & \textbf{16.40} & \textbf{12.05} \\
\bottomrule
\end{tabular}

\end{table}

\subsection{Decoupled Annealing Posterior Sampling (DAPS)}
\label{sec:daps}

The key motivation behind DAPS is that the two components of posterior sampling---the prior update and the likelihood correction---naturally operate on different variables.
The diffusion prior defines a score on the noisy latent variable $a_t$, whereas the likelihood $p(u_\mathrm{obs} \mid a_0)$ is defined on the clean $a_0$.
DPS forces both operations to act on the same noisy $a_t$ and thus requires approximating the intractable $p(a_0 \mid a_t)$.
DAPS avoids the conflict by decoupling the updates: it performs the likelihood correction on a clean estimate $\hat a_0(a_t)$, and then re-noises it to obtain the updated $a_{t-1}$.

Let $p_t(a_t  \mid  u_\mathrm{obs})$ denote the observation-conditioned time-marginal at noise level $t$.
DAPS implements an annealing process with a noise schedule $\{\sigma_t\}$ such that, as $t$ decreases, $p_t(a_t  \mid  u_\mathrm{obs})$ anneals toward the target posterior $p(a_0  \mid  u_\mathrm{obs})$.
At each annealing step, DAPS applies a two-phase sampling based on the current sample $a_t$:
First, it performs Langevin MCMC to approximate
sampling from
\begin{align*}
p(a_0  \mid  a_t, u_\mathrm{obs}) \ \propto\ p(a_0  \mid  a_t)\, p(u_\mathrm{obs} \mid  a_0).
\end{align*}
Initialized with the clean estimate $a_0^{(0)} = \hat a_0(a_t)$ obtained
from the DDPM/DDIM denoiser~\citep{ho2020denoising,song2020denoising},
the Langevin updates take the form:
\begin{align}
\label{eqn:langevin_daps}
a_{0}^{(j+1)}
&= a_{0}^{(j)}
+ \eta\, \nabla_{a_0^{(j)}} \log p(a_0^{(j)}  \mid  a_t) \nonumber \\
&~+ \eta\, \nabla_{a_0^{(j)}} \log p(u_\mathrm{obs} \mid  a_0^{(j)})
+ \sqrt{2\eta}\,\epsilon_j,
\end{align}
where $\epsilon_j \sim \mathcal{N}(0, I)$, $\eta>0$ is step size, and
$p(a_0^{(j)}  \mid  a_t)$ is approximated by a Gaussian distribution with variance $r_t^2$:
\begin{align}
    p(a_0 \mid a_t) \approx \mathcal{N}(a_0; \hat{a}_0(a_t), r_t^2 I).
\end{align}
Second, given $a_0^{(N)}$, DAPS samples the next latent $a_{t-1}$ by re-noising $a_0^{(N)}$ under a Gaussian assumption\footnote{As \citet{zhang2025improving} noted, this Gaussian assumption is optional and adopted for practical efficiency. DAPS also allows exact posterior sampling via diffusion-score estimation at higher computational cost.}:
\begin{align}
\label{eqn:renoise_daps}
a_{t-1} \sim p(a_{t-1} \mid  a_0^{(N)}) \approx \mathcal{N}\big(a_0^{(N)}, \sigma_{t-1}^2 I\big),
\end{align}
which yields $a_{t-1}$ approximately distributed according to $p_{t-1}(a_{t-1} \mid  u_\mathrm{obs})$.

Despite its success in various inverse problems, as benchmarked in \citep{zhang2025improving}, 
DAPS degrades in sparse observation settings, a flaw highlighted by \citep{yao2025guided}:
\begin{remark}[Sparse-Guidance Failure of DAPS]
\label{remark:daps_sparse}
The degradation arises from localized gradient updates during the Langevin dynamics phase. 
When the sparse measurement $x_{\mathrm{obs}}$ is drawn from the same space as the prior variable $x$, 
i.e., $x_{\mathrm{obs}} = M \odot x + \epsilon$,
the likelihood gradient $\nabla_{x_0} \log p(x_{\mathrm{obs}}  \mid  x_0)$ is non-zero only at observed locations.
Consequently, the Langevin process creates a spatially discontinuous intermediate state that is 
out-of-distribution for the pretrained diffusion model.
\cref{sec:daps_failure} offers a quantitative analysis for this degradation.
\end{remark}

\subsubsection{Asymptotic Guarantee for DAPS}
\label{sec:theory_daps}

Our analysis starts with a statistical interpretation of the DAPS that is not explicitly formulated in the original paper~\citep{zhang2025improving}.
In particular, we show that the ideal DAPS update preserves the observation-conditioned time-marginals and converges asymptotically to the target posterior.
In DDIS, the observation model acts in solution space while sampling occurs in coefficient space. The surrogate operator $L_\phi$ is therefore introduced to transport likelihood information across these spaces. When $L_\phi \to L$, DDIS ensures not only physically meaningful but also dense guidance for the two-step update.

Let $p(a_0  \mid  u_{\mathrm{obs}})$ denote the target posterior (\cref{def:posterior}).
DAPS introduces a decreasing noise schedule $\{\sigma_t\}$,
and at noise level $t$ the latent variable $a_t$ follows an
observation-conditioned marginal distribution $p(a_t  \mid  u_{\mathrm{obs}})$.
The ideal DAPS update from $t$ to $t-1$ consists of two phases:
\begin{enumerate}[label=(\roman*)]
\item \textbf{Posterior pullback:}
sample $a_0 \sim p(a_0  \mid  a_t, u_{\mathrm{obs}})$ through Langevin dynamics \eqref{eqn:langevin_daps}.
\item \textbf{Re-noising:}
sample $a_{t-1} \sim \mathcal{N}(a_0, \sigma_{t-1}^2 I)$,
corresponding to \eqref{eqn:renoise_daps}.
\end{enumerate}

\begin{lemma}[Time-Marginal Invariance]
\label{lemma:invariance}
If $a_t \sim p(a_t  \mid  u_{\mathrm{obs}})$, the ideal DAPS
update (i)-(ii) yields
${a_{t-1} \sim p(a_{t-1}  \mid  u_{\mathrm{obs}})}$.
\end{lemma}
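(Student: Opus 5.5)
The plan is to compute the law of $a_{t-1}$ produced by (i)--(ii) by marginalizing over the intermediate variables, and to show that it equals the marginal of $a_{t-1}$ under the joint model $p(a_0, a_t, a_{t-1} \mid u_{\mathrm{obs}})$. The structural input is a conditional-independence convention. In the forward noising model each noisy latent depends on the data only through the clean variable, $a_{t-1} \sim \mathcal{N}(a_0, \sigma_{t-1}^2 I)$, and the observation depends only on $a_0$. Accordingly, we take the joint to factor as
\[
p(a_0, a_t, a_{t-1}, u_{\mathrm{obs}}) = p(a_0)\,p(u_{\mathrm{obs}} \mid a_0)\,p(a_t \mid a_0)\,p(a_{t-1} \mid a_0).
\]
This means $a_t \perp a_{t-1} \mid a_0$, and $u_{\mathrm{obs}} \perp (a_t, a_{t-1}) \mid a_0$. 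This is exactly the "decoupled" noising used in DAPS, where the noisy levels are not required to form a Markov chain among themselves. I will state this as the standing model assumption.

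First, the density of the output is
\[
q(a_{t-1}) = \int\!\!\int p(a_t \mid u_{\mathrm{obs}})\, p(a_0 \mid a_t, u_{\mathrm{obs}})\, \mathcal{N}(a_{t-1}; a_0, \sigma_{t-1}^2 I)\, da_0\, da_t .
\]
The product of the first two factors is $p(a_0, a_t \mid u_{\mathrm{obs}})$, so integrating out $a_t$ leaves $p(a_0 \mid u_{\mathrm{obs}})$. Hence
\[
q(a_{t-1}) = \int p(a_0 \mid u_{\mathrm{obs}})\, p(a_{t-1} \mid a_0)\, da_0 .
\]

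Second, I identify this with $p(a_{t-1} \mid u_{\mathrm{obs}})$. By the factorization, $p(a_{t-1} \mid a_0, u_{\mathrm{obs}}) = p(a_{t-1} \mid a_0)$, which is the Gaussian kernel of (ii). The integral is therefore $\int p(a_0, a_{t-1} \mid u_{\mathrm{obs}})\, da_0 = p(a_{t-1} \mid u_{\mathrm{obs}})$, which is the claim.

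The main point to handle carefully is the middle step. The identity $p(a_t \mid u)\,p(a_0 \mid a_t, u) = p(a_0, a_t \mid u)$ is trivial, but it only yields the correct posterior over $a_0$ if the pullback in (i) samples the \emph{exact} conditional $p(a_0 \mid a_t, u_{\mathrm{obs}}) \propto p(a_0 \mid a_t)\,p(u_{\mathrm{obs}} \mid a_0)$. I would justify this proportionality by Bayes' rule together with $u_{\mathrm{obs}} \perp a_t \mid a_0$. The assumption that the Langevin chain \eqref{eqn:langevin_daps} attains this target (infinitely many steps, exact $p(a_0 \mid a_t)$) is what "ideal" means in the statement, so I would make it explicit rather than prove it.
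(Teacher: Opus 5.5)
Your proposal is correct and is essentially the paper's argument: the paper also writes the law of $a_{t-1}$ as $\iint p(a_{t-1}\mid a_0)\,p(a_0\mid a_t,u_{\mathrm{obs}})\,p(a_t\mid u_{\mathrm{obs}})\,\mathrm{d}a_0\,\mathrm{d}a_t$. It then applies Bayes' rule $p(a_0\mid a_t,u_{\mathrm{obs}})=p(a_t\mid a_0)\,p(a_0\mid u_{\mathrm{obs}})/p(a_t\mid u_{\mathrm{obs}})$ to cancel the outer factor, integrates out $a_t$, and reaches $\int p(a_{t-1}\mid a_0)\,p(a_0\mid u_{\mathrm{obs}})\,\mathrm{d}a_0$, which is your chain-rule step read in the other direction. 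Your explicit factorization spells out the conditional independences the paper uses tacitly ($u_{\mathrm{obs}}\perp a_t\mid a_0$ in the Bayes step, $u_{\mathrm{obs}}\perp a_{t-1}\mid a_0$ in the final identification); the extra assumption $a_t\perp a_{t-1}\mid a_0$ is harmless but unnecessary, since only the pairwise kernels $p(a_s\mid a_0)$ enter the time-marginals.
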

\begin{proof}[Proof of \cref{lemma:invariance}]
The DAPS update consists of:

\noindent
\textbf{Step 1:} Sample $a_0 \sim p(a_0  \mid  a_t, u_{\mathrm{obs}})$,  
\textbf{Step 2:} Sample $a_{t-1} \sim p(a_{t-1}  \mid  a_0)$.

\noindent
We compute the law of $a_{t-1}$ by marginalizing over all randomness:
\begin{align*}
p(a_{t-1}  \mid  u_{\mathrm{obs}})
&=
\iint 
p(a_{t-1}  \mid  a_0)\,
p(a_0  \mid  a_t, u_{\mathrm{obs}})\,
p(a_t  \mid  u_{\mathrm{obs}})\,
\mathrm da_0\,\mathrm da_t.
\end{align*}

Using Bayes' rule for the posterior correction step,
\begin{align*}
p(a_0  \mid  a_t, u_{\mathrm{obs}})
=
\frac{
p(a_t  \mid  a_0)\,p(a_0  \mid  u_{\mathrm{obs}})
}{
p(a_t  \mid  u_{\mathrm{obs}})
},
\end{align*}
we substitute and simplify:
\begin{align*}
p(a_{t-1}  \mid  u_{\mathrm{obs}})
&=
\iint 
p(a_{t-1}  \mid  a_0)\,
\frac{
p(a_t  \mid  a_0)\,p(a_0  \mid  u_{\mathrm{obs}})
}{
p(a_t  \mid  u_{\mathrm{obs}})
}
p(a_t  \mid  u_{\mathrm{obs}})\,
\mathrm da_0\,\mathrm da_t
\\
&=
\iint 
p(a_{t-1}  \mid  a_0)\,
p(a_t  \mid  a_0)\,
p(a_0  \mid  u_{\mathrm{obs}})\,
\mathrm da_0\,\mathrm da_t
\\
&=
\int 
p(a_{t-1}  \mid  a_0)\,
p(a_0  \mid  u_{\mathrm{obs}})\,
\mathrm da_0,
\end{align*}
where the final equality follows from integrating out $a_t$.
This is exactly the time-$(t-1)$ marginal of the observation-conditioned latent variable.
Thus, the two-step update preserves the smoothed posterior marginals $\small{a_{t-1} \sim p(a_{t-1}  \mid  u_{\mathrm{obs}})}$.
\end{proof}

The invariance property leads to an asymptotic guarantee:
\begin{lemma}[Asymptotic Guarantee]
\label{cor:asymptotic}
As $t \to 0$, the distribution of $a_t$ converges in distribution to
the posterior $p(a_0  \mid  u_{\mathrm{obs}})$.
\end{lemma}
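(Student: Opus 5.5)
By \cref{lemma:invariance}, the law of $a_t$ is exactly the smoothed marginal
\[
p(a_t\mid u_{\mathrm{obs}})=\int \mathcal N(a_t;a_0,\sigma_t^2 I)\,p(a_0\mid u_{\mathrm{obs}})\,\mathrm da_0 ,
\]
provided the chain starts from the correct marginal at the largest noise level. At that level, $p(a_T\mid u_{\mathrm{obs}})$ is approximately $\mathcal N(0,\sigma_T^2 I)$, which is what we initialize from. The plan therefore splits into two parts. First, argue that the ideal DAPS chain tracks $p(a_t\mid u_{\mathrm{obs}})$ at every level. Second, show that this smoothed family converges to the posterior as $\sigma_t\to 0$. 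The first part is an induction over the annealing steps, applying \cref{lemma:invariance} once per step. To handle the initialization mismatch, note that the update (i)--(ii) is a Markov kernel, hence a contraction in total variation. So the initialization error $\mathrm{TV}\big(\mathcal N(0,\sigma_T^2 I),\,p(a_T\mid u_{\mathrm{obs}})\big)$ propagates without amplification, and it vanishes as $\sigma_T\to\infty$.

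For the second part, write $a_t = a_0 + \sigma_t \xi$ with $a_0\sim p(\cdot\mid u_{\mathrm{obs}})$ and $\xi\sim\mathcal N(0,I)$ independent; this representation is exactly the convolution above. Then $a_t - a_0=\sigma_t\xi\to 0$ almost surely, and therefore in probability, as $\sigma_t\to 0$. Almost-sure convergence implies convergence in distribution, so $a_t \Rightarrow a_0\sim p(a_0\mid u_{\mathrm{obs}})$. Alternatively, one can argue with characteristic functions: $\mathbb E[e^{i\langle \lambda,a_t\rangle}]=\mathbb E[e^{i\langle\lambda,a_0\rangle}]\,e^{-\sigma_t^2\|\lambda\|^2/2}$, and the Gaussian factor tends to $1$, so Lévy's continuity theorem applies. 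For a quantitative version, the coupling also gives $W_2\big(p(a_t\mid u_{\mathrm{obs}}),p(a_0\mid u_{\mathrm{obs}})\big)\le \sigma_t\sqrt{d}$ in finite dimension $d$.

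The main obstacle is the function-space setting. With white noise $\mathcal N(0,\sigma^2 I)$ the perturbation is not a valid random element of $\mathcal A$, so I would replace $I$ by a trace-class covariance $Q$, as is done for FunDPS-style priors. Then $\sigma_t \xi$ with $\xi\sim\mathcal N(0,Q)$ has $\mathbb E\|\sigma_t\xi\|^2=\sigma_t^2\operatorname{tr}Q\to 0$. This again gives convergence in probability, hence weak convergence on the separable Hilbert space. I would also state the idealization explicitly: Step (i) is assumed exact, corresponding to infinite Langevin steps and an exact likelihood.
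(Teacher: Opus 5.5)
Your proposal is correct and follows the paper's route. The paper gives no argument beyond remarking that the time-marginal invariance of \cref{lemma:invariance} yields the result, and your TV-contraction handling of the initialization mismatch and your coupling $a_t=a_0+\sigma_t\xi$ supply exactly the two steps it leaves implicit: exact tracking of $p(a_t\mid u_{\mathrm{obs}})$ along the schedule, and convergence of these smoothed marginals to the posterior as $\sigma_t\to 0$.

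One caution. Your trace-class remark rescues only the $\sigma_t\to0$ limit, not the initialization step, because with a trace-class $Q$ the measures $\mathcal N(0,\sigma_T^2Q)$ and $p(a_T\mid u_{\mathrm{obs}})$ can be mutually singular for every $\sigma_T$ (Feldman--H\'ajek; for example, a Gaussian posterior with covariance $Q$ gives $\mathcal N(m,(1+\sigma_T^2)Q)$). This does not affect the statement as the paper poses it, since the update there uses $\mathcal N(\cdot,\sigma^2 I)$ and Lebesgue densities, i.e.\ it is effectively finite-dimensional.
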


The DDIS framework generalizes DAPS by backpropagating the gradient $\nabla_{a_0^{(j)}} \log p(u_\mathrm{obs}  \mid  a_0^{(j)})$ in \eqref{eqn:langevin_daps} from the solution space to the latent coefficient space through a surrogate operator $L_\phi$ (\cref{sec:ddis}).
This preserves the statistical structure of the ideal DAPS update while resolving its sparse-guidance failure (\cref{remark:daps_sparse}).
When the surrogate matches the true forward map ($L_\phi = L$), the DAPS transition in coefficient space is recovered, so the asymptotic guarantee continues to hold.

\section{Detailed Derivation of Guidance Attenuation in Joint-Embedding Models}
\label{sec:joint-attenuation}
\input{appendix_guidance_att}

\section{Sample-Complexity Analysis}
\label{sec:data-eff}

We analyze the sample complexity of DDIS compared to joint-embedding diffusion models. 
Let $\mathcal{A}$ and $\mathcal{U}$ denote the coefficient and solution function spaces, and $z \sim p_0(z)$ be base noise. 
We define three hypothesis classes. 
The diffusion prior class is the set of generators mapping noise to coefficient fields,
$\mathcal{P} \coloneqq \{ G_\theta : \mathcal{Z} \to \mathcal{A} \}_{\theta \in \Theta_P}$ with $a = G_\theta(z)$. 
The operator (likelihood) class is the set of neural operators approximating the PDE forward map,
$\mathcal{L} \coloneqq \{ L_\phi : \mathcal{A} \to \mathcal{U} \}_{\phi \in \Theta_L}$ with $u = L_\phi(a)$. 
The joint-embedding class is the set of diffusion generators mapping noise directly to coefficient-solution pairs,
$\mathcal{J} \coloneqq \{ H_\psi : \mathcal{Z} \to \mathcal{A} \times \mathcal{U} \}_{\psi \in \Theta_J}$ with $(a,u) = H_\psi(z)$.
Below, we derive the hypothesis-class inclusion relationships between $\mathcal{P}, \mathcal{L}$, and $\mathcal{J}$.
\subsection{Hypothesis Class of Joint and Decoupled Models}
\label{sec:class-inclusion-relation}

A joint-embedding generative model aims to represent the joint distribution $p(a,u)$. 
Since $p(a,u)=p(a)\,p(u\mid a)$, modeling the joint is sufficient to model both the prior $p(a)$ and the likelihood $p(u\mid a)$. 
Thus, for every prior generator in $\mathcal{P}$ and every operator surrogate in $\mathcal{L}$, there must exist a corresponding joint model in $\mathcal{J}$ whose marginals and conditionals match them. 
This induces the hypothesis-class inclusion relationships between $\mathcal{P}$, $\mathcal{L}$, and $\mathcal{J}$, elaborated in the following lemmas:

\begin{lemma}[Joint models subsume prior generators]
\label{lemma:prior_inclusion}
$$\mathcal{P} \subseteq \mathcal{J}.$$
\end{lemma}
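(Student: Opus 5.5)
The inclusion $\mathcal{P}\subseteq\mathcal{J}$ has to be read modulo the natural identification of a prior generator with a joint generator whose $a$-marginal it reproduces, since elements of $\mathcal{P}$ map into $\mathcal{A}$ while elements of $\mathcal{J}$ map into $\mathcal{A}\times\mathcal{U}$. The plan is therefore to exhibit an embedding $\iota:\mathcal{P}\to\mathcal{J}$ with $\pi_{\mathcal{A}}\circ\iota(G_\theta)=G_\theta$, where $\pi_{\mathcal{A}}(a,u)=a$ is the coordinate projection. Then the set of coefficient distributions $\{(G_\theta)_\#p_0\}$ realizable by $\mathcal{P}$ is contained in the set of $a$-marginals $\{(\pi_{\mathcal{A}}\circ H_\psi)_\#p_0\}$ realizable by $\mathcal{J}$. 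This is the sense in which the subsequent complexity comparison $d_J\geq d_P$ in \cref{tab:ddis_vs_joint} uses the lemma, so I will state this convention explicitly at the start of the proof.

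For the construction, given $G_\theta\in\mathcal{P}$, I define
\[
H(z) \coloneqq \bigl(G_\theta(z),\,L^*(G_\theta(z))\bigr),
\]
or, if one wants to stay inside the learned classes, $\bigl(G_\theta(z),\,L_\phi(G_\theta(z))\bigr)$ for an arbitrary fixed $L_\phi\in\mathcal{L}$. Its $a$-marginal is $(G_\theta)_\#p_0$ by construction, and its pushforward distribution is supported on the graph of the forward map. This is exactly the factorization $p(a,u)=p(a)\,p(u\mid a)$ invoked before the lemma, with $p(u\mid a)$ a Dirac mass. Under a stochastic observation model, one can instead take $u=L(a)+\varepsilon$ with independent noise drawn from an augmented latent $z=(z_a,z_\varepsilon)$.

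The key step, and the main obstacle, is showing that this $H$ actually lies in $\Theta_J$, i.e.\ that the joint parametrization is expressive enough. I would handle this through an explicit architectural assumption, stated as the standing convention for the section: the joint-embedding class is closed under composition with, and concatenation of, operator-class maps. Concretely, $\mathcal{J}\supseteq\{z\mapsto (G(z),L(G(z))):G\in\mathcal{P},L\in\mathcal{L}\}$. This is justified because $\mathcal{J}$ is defined as the class of generators capable of representing every admissible $p(a,u)$, so each such pair must be realized.

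A lighter alternative, if one wants the statement without relying on $\mathcal{L}$, is to pad with a zero $u$-channel, $H(z)=(G_\theta(z),0)$. This requires only that $\mathcal{J}$ contains output-channel extensions of $\mathcal{P}$, which holds for the standard shared backbone with an extra output head. Given either assumption, the lemma follows immediately by projecting, and I would add a remark that the containment is strict in general, since elements of $\mathcal{J}$ must also encode the $a$–$u$ coupling.
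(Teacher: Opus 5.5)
Your argument is correct at the level of rigor the paper uses, and your ``lighter alternative'' is precisely the paper's proof. The paper fixes an arbitrary $u_0\in\mathcal{U}$, sets $H_\psi(z)=(G_\theta(z),u_0)$, and asserts $H_\psi\in\mathcal{J}$ with $a$-marginal $G_\theta$. Your zero padding is the case $u_0=0$.

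Your primary construction $H(z)=(G_\theta(z),L_\phi(G_\theta(z)))$ is instead the one the paper uses for \cref{lemma:likelihood_inclusion}. The two routes differ in what must be assumed about $\Theta_J$:
\begin{itemize}
\item Constant padding only needs $\mathcal{J}$ to append a constant output channel, and it does not involve $\mathcal{L}$ at all.
\item The graph construction needs $\mathcal{J}$ to be closed under composition with $\mathcal{L}$. The $L^*$ version needs $\mathcal{J}$ to realize the true forward map, which no hypothesis class guarantees, so keep only the $L_\phi$ version.
\end{itemize}
What the graph construction buys is a single joint model whose $a$-marginal is $G_\theta$ and whose conditional is $\delta_{L_\phi(a)}$. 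That is a literal instance of the factorization $p(a,u)=p(a)\,p(u\mid a)$ the paper invokes to motivate both inclusions.

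Your explicit projection convention and architectural hypothesis make precise what the paper leaves implicit: the literal inclusion is type-mismatched, and the paper asserts membership in $\mathcal{J}$ rather than deriving it. However, ``$\mathcal{J}$ is the class capable of representing every admissible $p(a,u)$'' is the paper's motivating heuristic, not its formal definition $\{H_\psi\}_{\psi\in\Theta_J}$. It must therefore stay a stated assumption, not serve as a justification.

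Drop the two side remarks. The stochastic variant with latent $(z_a,z_\varepsilon)$ changes $\mathcal{Z}$, so the resulting generator is not in $\mathcal{J}$ as defined. The claim of strict containment is neither proved nor needed for the lemma.
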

\begin{proof}
Any prior generator $G_\theta \in \mathcal{P}$ maps noise $z$ to coefficients $a \in \mathcal{A}$.
A joint model can always reproduce this behavior by ignoring the solution component.
For any fixed $u_0 \in \mathcal{U}$, define
\begin{align*}
H_\psi(z) = (G_\theta(z),\, u_0).
\end{align*}
Then $H_\psi \in \mathcal{J}$, and its marginal matches $G_\theta$, i.e., every prior generator is realizable within $\mathcal{P}$.
\end{proof}

\begin{lemma}[Joint models subsume operator surrogates]
\label{lemma:likelihood_inclusion}
$$\mathcal{L} \subseteq \mathcal{J}.$$
\end{lemma}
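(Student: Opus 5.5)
The plan mirrors the proof of \cref{lemma:prior_inclusion}: from an operator surrogate $L_\phi \in \mathcal{L}$ we build a joint generator $H_\psi \in \mathcal{J}$ that reproduces it. The inclusion has to be read in the same representational sense as before. A joint model ``contains'' an operator if it can realize the deterministic conditional $p(u \mid a) = \delta_{L_\phi(a)}(u)$, the conditional that $L_\phi$ induces, for whatever coefficient law it is paired with. This is exactly the factorization $p(a,u) = p(a)\,p(u\mid a)$ invoked at the start of \cref{sec:class-inclusion-relation}.

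\textbf{Construction.} Fix an arbitrary prior generator $G_\theta \in \mathcal{P}$, or any fixed reference generator of coefficients. Define the composed joint generator
\begin{align*}
H_\psi(z) \coloneqq \big(G_\theta(z),\, L_\phi(G_\theta(z))\big).
\end{align*}
It maps $\mathcal{Z} \to \mathcal{A}\times\mathcal{U}$, so it is a member of $\mathcal{J}$, provided $\mathcal{J}$ is rich enough to contain compositions of a coefficient generator with an operator network. I will state this closure assumption explicitly. It is the same kind of architectural assumption already used implicitly in \cref{lemma:prior_inclusion}, where $\mathcal{J}$ was assumed to contain $(G_\theta(z),u_0)$.

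\textbf{Verification.} The pushforward of $p_0$ under $H_\psi$ is supported on the graph $\{(a, L_\phi(a))\}$. Its $a$-marginal is the law of $G_\theta(z)$, and its conditional of $u$ given $a$ is $\delta_{L_\phi(a)}$. Consequently, the solution component that the joint model assigns to any coefficient $a$ coincides with $L_\phi(a)$, so $L_\phi$ can be read off from $H_\psi$. The map $L_\phi \mapsto H_\psi$ is injective on the support of the prior, which gives the identification $\mathcal{L} \hookrightarrow \mathcal{J}$ used in the complexity comparison $d_J \ge d_L$.

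\textbf{Main obstacle.} The delicate step is making ``$\subseteq$'' meaningful, because $\mathcal{L}$ and $\mathcal{J}$ consist of maps with different domains and codomains. I will resolve this by interpreting the inclusion as an injective embedding of hypothesis classes that preserves the induced conditional. I will also note that capacity measures such as pseudo-dimension are monotone under this embedding. A secondary caveat is that the embedding recovers $L_\phi$ only on the support of the coefficient marginal. Choosing $G_\theta$ with full support, or simply restricting attention to the relevant prior support, suffices for the downstream bounds.
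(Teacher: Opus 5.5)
Your proposal is correct and follows the paper's proof: the paper likewise fixes a base coefficient generator $G_{\theta_0}$ and realizes $L_\phi$ in $\mathcal{J}$ through the composed generator $H_\psi(z) = \bigl(G_{\theta_0}(z),\, L_\phi(G_{\theta_0}(z))\bigr)$. Your explicit closure assumption on $\mathcal{J}$ and your reading of ``$\subseteq$'' as an injective embedding (up to agreement on the support of the coefficient marginal) make precise what the paper asserts without justification when it writes $H_\psi \in \mathcal{J}$.
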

\begin{proof}
Any operator surrogate $L_\phi \in \mathcal{L}$ maps coefficients $a$ to solutions $u$.
Since every joint model $H_\psi \in \mathcal{J}$ generates pairs $(a,u)$ together,
it must be expressive enough to reproduce the mappings.
Choose any base coefficient generator $G_{\theta_0}$ and define
\begin{align*}
H_\psi(z) = \bigl(G_{\theta_0}(z),\, L_\phi(G_{\theta_0}(z))\bigr).
\end{align*}
Then $H_\psi \in \mathcal{J}$ with output
$u = L_\phi(a)$, i.e., every operator surrogate is realizable within $\mathcal{J}$.
\end{proof}

\begin{lemma}[Joint class inclusion and complexity]
\label{lemma:inclusion_complexity}
Let $d_P$, $d_L$, and $d_J$ denote the Rademacher complexities of the prior, operator, and joint hypothesis classes, respectively.
Then
\begin{align}
d_J \ge \max(d_P,d_L).
\end{align}
\end{lemma}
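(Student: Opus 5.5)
The plan is to derive the inequality from the two inclusions already established, \cref{lemma:prior_inclusion} ($\mathcal{P}\subseteq\mathcal{J}$) and \cref{lemma:likelihood_inclusion} ($\mathcal{L}\subseteq\mathcal{J}$), together with the monotonicity of Rademacher complexity under class inclusion. For a fixed sample $S=(z_1,\dots,z_n)$ and a class $\mathcal{F}$, the empirical Rademacher complexity is
\begin{align*}
\hat{\mathfrak{R}}_S(\mathcal{F})=\mathbb{E}_{\sigma}\Big[\sup_{f\in\mathcal{F}}\frac1n\sum_{i=1}^n\sigma_i\,\ell(f,z_i)\Big].
\end{align*}
If $\mathcal{F}\subseteq\mathcal{F}'$, then the supremum over $\mathcal{F}'$ dominates the supremum over $\mathcal{F}$ pointwise in $\sigma$. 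Taking expectations over $\sigma$ and then over $S$ gives $\mathfrak{R}_n(\mathcal{F})\le\mathfrak{R}_n(\mathcal{F}')$.

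\textbf{Fixing a common output space.} Before applying monotonicity, I will place all three classes on a common domain and evaluation functional. The inclusions in \cref{lemma:prior_inclusion,lemma:likelihood_inclusion} are realized through embeddings:
\begin{itemize}
\item $G_\theta\mapsto H(z)=(G_\theta(z),u_0)$ for the prior class;
\item $L_\phi\mapsto H(z)=\bigl(G_{\theta_0}(z),L_\phi(G_{\theta_0}(z))\bigr)$ for the operator class.
\end{itemize}
I will define $d_P$ and $d_L$ as the Rademacher complexities of the images of $\mathcal{P}$ and $\mathcal{L}$ under these embeddings. Each is measured through the loss or coordinate functional that reads off the relevant component: the $a$-component for $\mathcal{P}$, and the $u$-component, indexed by the input $a=G_{\theta_0}(z)$, for $\mathcal{L}$. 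The joint class is measured with a loss that dominates each of these component losses, for example a sum of componentwise losses. The monotonicity argument also needs that loss to be nonnegative or Lipschitz; where it is Lipschitz, Talagrand's contraction lemma handles the composition.

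\textbf{Concluding.} With these conventions, monotonicity gives $d_P\le d_J$ and $d_L\le d_J$, hence $d_J\ge\max(d_P,d_L)$.

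\textbf{Main obstacle.} The real difficulty is the bookkeeping just described. Rademacher complexity is defined relative to a sample domain and a real-valued function class. $\mathcal{L}$ naturally takes inputs in $\mathcal{A}$, while $\mathcal{J}$ takes inputs in $\mathcal{Z}$, so the comparison is only valid after composing with a fixed $G_{\theta_0}$ and inducing the sample distribution $a=G_{\theta_0}(z)$. I will state this convention explicitly. I will also note that adding the $u_0$ or $a$ coordinate does not decrease the supremum when the joint loss is a sum of componentwise losses: the extra term is constant across the subclass, so it cancels in expectation since $\mathbb{E}\sigma_i=0$.
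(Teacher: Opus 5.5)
Your proposal is correct and follows the paper's route: the inclusions $\mathcal{P}\subseteq\mathcal{J}$ and $\mathcal{L}\subseteq\mathcal{J}$ combined with monotonicity of Rademacher complexity under inclusion (the paper phrases this via $\mathfrak{R}(\mathcal{P}\cup\mathcal{L})$). Your explicit embeddings and sum-of-componentwise-losses convention, where the term that is constant over the subclass vanishes because $\mathbb{E}\sigma_i=0$, make rigorous what the paper's one-line proof takes for granted. The appeal to nonnegativity, Lipschitzness, or Talagrand's lemma is superfluous, since monotonicity under inclusion needs no property of the loss; likewise it is the cancellation of the constant term that does the work, not any pointwise domination of losses.
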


\begin{proof}
By Lemma~\ref{lemma:prior_inclusion} we have $\mathcal{P} \subseteq \mathcal{J}$, and by Lemma~\ref{lemma:likelihood_inclusion} we have $\mathcal{L} \subseteq \mathcal{J}$.
Hence $\mathcal{P} \cup \mathcal{L} \subseteq \mathcal{J}$.
Rademacher complexity is monotone under inclusion, so
\begin{align*}
d_J = \mathfrak{R}(\mathcal{J})
\;\ge\;
\mathfrak{R}(\mathcal{P} \cup \mathcal{L})
\;\ge\;
\max\bigl(\mathfrak{R}(\mathcal{P}),\,\mathfrak{R}(\mathcal{L})\bigr)
=
\max(d_P,d_L).
\end{align*}
\vspace{-2em}
\end{proof}

\subsection{Generalization Bound}
We adopt Rademacher complexities to derive generalization bounds of the hypothesis class:
\begin{lemma}[Rademacher generalization bound, Ch.~26 of \cite{shalev2014understanding}]
\label{lemma:rad_bound}
Let $\mathcal{H}$ be a hypothesis class with Rademacher complexity $d_H$, and
let $\hat{h}$ denote an empirical risk minimizer trained on $n$ i.i.d.\ samples.
Then its expected excess risk satisfies
\begin{align}
\mathbb{E}[\ell(\hat{h})]
-
\min_{h \in \mathcal{H}} \mathbb{E}[\ell(h)]
=
\tilde{\mathcal{O}}
\!\left(
\sqrt{\frac{d_H}{n}}
\right).
\end{align}
\end{lemma}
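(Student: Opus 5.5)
The plan is the standard uniform-convergence argument via symmetrization, specialized to empirical risk minimization. First I fix conventions. Let $\ell(h)=\mathbb{E}_{z}[\ell(h,z)]$ be the population risk, $\hat\ell_n(h)=\frac1n\sum_{i=1}^n \ell(h,z_i)$ the empirical risk, and $h^\star\in\arg\min_{h\in\mathcal H}\ell(h)$. I assume the loss is bounded in $[0,B]$, or at least sub-Gaussian, which is implicit in the cited chapter of \cite{shalev2014understanding}. The statement speaks of a ``Rademacher complexity $d_H$'' that appears as $\sqrt{d_H/n}$. I read $d_H$ as a capacity parameter (VC/pseudo-dimension, or a normalized Rademacher constant) with
\begin{align*}
\mathfrak R_n(\ell\circ\mathcal H)\;\lesssim\;\sqrt{d_H/n}\quad\text{up to logarithmic factors.}
\end{align*}
Making this identification explicit is part of the proof; it is what the $\tilde{\mathcal O}$ absorbs.

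The core step is the decomposition of the excess risk:
\begin{align*}
\ell(\hat h)-\ell(h^\star)
=\bigl[\ell(\hat h)-\hat\ell_n(\hat h)\bigr]+\bigl[\hat\ell_n(\hat h)-\hat\ell_n(h^\star)\bigr]+\bigl[\hat\ell_n(h^\star)-\ell(h^\star)\bigr].
\end{align*}
The middle term is $\le 0$ because $\hat h$ is an empirical risk minimizer. The third term has zero mean, since $h^\star$ is fixed independently of the sample. Taking expectations therefore gives
\begin{align*}
\mathbb{E}[\ell(\hat h)]-\ell(h^\star)\le \mathbb{E}\sup_{h\in\mathcal H}\bigl(\ell(h)-\hat\ell_n(h)\bigr).
\end{align*}
Next I symmetrize. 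Introduce an independent ghost sample $z'_1,\dots,z'_n$ and i.i.d.\ Rademacher signs $\sigma_i$. Using Jensen's inequality and exchangeability of $z_i$ and $z'_i$, the supremum of the deviation is bounded by $2\,\mathfrak R_n(\ell\circ\mathcal H)$, where
\begin{align*}
\mathfrak R_n(\ell\circ\mathcal H)=\mathbb{E}\sup_{h}\frac1n\sum_i\sigma_i\ell(h,z_i).
\end{align*}
If the loss is $\rho$-Lipschitz in the prediction, Talagrand's contraction lemma then transfers this to $2\rho\,\mathfrak R_n(\mathcal H)$. This matters because it lets $d_H$ refer to the hypothesis class itself, consistent with how $d_P,d_L,d_J$ are used in \cref{lemma:inclusion_complexity}.

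Finally I convert $\mathfrak R_n$ into $\sqrt{d_H/n}$. For finite-dimensional or VC-type classes I would use Massart's finite-class lemma combined with Sauer--Shelah, which gives $\mathfrak R_n\le\sqrt{2 d_H\log(en/d_H)/n}$. For parametric neural classes I would instead use Dudley's entropy integral with covering numbers $\log N(\epsilon)\lesssim d_H\log(1/\epsilon)$, which yields the same rate up to logs. If a high-probability statement is wanted in place of the expectation, McDiarmid's inequality adds $B\sqrt{\log(1/\delta)/n}$, which is also absorbed in $\tilde{\mathcal O}$.

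I expect the main obstacle to be this last identification, not the symmetrization itself. The paper treats $d_H$ loosely as a ``Rademacher complexity'' while the bound is written as $\sqrt{d_H/n}$. A diffusion generator class $\mathcal J$ is also not a standard bounded-loss predictor class. I would therefore state explicitly the assumptions that the training loss (denoising or regression) is bounded and Lipschitz, and that each class has metric entropy scaling linearly in $d_H$. Under those assumptions the chain of inequalities above goes through, and the lemma follows in the form used for \cref{tab:ddis_vs_joint}.
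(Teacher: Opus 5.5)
Your proposal is correct and is essentially the argument the paper relies on: the paper gives no proof of its own and simply cites Ch.~26 of \cite{shalev2014understanding}, whose ERM decomposition plus symmetrization bound $\mathbb{E}[\ell(\hat h)]-\ell(h^\star)\le 2\,\mathfrak R_n(\ell\circ\mathcal H)$, followed by Massart/Sauer--Shelah or covering-number estimates, is what you reconstruct. Reading $d_H$ as a dimension-type capacity with $\mathfrak R_n\lesssim\sqrt{d_H/n}$ is the right fix---if $d_H$ were literally the Rademacher complexity, the textbook bound would be $\mathcal{O}(d_H)$ rather than $\sqrt{d_H/n}$---and this reading stays consistent with the monotonicity under inclusion used in \cref{lemma:inclusion_complexity}.
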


Then, based on \cref{lemma:rad_bound}, we evaluate the excess risk of DDIS and joint-embedding methods.

\begin{proposition}[Sample complexity of DDIS]
\label{prop:ddis_bound}
Given $n_u$ unpaired samples and $n_p$ paired samples, the DDIS estimation error satisfies
\begin{align}
\mathrm{err}_{\mathrm{DDIS}}
=
\tilde{\mathcal{O}}
\left(
\sqrt{\frac{d_L}{n_p}}
+
\sqrt{\frac{d_P}{n_p+n_u}}
\right)
&\xrightarrow[{n_u \to \infty}]{}
\tilde{\mathcal{O}}
\left(
\sqrt{\frac{d_L}{n_p}}
\right).
\end{align}
\end{proposition}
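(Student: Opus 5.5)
The plan is to decompose the DDIS estimation error into a prior-learning error and an operator-learning error, and then bound each term separately with \cref{lemma:rad_bound}.

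\textbf{Step 1 (error decomposition).} Define $\mathrm{err}_{\mathrm{DDIS}}$ as the excess risk of the learned pair $(\hat G_\theta,\hat L_\phi)$ relative to the best pair in $\mathcal{P}\times\mathcal{L}$. Because DDIS trains the two components with separate objectives ($\mathcal{L}_\text{prior}$ and $\mathcal{L}_\text{operator}$), the total risk splits additively:
\begin{align*}
\mathrm{err}_{\mathrm{DDIS}} \le \mathrm{err}_{\mathcal{P}}(\hat G_\theta) + \mathrm{err}_{\mathcal{L}}(\hat L_\phi).
\end{align*}
This inequality needs a justification at the level of the induced joint model $(a, L_\phi(a))$ with $a=G_\theta(z)$. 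I would argue that, under a Lipschitz assumption on the downstream loss in both the $a$-marginal and the conditional $u\mid a$, a triangle inequality gives the split, possibly up to constants absorbed into $\tilde{\mathcal{O}}$.

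\textbf{Step 2 (prior term).} The diffusion prior is trained on coefficient samples only, so it can use both the $n_u$ unpaired coefficients and the $a$-components of the $n_p$ paired samples. That gives $n_p+n_u$ i.i.d.\ draws from $p(a)$. Applying \cref{lemma:rad_bound} with $\mathcal{H}=\mathcal{P}$ and $n=n_p+n_u$ yields $\tilde{\mathcal{O}}(\sqrt{d_P/(n_p+n_u)})$.

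\textbf{Step 3 (operator term).} The operator $L_\phi$ is trained by regression on the $n_p$ paired samples, so \cref{lemma:rad_bound} with $\mathcal{H}=\mathcal{L}$ and $n=n_p$ gives $\tilde{\mathcal{O}}(\sqrt{d_L/n_p})$. Adding this to Step 2 gives the first equality. For the limit, fix $n_p$, $d_P$, $d_L$ and let $n_u\to\infty$; then $\sqrt{d_P/(n_p+n_u)}\to 0$ and only $\tilde{\mathcal{O}}(\sqrt{d_L/n_p})$ survives.

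The main obstacle is Step 1, making the additive decomposition rigorous. The error of the prior is measured on the $a$-marginal, but the operator error must be evaluated under the distribution of $a$ that the learned prior produces, not only under the training distribution. My first approach would be to measure operator error under the true $p(a)$ and bound the resulting cross term by the prior error, times a Lipschitz constant of the operator loss class. That cross term is of higher order or absorbed into the prior term, so the stated rate is unchanged.
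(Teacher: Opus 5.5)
Your proposal matches the paper's proof, which consists of your Steps 2 and 3: apply \cref{lemma:rad_bound} to $\mathcal{P}$ with $n_p+n_u$ samples and to $\mathcal{L}$ with $n_p$ samples, then add the two bounds. The paper never defines $\mathrm{err}_{\mathrm{DDIS}}$ and simply assumes the additive split of your Step~1, so your cross-term discussion goes beyond what the paper provides. One caution there: bounding that cross term needs the prior error expressed as a distance between the learned coefficient distribution and $p(a)$ (e.g.\ Wasserstein or total variation). The Rademacher excess-risk bound on the prior's training loss does not directly give such a distance, so ``the stated rate is unchanged'' would need an additional argument.
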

\begin{proof}
Apply \cref{lemma:rad_bound} to $\mathcal{P}$ using $n_p+n_u$ samples, 
and to $\mathcal{L}$ using $n_p$ paired samples, then combine both.
\end{proof}

\begin{proposition}[Sample complexity of joint-embedding methods]
\label{prop:joint_bound}
Joint-embedding models trained on $n_p$ paired samples satisfy
\begin{align}
\mathrm{err}_{\mathrm{joint}}
=
\tilde{\mathcal{O}}
\left(
\sqrt{\frac{d_J}{n_p}}
\right).
\end{align}
\end{proposition}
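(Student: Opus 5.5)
The plan is to apply \cref{lemma:rad_bound} directly to the joint hypothesis class $\mathcal{J}$, in the same way \cref{prop:ddis_bound} applies it to $\mathcal{P}$ and $\mathcal{L}$. The one structural difference is the data that can be used for training.

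\textbf{Step 1: identify the usable sample size.} A joint-embedding model $H_\psi\in\mathcal{J}$ is trained to fit the joint law $p(a,u)$. Its training loss is denoising score matching on the concatenated variable $x=(a,u)$, so every training example must be a full pair $(a,u)$. Unpaired coefficient samples carry no $u$-component and cannot be inserted into this objective without changing the model class. They could be handled by masking or imputation, but that is not the joint-embedding estimator considered in \cref{sec:joint-attenuation-short}. The effective i.i.d. sample size is therefore $n_p$, not $n_p+n_u$.

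\textbf{Step 2: instantiate the generalization bound.} Take the loss $\ell$ to be the (bounded, or truncated to be bounded) score-matching or reconstruction loss on $\mathcal{A}\times\mathcal{U}$, and let $\hat h$ be the empirical risk minimizer over $\mathcal{J}$ on the $n_p$ pairs. \cref{lemma:rad_bound} with $\mathcal{H}=\mathcal{J}$, $d_H=d_J$ and $n=n_p$ gives
\begin{align*}
\mathrm{err}_{\mathrm{joint}} \;=\; \mathbb{E}[\ell(\hat h)]-\min_{h\in\mathcal{J}}\mathbb{E}[\ell(h)] \;=\; \tilde{\mathcal{O}}\!\left(\sqrt{\frac{d_J}{n_p}}\right).
\end{align*}
This is the claimed statement.

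\textbf{Step 3: compare with the decoupled bound.} Combining the result with \cref{lemma:inclusion_complexity} ($d_J\ge\max(d_P,d_L)$) shows that the joint bound dominates the operator term $\sqrt{d_L/n_p}$ of \cref{prop:ddis_bound}. The joint bound also cannot exploit $n_u$. This comparison is not needed for the proposition itself.

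\textbf{Main obstacle.} The substantive point is Step 1: justifying that the estimator's sample complexity is governed by $n_p$ alone. The argument I would use is that the loss is evaluated on joint samples, so unpaired data contributes no unbiased term to the empirical risk. A secondary technical issue is that \cref{lemma:rad_bound} requires a bounded (or Lipschitz-composed) loss. I would assume score-matching losses are truncated, or that the networks have bounded outputs, and absorb the resulting constants and log factors into the $\tilde{\mathcal{O}}$.
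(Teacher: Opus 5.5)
Your proposal is correct and follows the paper's route. The paper gives no separate proof; the statement is obtained as in \cref{prop:ddis_bound}, by applying \cref{lemma:rad_bound} to the joint class $\mathcal{J}$ with complexity $d_J$ and only the $n_p$ paired samples, and your arguments that unpaired coefficients cannot enter the joint objective and that the loss must be bounded make explicit what the paper leaves implicit.
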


\begin{remark}[Comparison of sample-complexity guarantees]
Combining \cref{prop:ddis_bound} and \cref{prop:joint_bound} with
\cref{lemma:inclusion_complexity}, in the imbalanced regime $n_u \gg n_p$
the DDIS bound scales as $\tilde{\mathcal{O}}(\sqrt{d_L/n_p})$, whereas the
joint-embedding bound scales as $\tilde{\mathcal{O}}(\sqrt{d_J/n_p})$ with
$d_J \ge \max(d_P,d_L)$.
Thus DDIS yields a strictly more favorable upper bound
than joint-embedding models with scarce paired data.
\end{remark}

\section{Proofs of the Failure Modes of Joint Embeddings + DAPS}
\input{appendix_daps_failure.tex}

\clearpage
\section{Experimental Setup}
\label{sec:exp-setup}
\input{appendix_exp_setup}

\section{Qualitative Results}

In this section, we present qualitative results for forward and inverse problems, providing a visual comparison of representative methods to supplement the quantitative analysis in \cref{table:performance}.

\subsection{Inverse Poisson problem}
\begin{figure}[h]
    \centering
    \includegraphics[width=1.0\linewidth]{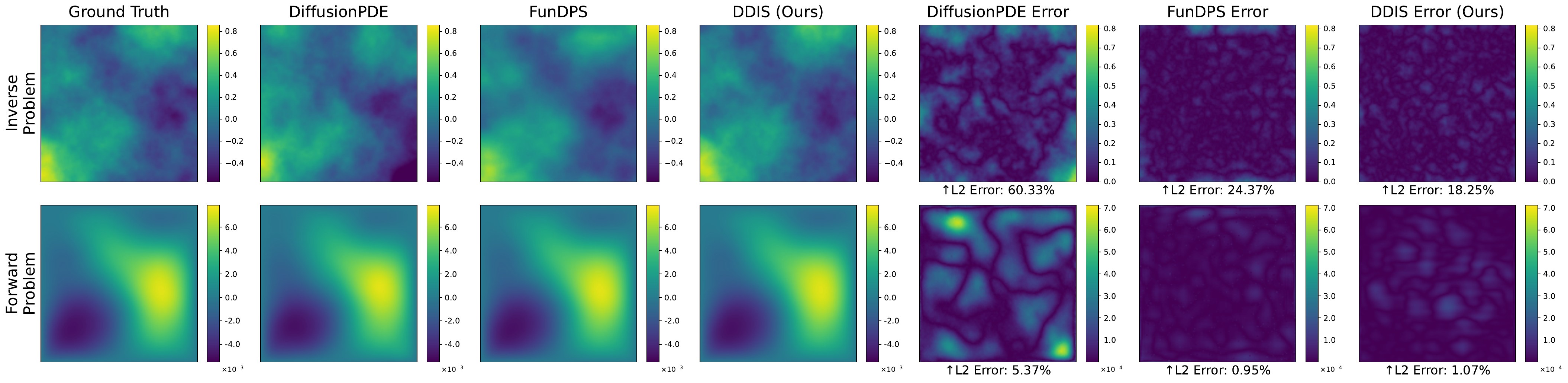} 
    \vspace{-1.5em}
    \caption{Qualitative comparison of results for the inverse Poisson problem (Time Budget: 16 s).}
    \vspace{-0.5em}
\end{figure}

\begin{figure}[h]
    \centering
    \includegraphics[width=1.0\linewidth]{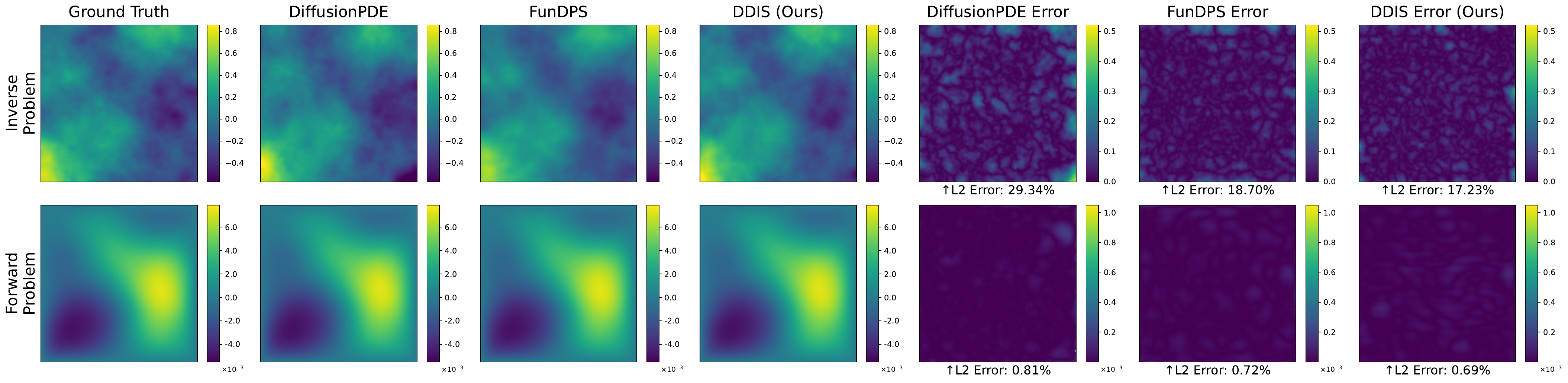} 
    \vspace{-1.5em}
    \caption{Qualitative comparison of results for the inverse Poisson problem (Time Budget: 32 s).}
    \vspace{-0.5em}
\end{figure}

\begin{figure}[!h]
    \centering
    \includegraphics[width=1.0\linewidth]{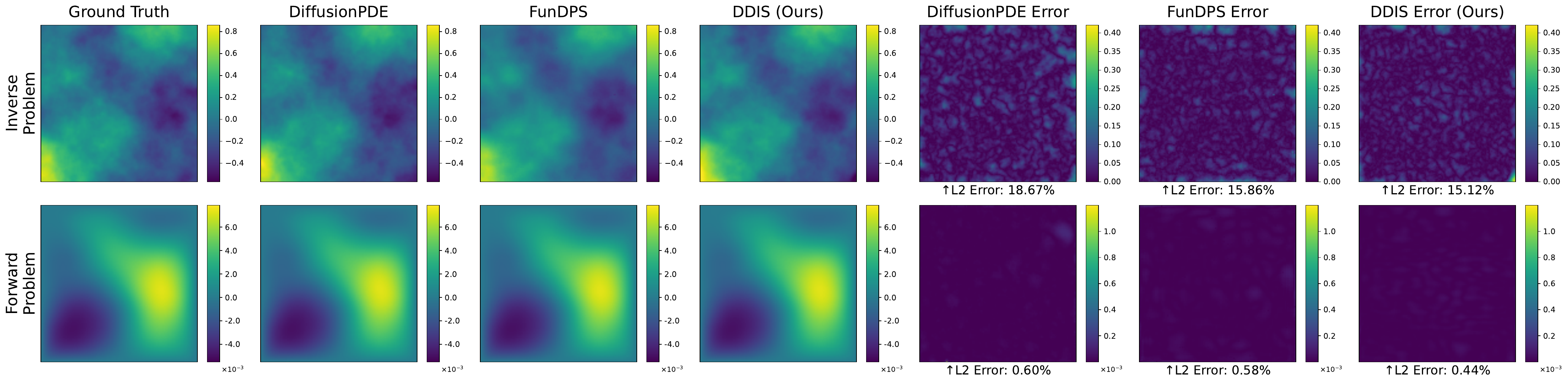} 
    \vspace{-1.5em}
    \caption{Qualitative comparison of results for the inverse Poisson problem (Time Budget: 128 s).}
    \vspace{-0.5em}
\end{figure}

\clearpage
\subsection{Inverse Helmholtz problem}
\begin{figure}[h]
    \centering
    \includegraphics[width=1.0\linewidth]{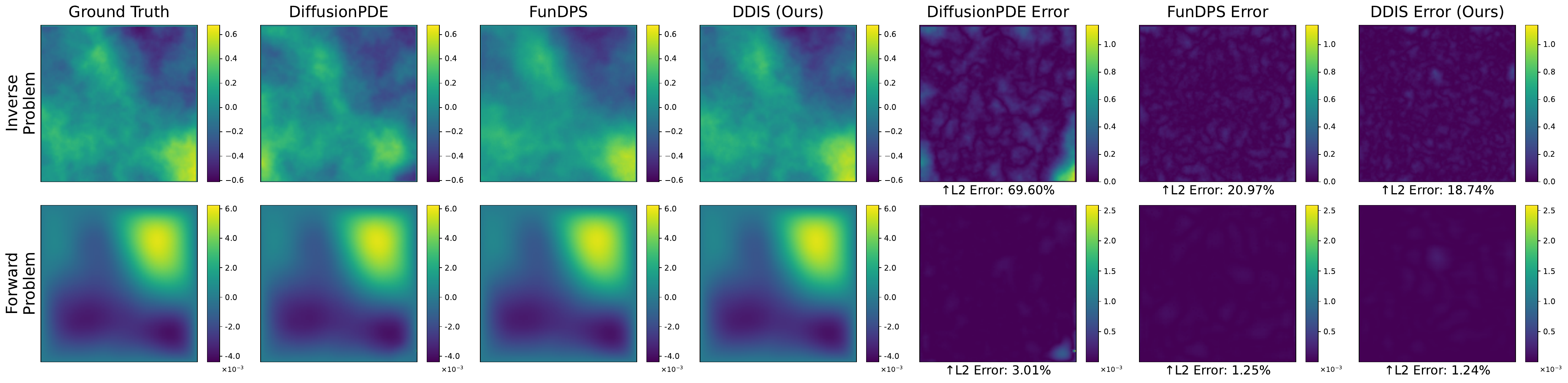} 
    \caption{Qualitative comparison of results for the inverse Helmholtz problem (Time Budget: 16 s).}
\end{figure}

\begin{figure}[h]
    \centering
    \includegraphics[width=1.0\linewidth]{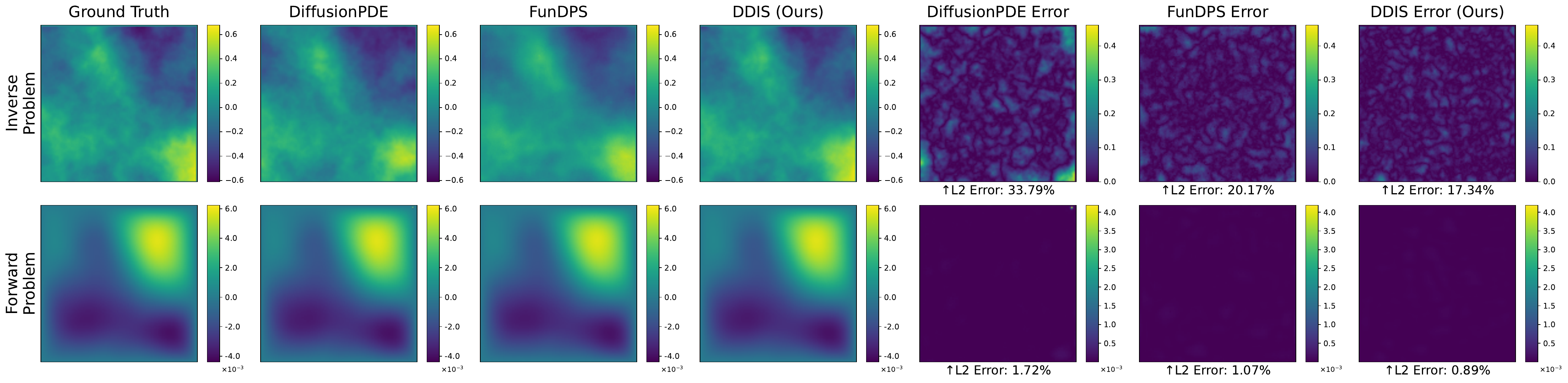} 
    \caption{Qualitative comparison of results for the inverse Helmholtz problem (Time Budget: 32 s).}
\end{figure}

\begin{figure}[h]
    \centering
    \includegraphics[width=1.0\linewidth]{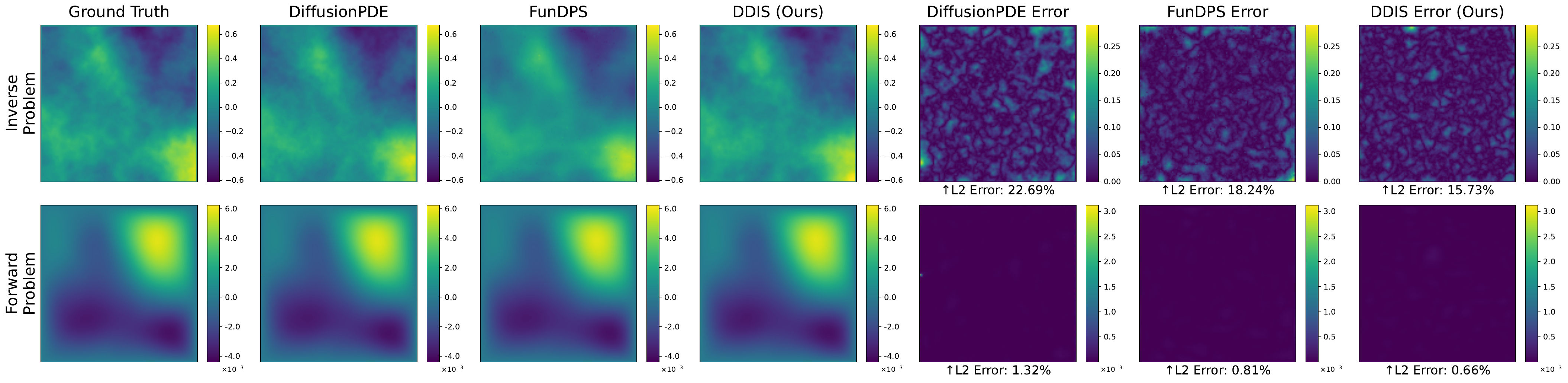} 
    \caption{Qualitative comparison of results for the inverse Helmholtz problem (Time Budget: 128 s).}
\end{figure}

\newpage
\subsection{Inverse Navier-Stokes problem}
\begin{figure}[h]
    \centering
    \includegraphics[width=1.0\linewidth]{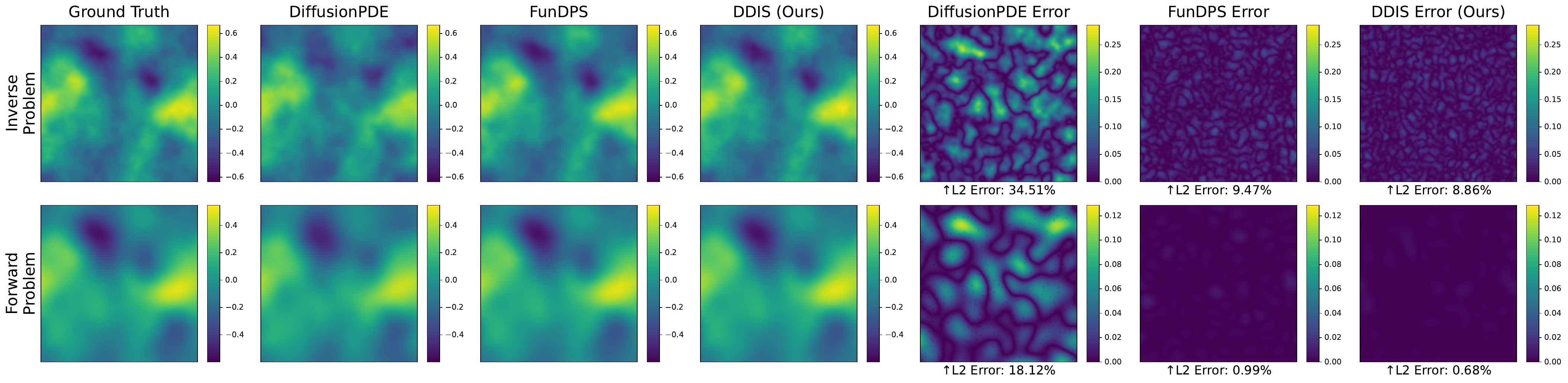} 
    \caption{Qualitative comparison of results for the inverse Navier-Stokes problem (Time Budget: 16 s).}
\end{figure}

\begin{figure}[h]
    \centering
    \includegraphics[width=1.0\linewidth]{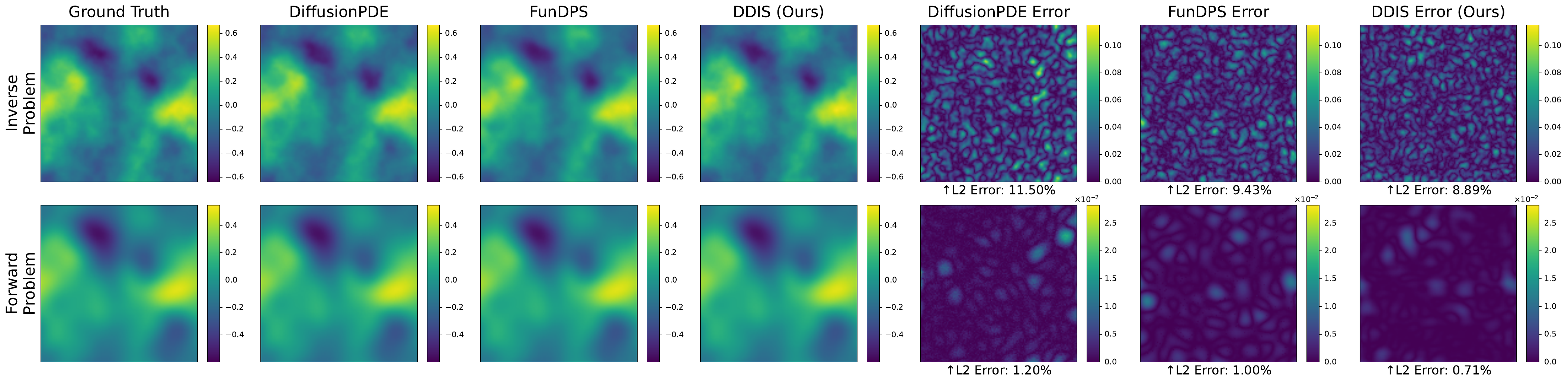} 
    \caption{Qualitative comparison of results for the inverse Navier-Stokes problem (Time Budget: 32 s).}
\end{figure}

\begin{figure}[h]
    \centering
    \includegraphics[width=1.0\linewidth]{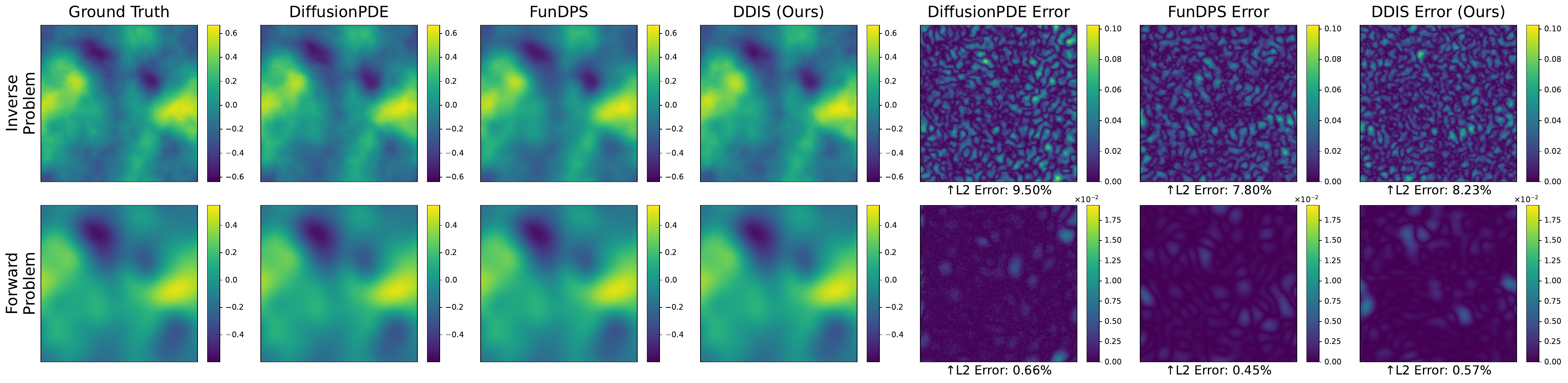} 
    \caption{Qualitative comparison of results for the inverse Navier-Stokes problem (Time Budget: 128 s).}
\end{figure}

\newpage
\subsection{Operator Flow Matching}
\begin{figure}[h]
    \centering
    \includegraphics[width=1.0\linewidth]{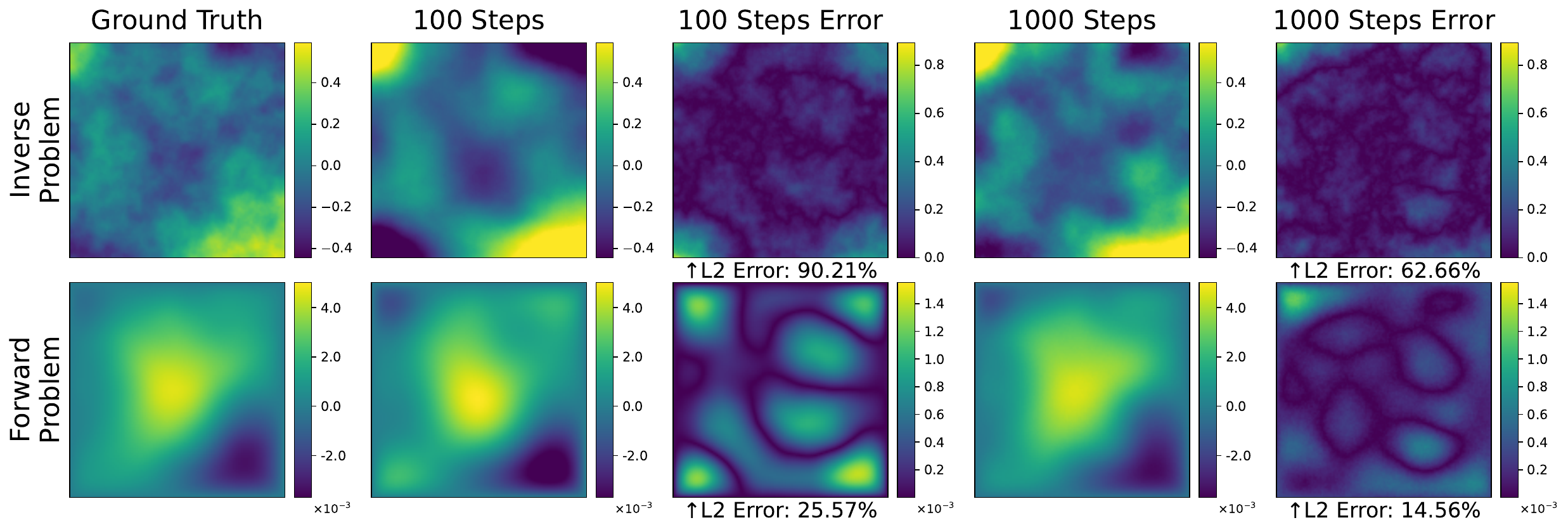} 
    \caption{Generation mean with OFM regression across different Langevin steps on the Poisson inverse problem.}
    \label{fig:ofm}
\end{figure}

%% file: appendix_related_work.tex
\paragraph{Diffusion Models for Inverse Problems.}
Diffusion models have become a dominant paradigm for solving inverse problems, broadly categorized into two strategies.
The first and more trivial strategy, \textit{conditional diffusion}, trains models directly on the conditional distribution $p(x|y)$ or modifies the reverse process with task-specific architectures \citep{saharia2022palette, whang2022deblurring}.
While effective for fixed observation patterns, this supervised approach lacks flexibility, requiring retraining for every new forward operator or measurement configuration.
The alternative strategy, \textit{unsupervised posterior sampling}, leverages a pre-trained unconditional prior $p(x)$ and enforces measurement consistency at inference time.
For linear inverse problems, decomposition-based methods exploit the linear structure of the operator: DDRM \citep{kawar2022denoising} utilizes Singular Value Decomposition (SVD) to efficiently perform diffusion in the spectral space, whereas DDNM \citep{wang2022zero} employs Range-Null Space Decomposition to avoid heavy computation in high-dimensional tasks \citep{daras2024survey}.
For general non-linear problems, guidance-based methods approximate the intractable likelihood score. 
Notably, DPS \citep{chung2022diffusion} employs Tweedie's formula to estimate the clean data mean, while LGD \citep{song2023loss} incorporates loss-based gradient guidance.
Beyond simple guidance, optimization-centric approaches like DiffPIR \citep{zhu2023denoising}, PnP-DM \citep{wu2024principled}, and DPnP \citep{xu2024provably} utilize variable splitting techniques to alternate between proximal data updates and denoising \citep{zheng2025inversebench}.
Recently, researchers have proposed asymptotically exact samplers using Sequential Monte Carlo, such as FPS \citep{dou2024diffusion} and MCGDiff \citep{cardoso2023monte}, as well as variational frameworks like RED-diff \citep{mardani2023variational}.
Within this landscape, DAPS \citep{zhang2025improving} distinguishes itself by decoupling the denoising and likelihood updates via Langevin dynamics to mitigate approximation bias.
However, as highlighted in \citep{zheng2025inversebench}, even these advanced posterior sampling methods remain fragile: they violate the stability conditions of PDE solvers and yield spatially discontinuous updates when observations are sparse.

\paragraph{Inverse Problems with Arbitrary, Sparse Observations.}
Practical scientific inverse problems often rely on sparse, irregular measurements, emerging to be a new structural challenge. 
Inverse solvers in the previous paragraph actually treat inverse problems as natural image problems, and thus mostly fail under sparse observation.
Conditional generative approaches, such as PIDM \citep{shu2023physics}, CoCoGen \citep{jacobsen2025cocogen}, PalSB \citep{li2025physicsaligned}, and PRISMA \citep{sawhney2025beyond}, fail to handle this variability due to a combinatorial explosion of possible sensor layouts.
As noted in \citep{daras2024survey}, they lack the flexibility to generalize to varying masks without exhaustive retraining.
Alternatively, direct posterior sampling strategies could theoretically handle arbitrary masks by masking the numerical PDE solver during likelihood evaluation.  
However, backpropagating gradients through iterative solvers is already known to suffer from computational instability and vanishing gradients even in dense settings \citep{zheng2025inversebench}. 
This instability is amplified under sparse observations, where the gradient calculation becomes ill-conditioned \citep{plessix2006review,leung2021level}. While certain gradient-free replacement techniques \citep{Amor_s_Trepat_2026} can satisfy sparse observations without backpropagation, they are difficult to generalize to complex physics-constrained settings. Furthermore, these methods focus on solution field reconstruction and are not designed to resolve the underlying coefficient space.

To avoid these solver-based issues, recent works like DiffusionPDE \citep{huang2024diffusionpde} and FunDPS \citep{yao2025guided} adopt joint-embedding methods.
By training a diffusion model on the concatenated distribution of coefficients and solutions $p(a, u)$, these methods effectively recast the complex inverse problem as a data-driven inpainting task.
While this bypasses unstable solvers, it relies on a questionable premise: modeling physics as a statistical correlation rather than a causal mechanism. 
Intuitively, we argue that enforcing strict physical laws into a joint distribution places a heavy representational burden on the model, forcing it to ``rediscover" the forward map purely from data.
This suggests joint-embedding methods may be inherently data-inefficient compared to approaches that explicitly distinguish the prior and the forward operator.

\paragraph{Flow Matching for PDE Inverse Problems.}
Beyond diffusion models, flow-based models have emerged as a promising paradigm for inverse problems due to the computational efficiency demonstrated in other domains like computer vision. 
Several recent works have explored flow-based models for image inverse problems.
For example, D-Flow \citep{benhamu2024dflowdifferentiatingflowscontrolled} differentiates the cost function through continuous normalizing flows, while FlowChef \citep{Patel_2025_ICCV} derives Jacobian-based guidance by treating the clean sample as an intermediate variable.
FlowDPS \citep{kim2025flowdpsflowdrivenposteriorsampling} further leverages Tweedie’s formula to decompose denoised and noisy estimates, applying gradients to appropriate components for posterior sampling. 
For scientific inverse problems, function-space modeling plays a central role.
Functional Flow Matching (FFM) \citep{kerrigan2023functionalflowmatching} extends flow matching to infinite-dimensional settings, providing a principled formulation grounded in measure theory.
More recently, ECI-sampling \citep{cheng2025gradientfree} and PCFM \citep{utkarsh2025physicsconstrainedflowmatchingsampling} propose training-free algorithms for enforcing physical constraints.
From a more general perspective, Operator Flow Matching (OFM) \citep{shi2025stochastic} introduces a stochastic process learning framework for universal functional regression, enabling posterior sampling via SGLD-based updates. 
However, existing flow-based methods have largely focused on inverse problems defined on a single state variable, and have not yet been developed for joint coefficient-solution inverse problems.
As a result, the structured dependencies between coefficients and solutions remain largely unexploited in current flow-based formulations.

\paragraph{Neural Operators as Physics Representation.}
Neural operators are architectures designed to learn mappings between infinite-dimensional function spaces rather than fixed vector spaces \citep{kovachki2023neural,berner2025principled,duruisseaux2025fourierneuraloperatorsexplained}.
Unlike standard CNNs or MLPs constrained to specific discretizations, neural operators are resolution-invariant, allowing models trained on coarse discretizations to work on finer grids \citep{li2020fourier}. 
Although architectures like DeepONet \citep{lu2019deeponet} offer flexibility for complex geometries,
the Fourier Neural Operator (FNO) \citep{li2020fourier} achieves state-of-the-art efficiency on uniform grids, particularly for massive-scale climate modeling \citep{kurth2023fourcastnet} and chaotic fluid dynamics \citep{li2020fourier}. 
To address specific physical tasks, recent works have extended these backbones: PINO \citep{li2024physics} incorporates PDE constraints into the loss function to enable unsupervised or semi-supervised learning; Geo-FNO \citep{li2023fourier} adapts spectral methods to arbitrary meshes via geometric deformations; and OFormer \citep{li2022transformer} integrates Transformer attention to capture long-range dependencies.
These architectures have become robust surrogates for traditional solvers, especially when repeated evaluations of complex physics are needed.

\nocite{das2012simulated,gregson2014capture,fan2020solving}
\nocite{mardani2017deep,sidky2020cnns,song2021solving}
\nocite{davis1995solving,wang2012regularization,kostsov2015general}
\nocite{tarantola2005inverse,groetsch1993inverse,beck1985inverse,mueller2012linear}
\nocite{morss2001idealized,andrychowicz2023deep,manshausen2024generative,allen2025end}
\nocite{gholami2010regularization,herrmann2012efficient,parsekian2015multiscale}

%% file: appendix_guidance_att.tex
Let the joint coefficient-solution variable be $x \coloneqq (a,u)\in \mathcal A\times \mathcal U$.
Joint-embedding models learn a probabilistic prior $p(x)$.
Unlike DDIS, which specifies a conditional likelihood $p(u_{\mathrm{obs}}\mid a)$ through a forward operator,
joint-embedding methods assume a Gaussian observation model defined directly on the clean joint variable $x_0$:
\begin{align*}
u_{\mathrm{obs}} = M x_0 + \varepsilon,
\qquad
\varepsilon \sim \mathcal N(0,\tilde{\sigma}_{\mathrm{obs}}^2 I),
\end{align*}
where $M$ is a masking operator selecting the observed components of $u$ and
$\tilde{\sigma}_{\mathrm{obs}}^2$ denotes the prescribed observation noise level.
This induces the likelihood
\begin{align*}
p_{\mathrm{joint}}(u_{\mathrm{obs}} \mid x_0)
=
\mathcal N\big(M x_0,\tilde{\sigma}_{\mathrm{obs}}^2 I\big).
\end{align*}
For posterior sampling, we consider DPS as adopted by our baselines
FunDPS~\cite{yao2025guided} and DiffusionPDE~\cite{huang2024diffusionpde},
where the likelihood is instantiated within the diffusion process
$\{x_t\}_{t=0}^{T}$ over the joint variable $x_t=(a_t,u_t)$.
Let $\hat{x}_0(x_t,t)$ denote an estimate of the denoised sample $\mathbb{E}[x_0\mid x_t]$.
The guidance is given by the log-likelihood gradient with respect to $x_t$,
approximating the likelihood via $\hat{x}_0$ as in~\eqref{eqn:dps_approx}:
\begin{align}
\nabla_{x_t} \log p_{\mathrm{joint}}(u_{\mathrm{obs}}\mid x_t)
&\approx
\nabla_{x_t} \log p_{\mathrm{joint}}(u_{\mathrm{obs}}\mid \hat{x}_0(x_t,t)) \nonumber\\
&=
\frac{1}{\tilde{\sigma}_{\mathrm{obs}}^2}
J_{\hat x_0}(x_t,t)^\top
M^\top r(x_t,t)
\label{eqn:joint-like-grad-dps}
\end{align}
where the Jacobian $J_{\hat x_0}(x_t,t) \coloneqq \nabla_{x_t} \hat x_0(x_t,t)$ and the residual $r(x_t,t) \coloneqq u_\mathrm{obs}-M\hat x_0(x_t,t)$.
\begin{definition}[Scale-free guidance]
\label{def:joint-guidance}
We define the scale-free guidance as
\begin{align}
g(x_t,t)
\coloneqq
J_{\hat x_0}(x_t,t)^\top
M^\top r(x_t,t).
\end{align}
\end{definition}

We consider score-based diffusion models with the following lemma:
\begin{lemma}[Tweedie-type estimator of denoised sample]
\label{lemma:tweedie-x0hat}
Let $\{x_t\}_{t=0}^{T}$ be a diffusion process with clean variable $x_0$.
Given the intermediate state $x_t$ and a score model
$s_\theta(x_t,t)\approx\nabla_{x_t}\log p_t(x_t)$,
the estimator of the denoised sample $\mathbb{E}[x_0\mid x_t]$ is
\begin{align}
\hat x_0(x_t,t)
&=
\frac{1}{\alpha_t}\Big(x_t+\sigma_t^2\,s_\theta(x_t,t)\Big),
\end{align}
where $\alpha_t$ and $\sigma_t$ are time-dependent coefficients.
\end{lemma}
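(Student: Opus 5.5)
The statement is \cref{lemma:tweedie-x0hat}. The plan is to derive it as Tweedie's formula for a Gaussian forward transition, with the ideal score $\nabla_{x_t}\log p_t$ in place of $s_\theta$. Substituting the learned score is then the estimator by definition.

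\textbf{Setup.} For the forward process \eqref{eqn:std_forward_process} with affine drift, I will assume the transition kernel is Gaussian:
\[
p(x_t\mid x_0)=\mathcal N(\alpha_t x_0,\sigma_t^2 I).
\]
This covers both the variance-preserving and variance-exploding cases, with $\alpha_t,\sigma_t$ determined by $f$ and $g$. Then the marginal is
\[
p_t(x_t)=\int p(x_t\mid x_0)\,p(x_0)\,dx_0 .
\]

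\textbf{Step 1: compute the score.} Differentiate under the integral sign, which is justified by the Gaussian's smoothness and integrability, assuming $p(x_0)$ has finite first moment. This gives
\[
\nabla_{x_t}p_t(x_t)=\int -\frac{x_t-\alpha_t x_0}{\sigma_t^2}\,p(x_t\mid x_0)\,p(x_0)\,dx_0 .
\]
Divide by $p_t(x_t)$ and recognise the posterior $p(x_0\mid x_t)$ via Bayes' rule. This yields
\[
\nabla_{x_t}\log p_t(x_t)=\frac{\alpha_t\,\mathbb E[x_0\mid x_t]-x_t}{\sigma_t^2}.
\]

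\textbf{Step 2: solve for the posterior mean.} Rearranging gives
\[
\mathbb E[x_0\mid x_t]=\alpha_t^{-1}\bigl(x_t+\sigma_t^2\nabla_{x_t}\log p_t(x_t)\bigr).
\]
Replacing the true score by $s_\theta$ gives the stated $\hat x_0$. I will add a remark that the approximation error satisfies
\[
\|\hat x_0-\mathbb E[x_0\mid x_t]\|=\frac{\sigma_t^2}{\alpha_t}\,\|s_\theta-\nabla\log p_t\|,
\]
so the estimator is exact for a perfect score model.

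\textbf{Main obstacle.} The only delicate point is justifying the interchange of derivative and integral. In the function-space setting, Lebesgue densities do not exist, so the statement must be read on the finite-dimensional discretization used in practice. Alternatively one can cite the Gaussian-measure analogue used by FunDPS. In finite dimensions, dominated convergence applies because the Gaussian kernel and its gradient are bounded by an integrable envelope uniformly on compact sets of $x_t$.
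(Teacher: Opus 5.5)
Your derivation is correct and is the standard Tweedie argument; the paper states \cref{lemma:tweedie-x0hat} without proof and implicitly uses the same Gaussian kernel $p(x_t\mid x_0)=\mathcal N(\alpha_t x_0,\sigma_t^2 I)$ that appears in \cref{lemma:empirical-score}, so your setup just makes that assumption explicit. One small simplification: the finite-first-moment hypothesis is not needed, because $\|x_t-\alpha_t x_0\|\,p(x_t\mid x_0)$ is bounded uniformly in $(x_t,x_0)$, so a constant multiple of the prior density dominates the integrand and the conditional mean exists for any prior (with $\alpha_t>0$).
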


We decompose the denoised estimate and the learned score field into their $a$- and $u$-components:
\begin{align*}
\hat x_0(x_t,t)
=
\begin{pmatrix}
\hat a_0(x_t,t)\\
\hat u_0(x_t,t)
\end{pmatrix},
\qquad
s_\theta(x_t,t)
=
\begin{pmatrix}
s_{\theta,a}(x_t,t)\\
s_{\theta,u}(x_t,t)
\end{pmatrix},
\end{align*}
then we can decompose the guidance into block-form:
\begin{lemma}[Block-form guidance]
\label{lemma:block-guidance}
Let $\hat x_0(x_t,t)$ be defined by the Tweedie estimator (\cref{lemma:tweedie-x0hat}).
Then the likelihood guidance (\cref{def:joint-guidance}) under DPS admits the block decomposition
\begin{align}
\label{eq:block-guidance-final}
g(x_t,t)
\propto
\begin{pmatrix}
\sigma_t^2\,\partial_{a_t}s_{\theta,u}(x_t,t)^\top\\
I+\sigma_t^2\,\partial_{u_t}s_{\theta,u}(x_t,t)^\top
\end{pmatrix}
\, r(x_t,t),
\end{align}
where $r(x_t,t)=u_{\mathrm{obs}}-M\hat x_0(x_t,t)$.
\end{lemma}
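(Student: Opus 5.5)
The plan is a direct chain-rule computation starting from \cref{def:joint-guidance} and the Tweedie estimator of \cref{lemma:tweedie-x0hat}. We differentiate $\hat x_0(x_t,t)=\alpha_t^{-1}(x_t+\sigma_t^2 s_\theta(x_t,t))$ with respect to $x_t=(a_t,u_t)$, which gives
\begin{align*}
J_{\hat x_0}(x_t,t)=\frac{1}{\alpha_t}\Big(I+\sigma_t^2\,\nabla_{x_t}s_\theta(x_t,t)\Big).
\end{align*}
We then write $\nabla_{x_t}s_\theta$ as a $2\times 2$ block matrix with blocks $\partial_{a_t}s_{\theta,a}$, $\partial_{u_t}s_{\theta,a}$, $\partial_{a_t}s_{\theta,u}$ and $\partial_{u_t}s_{\theta,u}$. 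Here the first row collects the $a$-output and the second row the $u$-output.

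Next we use the structure of the mask. $M$ selects only components of $u$, so $M=\begin{pmatrix}0 & M_u\end{pmatrix}$, where $M_u$ is the restriction acting on $u$. Then $M^\top r=\begin{pmatrix}0\\ M_u^\top r\end{pmatrix}$, and only the second block column of $J_{\hat x_0}^\top$ survives.

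The second block column of $J_{\hat x_0}^\top$ is the transpose of the second block row of $J_{\hat x_0}$. That row is $\alpha_t^{-1}\big(\sigma_t^2\partial_{a_t}s_{\theta,u},\; I+\sigma_t^2\partial_{u_t}s_{\theta,u}\big)$. Transposing gives the stacked column
\begin{align*}
\frac{1}{\alpha_t}\begin{pmatrix}\sigma_t^2\,\partial_{a_t}s_{\theta,u}^\top\\ I+\sigma_t^2\,\partial_{u_t}s_{\theta,u}^\top\end{pmatrix},
\end{align*}
which is applied to $M_u^\top r$. The factor $\alpha_t^{-1}$ is a positive scalar, so it is absorbed into the proportionality sign.

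The only point needing care is notation: we identify $r$ with its zero-extension $M_u^\top r$ to the full $u$-grid, matching the statement's convention. Since $M$ is a coordinate projection, $M_u^\top$ just embeds the observed entries, so no further difficulty arises. We will also note briefly that the Jacobian of the score is symmetric when $s_\theta$ is an exact gradient field. This symmetry is not needed, because the transposes are kept explicitly.
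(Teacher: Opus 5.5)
Your proposal is correct and follows essentially the same route as the paper: block-decompose $J_{\hat x_0}$, observe that $M^\top r$ lives only in the $u$-block so that only $(\partial_{a_t}\hat u_0)^\top r$ and $(\partial_{u_t}\hat u_0)^\top r$ survive, then substitute the Tweedie derivatives and absorb $\alpha_t^{-1}$ into the proportionality. Your explicit identification of $r$ with its zero-extension $M_u^\top r$ is a more careful version of the paper's shorthand $M^\top r = (0,\, r)^\top$.
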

\begin{proof}
The Jacobian of the denoised estimate admits the block decomposition
\begin{align}
J_{\hat x_0}(x_t,t)
=
\nabla_{x_t}\hat x_0(x_t,t)
=
\begin{pmatrix}
\partial_{a_t}\hat a_0 & \partial_{u_t}\hat a_0\\
\partial_{a_t}\hat u_0 & \partial_{u_t}\hat u_0
\end{pmatrix}.
\label{eq:block-Jx0hat}
\end{align}
Since $M$ selects the $u$-component, we have
\begin{align}
M^\top r(x_t,t)
=
\begin{pmatrix}
0\\
r(x_t,t)
\end{pmatrix}.
\label{eq:Mt-r-block}
\end{align}
Substituting \eqref{eq:block-Jx0hat} and \eqref{eq:Mt-r-block} into the scale-free guidance
(\cref{def:joint-guidance}) yields
\begin{align}
g(x_t,t)
&\propto
J_{\hat x_0}(x_t,t)^\top
M^\top r(x_t,t)
=
J_{\hat x_0}(x_t,t)^\top
\begin{pmatrix}
0\\
r(x_t,t)
\end{pmatrix}
\nonumber\\
&=
\begin{pmatrix}
(\partial_{a_t}\hat u_0(x_t,t))^\top\, r(x_t,t)\\
(\partial_{u_t}\hat u_0(x_t,t))^\top\, r(x_t,t)
\end{pmatrix}.
\label{eq:g-block-multiply}
\end{align}

From \cref{lemma:tweedie-x0hat}, the $u$-component satisfies
\begin{align}
\hat u_0(x_t,t)
=
\frac{1}{\alpha_t}\Big(u_t+\sigma_t^2\,s_{\theta,u}(x_t,t)\Big).
\label{eq:u0hat-comp}
\end{align}
Differentiating \eqref{eq:u0hat-comp} with respect to $a_t$ gives
\begin{align}
\partial_{a_t}\hat u_0(x_t,t)
=
\frac{\sigma_t^2}{\alpha_t}\,\partial_{a_t}s_{\theta,u}(x_t,t),
\label{eq:dua-u0hat}
\end{align}
and differentiating with respect to $u_t$ gives
\begin{align}
\partial_{u_t}\hat u_0(x_t,t)
=
\frac{1}{\alpha_t}\Big(I+\sigma_t^2\,\partial_{u_t}s_{\theta,u}(x_t,t)\Big).
\label{eq:duu-u0hat}
\end{align}
Substituting \eqref{eq:dua-u0hat}-\eqref{eq:duu-u0hat} into \eqref{eq:g-block-multiply}
yields \eqref{eq:block-guidance-final}.
\vspace{-1em}
\end{proof}
Lemma~\ref{lemma:block-guidance} shows that the $a$-component of the guidance depends exclusively on the cross-partial $\partial_{a_t}s_{\theta,u}(x_t,t)$.
This is the term that transmits information from the observation space into updates of $a$ in the joint-embedding models.
A sufficient condition for guidance attenuation is therefore
\begin{align}
\label{eq:sufficient_condition}
\partial_{a_t}s_{\theta,u}(x_t,t)\approx 0
\quad\Longrightarrow\quad
g_a(x_t,t)\approx 0.
\end{align}
Equivalently, this characterization can be stated as the following requirement:
\begin{proposition}[Coupling is necessary for effective guidance]
\label{prop:coupled-score-necessary}
Under \cref{eq:physical_obs_model}, effective guidance in coefficient space
requires the joint-embedding score model $s_\theta$ to learn a joint distribution that does not factorize across $a_t$ and $u_t$, i.e., $p_t(a_t,u_t) \neq p_t(a_t)\,p_t(u_t)$.
\end{proposition}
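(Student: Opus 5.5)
We argue by contraposition: if the noisy joint marginal factorizes, $p_t(a_t,u_t)=p_t(a_t)\,p_t(u_t)$, then the $a$-component of the scale-free guidance vanishes identically, so no effective guidance in coefficient space is possible. The starting point is \cref{lemma:block-guidance}. It shows that the coefficient block of $g(x_t,t)$ equals, up to the factor $1/\alpha_t$,
\begin{align*}
g_a(x_t,t) \propto \sigma_t^2\,\partial_{a_t}s_{\theta,u}(x_t,t)^\top r(x_t,t).
\end{align*}
So it suffices to show that the ideal score, $s_\theta=\nabla_{x_t}\log p_t$, has vanishing cross-partial $\partial_{a_t}s_{\theta,u}$ whenever the marginal factorizes. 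This is exactly the sufficient condition \eqref{eq:sufficient_condition}.

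First, under factorization, $\log p_t(a_t,u_t)=\log p_t(a_t)+\log p_t(u_t)$. Hence the $u$-block of the score is $s_{\theta,u}(x_t,t)=\nabla_{u_t}\log p_t(u_t)$, which depends on $u_t$ only. Differentiating in $a_t$ gives $\partial_{a_t}s_{\theta,u}\equiv 0$; equivalently, the mixed Hessian block $\nabla_{a_t}\nabla_{u_t}\log p_t$ is zero. Substituting into \cref{lemma:block-guidance} yields $g_a(x_t,t)=0$ for every $x_t$ and every residual $r$. The observation $u_{\mathrm{obs}}$ then cannot move $a_t$ at any step of the reverse DPS process, which only updates $u_t$. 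The coefficient iterate therefore evolves under the unconditional prior score $\nabla_{a_t}\log p_t(a_t)$ alone.

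Second, this must be tied to \eqref{eq:physical_obs_model}. Under that model, $u=L^*(a)$ is a non-constant function of $a$, so the ideal guidance \eqref{eqn:true-likelihood-score} is generically nonzero. A model with $g_a\equiv 0$ therefore cannot approximate it, and effective guidance forces $\partial_{a_t}s_{\theta,u}\not\equiv 0$. This in turn forces the learned $p_t$ to be non-factorizing. The converse direction of the mixed-partial argument also holds: if $\nabla_{a_t}\nabla_{u_t}\log p_t\equiv 0$ on a connected domain, integration gives $\log p_t=f(a_t)+h(u_t)$. This confirms that non-factorization and nonzero cross-partials are the same condition, so the statement is sharp.

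The main obstacle is making ``effective guidance'' precise, since the paper uses it informally. I would define it as $g_a(x_t,t)\neq 0$ on a set of positive $p_t$-measure. I would also note the subtlety that a non-factorizing $p_t$ is necessary but not sufficient: nonzero cross-partials can still be negligible, which is precisely the attenuation studied next via \cref{thm:local-dominance-gradw-informal}.
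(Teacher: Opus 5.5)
Your proposal is correct and follows essentially the paper's route. The paper gives no separate proof: it presents the proposition as a direct restatement of \cref{lemma:block-guidance} and the sufficient condition \eqref{eq:sufficient_condition}, and your contrapositive makes explicit the step it leaves implicit (factorization of $p_t$ makes $s_{\theta,u}$ independent of $a_t$, so $\partial_{a_t}s_{\theta,u}\equiv 0$ and $g_a\equiv 0$). Your converse is an optional extra, but integrating the mixed partial needs more than a connected domain; it works on all of $\mathbb{R}^{d}$, which covers this case because the noised density $p_t$ is positive everywhere for $t>0$.
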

Let $\{x_0^{(n)}\}_{n=1}^N$ denote paired training samples of the clean variable $x_0$.
Below we connect the cross-partial to an empirical score learned on the finite samples.
\begin{lemma}[Empirical-score approximation, modified Theorem~3.2 of \citep{baptista2025memorization}]
\label{lemma:empirical-score}
Let $\{x_0^{(n)}\}_{n=1}^N$ be paired training samples of the clean joint variable $x_0=(a,u)$.
Define the empirical score $s_N(x_t,t)\coloneqq\nabla_{x_t}\log p_t^{(N)}(x_t)$
where the empirical time-$t$ density $p_t^{(N)}(x_t)$ is an isotropic Gaussian mixture:
\begin{align*}
p_t^{(N)}(x_t)
\coloneqq
\frac{1}{N}\sum_{n=1}^N
\mathcal N\big(x_t;\alpha_t x_0^{(n)},\sigma_t^2 I\big).
\end{align*}
We assume the trained joint score model $s_\theta(x_t,t)$ satisfies
\begin{align}
\sup_{x_t}\,
\big\|s_\theta(x_t,t)-s_N(x_t,t)\big\|
\le
\varepsilon_N(t),
\label{eq:empirical-score-approx}
\end{align}
with $\varepsilon_N(t)$ small under data-scarcity.
\end{lemma}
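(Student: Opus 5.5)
The plan is to treat the statement in two parts. The first part is an exact computation: $p_t^{(N)}$ is the true time-$t$ marginal of the forward process when it is started from the empirical measure $\hat p_0^{(N)}=\frac1N\sum_n\delta_{x_0^{(n)}}$, and its score has a closed form. The second part is a justification of the approximation hypothesis \eqref{eq:empirical-score-approx}: under data scarcity, the minimizer of the denoising score-matching objective is exactly $s_N$, so a well-trained $s_\theta$ lies uniformly close to it. This is the content we import from Theorem~3.2 of \citep{baptista2025memorization}.

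\textbf{Step 1 (empirical marginal).} The forward kernel is $p(x_t\mid x_0)=\mathcal N(x_t;\alpha_t x_0,\sigma_t^2 I)$, consistent with \cref{lemma:tweedie-x0hat}. Pushing $\hat p_0^{(N)}$ through this kernel gives
\begin{align*}
\int \mathcal N(x_t;\alpha_t x_0,\sigma_t^2 I)\,\hat p_0^{(N)}(\mathrm dx_0)=p_t^{(N)}(x_t).
\end{align*}
So $p_t^{(N)}$ is the marginal, and the isotropy of each component is inherited directly from the kernel.

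\textbf{Step 2 (closed-form score).} Write $\varphi_n(x)=\mathcal N(x;\alpha_t x_0^{(n)},\sigma_t^2 I)$ and define the responsibilities $w_n=\varphi_n/\sum_j\varphi_j$. Differentiating the logarithm of the mixture gives
\begin{align*}
s_N(x_t,t)=\sum_{n=1}^N w_n(x_t,t)\,\frac{\alpha_t x_0^{(n)}-x_t}{\sigma_t^2}.
\end{align*}
Substituting this into \cref{lemma:tweedie-x0hat} yields $\hat x_0=\sum_n w_n x_0^{(n)}$. This is the form used in the subsequent lemmas: cross-block derivatives of $s_{N,u}$ with respect to $a$ can arise only through $\partial_a w_n$, because each single component has identity covariance.

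\textbf{Step 3 (approximation hypothesis).} The denoising score-matching loss evaluated on the $N$ training points is minimized over all measurable fields by $\mathbb E_{\hat p}[\nabla\log p(x_t\mid x_0)\mid x_t]$, which equals $s_N$ by Step 2. A sufficiently expressive network trained to near-zero excess empirical loss therefore approximates $s_N$. Baptista et al.\ make this quantitative and show that in the memorization regime the learned score converges to the empirical score. We adopt their bound as the uniform error $\varepsilon_N(t)$.

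\textbf{Main obstacle.} The main obstacle is upgrading an $L^2(p_t)$-type guarantee to the supremum norm in \eqref{eq:empirical-score-approx}. Far from the data, $s_N$ grows linearly, and training gives no control over the network there. I would first restrict the supremum to a bounded region containing the sampling trajectories and invoke Lipschitz regularity of both fields to pass from $L^2$ to $L^\infty$. Failing that, I would keep \eqref{eq:empirical-score-approx} as a stated modeling assumption, which is how the lemma presents it.
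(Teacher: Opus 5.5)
Your proposal is correct and ends where the paper does, but it argues much more than the paper. The paper gives no proof of this statement: the lemma is a definition of $s_N$ plus the hypothesis \eqref{eq:empirical-score-approx}, introduced with ``We assume'', and no quantitative $\varepsilon_N(t)$ is extracted from the cited theorem.

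Your Steps~1--2 are correct and coincide with the paper's next result, \cref{lemma:closed-form-empirical-score}, which it proves by the same differentiation of the log-mixture. Two of your observations are correct and not stated in the paper: that $p_t^{(N)}$ is exactly the forward marginal of the empirical measure, and that Tweedie then gives $\hat x_0=\sum_n w_n x_0^{(n)}$. Together they make the memorization picture explicit.

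In Step~3, be careful about what you claim to import. Identifying $s_N$ as the exact minimizer of the empirical denoising loss only gives closeness of a near-minimizer in $L^2(p_t^{(N)})$, so there is no uniform bound to ``adopt''. The Lipschitz upgrade you sketch would not close this usefully, for three reasons:
\begin{itemize}
\item $\nabla s_N=\sigma_t^{-2}\bigl(\alpha_t^2\sigma_t^{-2}\,\mathrm{Cov}_w-I\bigr)$, with $\mathrm{Cov}_w$ the responsibility-weighted covariance of the training points. Its Lipschitz constant is therefore of order $\alpha_t^2D^2/\sigma_t^4$ wherever components overlap, with $D$ the diameter of the training set.
\item $s_\theta$ has no a priori Lipschitz control.
\item In dimension $d=d_a+d_u$ (tens of thousands here), the $L^2\to L^\infty$ step lets the training error enter only through a $(d+2)$-th root. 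It also requires a lower bound on $p_t^{(N)}$ over the region.
\end{itemize}
So your fallback of keeping \eqref{eq:empirical-score-approx} as a modeling assumption is the right endpoint, and it is exactly the paper's stance.

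One further caution if you want the hypothesis to do its downstream job. The corollaries pass from $\partial_a s_{N,u}\approx 0$ to $\partial_a s_{\theta,u}\approx 0$, but a $C^0$ bound like \eqref{eq:empirical-score-approx} does not control Jacobians. That step needs a derivative-level version of the assumption, which neither your argument nor the paper provides.
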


From this point onward, we work with finite-dimensional representations of the coefficient and solution variables obtained via discretization.
Accordingly, we treat $a\in\mathbb R^{d_a}$, $u\in\mathbb R^{d_u}$,
and $x=(a,u)\in\mathbb R^{d}$ with $d=d_a+d_u$.
The empirical score admits the closed form:
\begin{lemma}[Closed-form empirical score]
\label{lemma:closed-form-empirical-score}
Let the closed-form isotropic Gaussian kernel be
\begin{align*}
\varphi(r)
\coloneqq
(2\pi\sigma^2)^{-d/2}
\exp\Big(-\frac{\|r\|_2^2}{2\sigma^2}\Big).
\end{align*}
Define the mixture responsibilities
\begin{align*}
w_n(x,t)
\coloneqq
\frac{\varphi\big(x-\alpha_t x_0^{(n)}\big)}
{\sum_{j=1}^N\varphi\big(x-\alpha_t x_0^{(j)}\big)},
\qquad
\sum_{n=1}^N w_n(x,t)=1.
\end{align*}
Under \cref{lemma:empirical-score}, the empirical score admits the closed form
\begin{align}
s_N(x,t)
=
\frac{1}{\sigma^2(t)}\sum_{n=1}^N w_n(x,t)\,\big(\alpha_t x_0^{(n)}-x\big).
\label{eq:mixture-score}
\end{align}
\end{lemma}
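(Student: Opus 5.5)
The closed form follows from differentiating the log of the Gaussian mixture directly. I will work with the discretized variable $x\in\mathbb R^{d}$ and write $\sigma=\sigma(t)$ for $\sigma_t$. Under \cref{lemma:empirical-score} the empirical density is
\[
p_t^{(N)}(x)=\frac1N\sum_{n=1}^N \varphi\big(x-\alpha_t x_0^{(n)}\big),
\]
since each component $\mathcal N(x;\alpha_t x_0^{(n)},\sigma^2 I)$ equals $\varphi(x-\alpha_t x_0^{(n)})$ with the isotropic kernel $\varphi$ defined in the statement. The density is a finite sum of smooth, strictly positive functions, so it is smooth and strictly positive on $\mathbb R^d$. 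Hence $\log p_t^{(N)}$ is well defined and differentiable everywhere, and by the chain rule
\[
s_N(x,t)=\frac{\nabla_x p_t^{(N)}(x)}{p_t^{(N)}(x)}.
\]

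Next I would compute the gradient of a single kernel. Differentiating the exponent $-\|r\|_2^2/(2\sigma^2)$ gives
\[
\nabla_r\varphi(r)=-\frac{r}{\sigma^2}\,\varphi(r).
\]
Evaluating at $r=x-\alpha_t x_0^{(n)}$ and noting that the Jacobian of $x\mapsto r$ is the identity, we get
\[
\nabla_x\varphi\big(x-\alpha_t x_0^{(n)}\big)=\frac{1}{\sigma^2}\big(\alpha_t x_0^{(n)}-x\big)\,\varphi\big(x-\alpha_t x_0^{(n)}\big).
\]

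Summing over $n$, the prefactor $1/N$ cancels between numerator and denominator, as does the normalizing constant $(2\pi\sigma^2)^{-d/2}$. This leaves
\[
s_N(x,t)=\frac{1}{\sigma^2}\sum_{n=1}^N \frac{\varphi\big(x-\alpha_t x_0^{(n)}\big)}{\sum_{j}\varphi\big(x-\alpha_t x_0^{(j)}\big)}\,\big(\alpha_t x_0^{(n)}-x\big).
\]
The ratio is exactly the responsibility $w_n(x,t)$, which yields \eqref{eq:mixture-score}. The identity $\sum_n w_n=1$ holds trivially because the denominator is positive and equals the sum of the numerators.

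There is no real obstacle here. The only points to check are bookkeeping ones:
\begin{itemize}
\item the positivity of the denominator, which justifies dividing by $p_t^{(N)}$;
\item that the same $\sigma=\sigma_t$ is used in $\varphi$ and in the mixture;
\item that the statement is about the exact empirical score $s_N$, not the learned model $s_\theta$.
\end{itemize}
The approximation bound \eqref{eq:empirical-score-approx} is therefore not needed for this lemma.
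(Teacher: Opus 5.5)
Your proposal is correct and follows essentially the same route as the paper. Like the paper, you write $p_t^{(N)}$ as the average of shifted kernels, express the score as $\nabla_x p_t^{(N)}/p_t^{(N)}$, use $\nabla_x\varphi(x-\alpha_t x_0^{(n)}) = -\sigma^{-2}(x-\alpha_t x_0^{(n)})\,\varphi(x-\alpha_t x_0^{(n)})$, and recognize the normalized weights as $w_n$. Your extra bookkeeping (positivity of the denominator, cancellation of $1/N$, and that \eqref{eq:empirical-score-approx} is not needed) is accurate.
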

\begin{proof}
Under \cref{lemma:empirical-score}, the empirical time-$t$ density is
\begin{align*}
p_t^{(N)}(x)
=
\frac{1}{N}\sum_{n=1}^N
\varphi\big(x-\alpha_t x_0^{(n)}\big).
\end{align*}
So the empirical score follows
\begin{align}
s_N(x,t)
=
\nabla_x \log p_t^{(N)}(x) \nonumber
&=
\frac{\sum_{n=1}^N \nabla_x \varphi\big(x-\alpha_t x_0^{(n)}\big)}
{\sum_{j=1}^N \varphi\big(x-\alpha_t x_0^{(j)}\big)}
\nonumber\\
&=
-\frac{1}{\sigma^2(t)}
\sum_{n=1}^N
\frac{\varphi\big(x-\alpha_t x_0^{(n)}\big)}
{\sum_{j=1}^N\varphi\big(x-\alpha_t x_0^{(j)}\big)}
\big(x-\alpha_t x_0^{(n)}\big)
\nonumber\\
&=
\frac{1}{\sigma^2(t)}\sum_{n=1}^N w_n(x,t)\,\big(\alpha_t x_0^{(n)}-x\big) \nonumber
\end{align}
\end{proof}

We now turn to the empirical score approximation
(\cref{lemma:empirical-score}, \cref{lemma:closed-form-empirical-score})
to analyze when the coupling condition in \cref{prop:coupled-score-necessary} can be satisfied.
Specifically, we characterize the structural source of $a$-$u$ coupling
in the empirical log-density and show that it arises solely through the mixture responsibilities.
We formalize this in the following two lemmas.
Recall that the empirical score is defined as
$s_N(x,t)=\nabla_x \log p_t^{(N)}(x)$ with
$p_t^{(N)}(x)=\sum_{n=1}^N \varphi_n(x)$ (\cref{lemma:empirical-score}).
Accordingly, the $a$--$u$ cross-block of the empirical score is
$\nabla_a\nabla_u \log \sum_{n=1}^N \varphi_n(x)$.

First, we show that this cross-partial depends only on responsibility gradients.
Let $\varphi_n(x) \coloneqq \varphi\big(x-\alpha_t x_0^{(n)}\big)$
and define the mixture responsibilities
$w_n(x)\coloneqq {\varphi_n(x)}/{\sum_{j=1}^N \varphi_n(x)}$ as in \cref{lemma:closed-form-empirical-score}.
\begin{lemma}[Cross-block of the empirical score]
\label{lemma:cross-block-from-w}
The cross-block of the empirical score satisfies
\begin{align}
\nabla_a\nabla_u\log \sum_{n=1}^N \varphi_n(x)
=
\sum_{n=1}^N
\big(\nabla_a w_n(x)\big)
\big(\nabla_u \log \varphi_n(x)\big).
\label{eq:cross-block-from-w}
\end{align}
\end{lemma}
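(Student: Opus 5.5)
The plan is a direct computation: first write the $u$-gradient of the log-mixture as a responsibility-weighted average, then differentiate that in $a$ with the product rule, and finally use the isotropic, separable form of each Gaussian component to kill the extra term.

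\textbf{Step 1 ($u$-gradient).} Set $p(x)=\sum_{n=1}^N\varphi_n(x)$. The prefactor $1/N$ from \cref{lemma:empirical-score} is a constant, so it does not affect any log-derivative. Writing $\nabla_u\varphi_n=\varphi_n\nabla_u\log\varphi_n$ gives
\begin{align*}
\nabla_u\log p(x)=\frac{\sum_n \varphi_n(x)\,\nabla_u\log\varphi_n(x)}{\sum_j\varphi_j(x)}=\sum_{n=1}^N w_n(x)\,\nabla_u\log\varphi_n(x).
\end{align*}
This is the $u$-block of the closed form in \cref{lemma:closed-form-empirical-score}. Here $w_n$ is read with denominator $\sum_j\varphi_j$; the paper's displayed definition has a typo in its summation index.

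\textbf{Step 2 ($a$-derivative).} Applying $\nabla_a$ termwise with the product rule gives
\begin{align*}
\nabla_a\nabla_u\log p(x)=\sum_{n=1}^N\big(\nabla_a w_n(x)\big)\big(\nabla_u\log\varphi_n(x)\big)+\sum_{n=1}^N w_n(x)\,\nabla_a\nabla_u\log\varphi_n(x).
\end{align*}
The first sum is an outer product with rows indexed by $a$ and columns by $u$. I will fix this convention once, matching \eqref{eq:cross-block-from-w}, and use the same orientation for every term.

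\textbf{Step 3 (the second sum vanishes).} Because the kernel is isotropic, each log-component separates into an $a$-part plus a $u$-part:
\begin{align*}
\log\varphi_n(x)=-\frac{d}{2}\log(2\pi\sigma^2)-\frac{\|a-\alpha_t a_0^{(n)}\|_2^2}{2\sigma^2}-\frac{\|u-\alpha_t u_0^{(n)}\|_2^2}{2\sigma^2}.
\end{align*}
Consequently $\nabla_u\log\varphi_n=-(u-\alpha_t u_0^{(n)})/\sigma^2$ does not depend on $a$, and $\nabla_a\nabla_u\log\varphi_n=0$. This is the content of \cref{rem:responsibility-source}: a single isotropic Gaussian induces no $a$–$u$ coupling. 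With the second sum gone, \eqref{eq:cross-block-from-w} follows.

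\textbf{Where care is needed.} No step is genuinely hard. The only points to watch are bookkeeping: keeping the transpose and outer-product orientation of the $a$–$u$ block consistent, and noting explicitly that the normalization constants, including $1/N$, drop out of the log-derivatives.
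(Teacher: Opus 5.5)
Your proposal is correct and follows the paper's route. The paper gives no standalone proof of this lemma; it relies on the closed-form score of \cref{lemma:closed-form-empirical-score}, whose $u$-block is your Step~1, and on the auxiliary \cref{lemma:single-gaussian-cross}, which is exactly your Step~3 ($\nabla_a\nabla_u\log\varphi_n=0$ by separability of the isotropic kernel), so your product rule in Step~2 is the only step needed to connect them.
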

We next provide an auxiliary lemma showing that an individual isotropic Gaussian
component induces no $a$-$u$ coupling.
\begin{lemma}
\label{lemma:single-gaussian-cross}
A single isotropic Gaussian mixture component satisfies:
\begin{align}
\nabla_a \nabla_u \log \varphi_n(x)
= 0.
\end{align}
\end{lemma}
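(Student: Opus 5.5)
We compute $\log\varphi_n$ in closed form and observe that it splits additively into a part depending only on $a$ and a part depending only on $u$. Write the center as $\alpha_t x_0^{(n)} = (\alpha_t a_0^{(n)}, \alpha_t u_0^{(n)})$ with $a_0^{(n)}\in\mathbb R^{d_a}$ and $u_0^{(n)}\in\mathbb R^{d_u}$. The kernel is isotropic, meaning its covariance is $\sigma^2 I_d$, block-diagonal with no $a$--$u$ off-diagonal block. Hence the squared Euclidean norm decomposes over the product structure $\mathbb R^{d}=\mathbb R^{d_a}\times\mathbb R^{d_u}$:
\begin{align*}
\log\varphi_n(x)
= -\tfrac{d}{2}\log(2\pi\sigma^2)
-\frac{\|a-\alpha_t a_0^{(n)}\|_2^2}{2\sigma^2}
-\frac{\|u-\alpha_t u_0^{(n)}\|_2^2}{2\sigma^2}.
\end{align*}

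Next we differentiate with respect to $u$. The constant and the $a$-dependent term drop out, leaving
\begin{align*}
\nabla_u\log\varphi_n(x) = -\frac{1}{\sigma^2}\big(u-\alpha_t u_0^{(n)}\big).
\end{align*}
This expression does not depend on $a$, so applying $\nabla_a$ gives the zero $d_a\times d_u$ block, which proves the claim. Equivalently, the full Hessian $\nabla_x^2\log\varphi_n = -\sigma^{-2}I_d$ is block-diagonal, and its off-diagonal block is exactly $\nabla_a\nabla_u\log\varphi_n$.

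There is no real obstacle. The only point to state explicitly is that $\sigma=\sigma(t)$ and $\alpha_t$ are constant in $x$, so differentiation does not act on them. A second point worth noting is that the argument depends on isotropy, or more generally on block-diagonal covariance, which is the structural reason the paper can attribute all cross-coupling to the responsibility gradients in \cref{lemma:cross-block-from-w}.
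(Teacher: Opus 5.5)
Your proof is correct and matches the paper's argument: isotropy splits $\log\varphi_n$ into separate $a$- and $u$-quadratics, so $\nabla_u\log\varphi_n=-\sigma^{-2}\big(u-\alpha_t u_0^{(n)}\big)$ does not depend on $a$, and applying $\nabla_a$ gives zero. Your remark that the full Hessian $-\sigma^{-2}I_d$ is block-diagonal is an equivalent restatement of the same point.
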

\begin{proof}
Let $(a_c,u_c) \coloneqq \alpha_t x_0^{(n)}$.
The log-density of an isotropic Gaussian kernel is:
\begin{align*}
\log \varphi_n(x)
&=
-\frac{1}{2\sigma^2}\|a-a_c\|_2^2
-\frac{1}{2\sigma^2}\|u-u_c\|_2^2
+\mathrm{const},
\end{align*}
which contains no coupling term of the form $\langle a-a_c,\,u-u_c\rangle$.
Therefore,
\begin{align*}
\nabla_u \log \varphi_n(x)
&=
-\frac{1}{\sigma^2}(u-u_c),
\end{align*}
and differentiating w.r.t.\ $a$ yields
\begin{align*}
\nabla_a \nabla_u \log \varphi_n(x)
&= 0.
\end{align*}
\end{proof}
\cref{lemma:cross-block-from-w} and \cref{lemma:single-gaussian-cross} distinguish the source of $a$-$u$ coupling.
Specifically,
\begin{remark}[Source of $a$-$u$ coupling in the empirical score]
\label{rem:responsibility-source}
\cref{lemma:single-gaussian-cross} shows that a single isotropic Gaussian component
induces no $a$-$u$ coupling.
Since
\begin{align*}
\log\Big(\sum_{n=1}^N \varphi_n(x)\Big)
\neq
\sum_{n=1}^N \log \varphi_n(x),
\end{align*}
any nonzero $a$-$u$ coupling in the empirical log-density
$\log p^{(N)}(x)=\log\sum_{n=1}^N \varphi_n(x)$
must arise exclusively from the $\log\sum$ operation,
i.e., through the dependence of the responsibilities $w_n(x)$.
\end{remark}
Using the closed-form empirical score (\cref{lemma:closed-form-empirical-score}),
we now make explicit the cross-partial $\partial_a s_{N,u}(x,t)$ that appears in the
block-form guidance (\cref{lemma:block-guidance}).
From \cref{lemma:closed-form-empirical-score}, the $u$-component of the empirical score is
$s_{N,u}(x,t)
=
\frac{1}{\sigma^{2}(t)}
\sum_{n=1}^N
w_n(x,t)\,
\big(\alpha_t u_0^{(n)}-u\big)$.
Since $\partial_a(\alpha_t u_0^{(n)}-u)=0$, the cross-partial $\partial_a s_{N,u}(x,t)$ becomes
\begin{align}
\partial_a s_{N,u}(x,t)
=
\frac{\alpha_t}{\sigma^2(t)}
\sum_{n=1}^N
\big(\partial_a w_n(x,t)\big)\,
u_0^{(n)}.
\label{eq:exact-cross-block-from-w}
\end{align}
\cref{eq:exact-cross-block-from-w} is an explicit elaboration of \eqref{eq:cross-block-from-w} in
\cref{lemma:cross-block-from-w}.

Based on \cref{rem:responsibility-source}, we therefore analyze when these gradients vanish or remain nontrivial via responsibility gradients.
Recall the responsibilities
\begin{align}
w_n(x,t)
=
\frac{\varphi_n(x)}{\sum_{j=1}^N \varphi_n(x)}
=
\frac{\varphi\big(x-\alpha_t x_0^{(n)}\big)}
{\sum_{j=1}^N\varphi\big(x-\alpha_t x_0^{(j)}\big)},
\qquad
\sum_{n=1}^N w_n(x,t)=1.
\label{eq:w-def}
\end{align}
Let
$Z(x,t)\coloneqq \sum_{j=1}^N \varphi_j(x,t)$.
Then $w_n(x,t)={\varphi_n(x,t)}/{Z(x,t)}$ with the derivative
\begin{align}
\partial_a w_n(x,t) 
&= \frac{(\partial_a \varphi_n)Z-\varphi_n(\partial_a Z)}{Z^2} \nonumber \\
&= \frac{\varphi_n}{Z} \Big(\frac{\partial_a \varphi_n}{\varphi_n} - \sum_{j=1}^N \frac{\varphi_j}{Z} \frac{\partial_a \varphi_j}{\varphi_j} \Big) \annot{By $\partial_a Z = \sum\partial_a \varphi_j $} \nonumber \\
&= w_n(x,t)\Big(\partial_a \log \varphi_n(x,t)-\sum_{j=1}^N w_j(x,t)\,\partial_a \log \varphi_j(x,t)\Big).
\label{eq:dw-identity-varphi}
\end{align}
The following two theorems characterize this behavior through
the geometric position of $x$ relative to the training samples
$\{x_0^{(n)}\}_{n=1}^N$.
They separate two cases: a \emph{local} regime, where $x$ is dominated by a single
mixture component, and an \emph{overlap} regime, where at least two components have
comparable responsibilities.

\noindent\emph{Case 1 (local regime).}
\begin{theorem}[Local dominance implies vanishing responsibility gradients]
\label{thm:local-dominance-gradw}
Fix $(x,t)$ and define responsibilities $w_n(x,t)$ by \eqref{eq:w-def}.
Assume there exists an index $k\in[N]$ such that
\begin{align}
\sum_{j\neq k}\varphi\big(x-\alpha_t x_0^{(j)}\big)
\le
\eta\,\varphi\big(x-\alpha_t x_0^{(k)}\big)
\label{eq:eta-dominance-thm}
\end{align}
for some $\eta\in(0,1)$. 
Then the responsibility gradients satisfy
\begin{align}
\big\|\partial_a w_k(x,t)\big\|
=
O({\eta}),
\qquad
\sum_{j\neq k}\big\|\partial_a w_j(x,t)\big\|
=
O({\eta}),
\label{eq:dwrest-bigO}
\end{align}
which vanishes as $\eta \to 0$.
\end{theorem}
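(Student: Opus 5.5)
The plan is to work directly from the responsibility-derivative identity \eqref{eq:dw-identity-varphi}, which expresses every $\partial_a w_n$ as $w_n$ times a centred score difference. The first step is to introduce the shorthand $v_n \coloneqq \partial_a \log \varphi_n(x,t)$. From the isotropic Gaussian form used in \cref{lemma:single-gaussian-cross},
\begin{align*}
v_n = -\frac{1}{\sigma^2(t)}\big(a-\alpha_t a_0^{(n)}\big),
\qquad\text{so}\qquad
v_n - v_m = \frac{\alpha_t}{\sigma^2(t)}\big(a_0^{(n)}-a_0^{(m)}\big).
\end{align*}
The point to notice is that pairwise differences of the $v_n$ do not depend on $x$. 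They are bounded by $C_t \coloneqq \alpha_t D_a/\sigma^2(t)$, where $D_a \coloneqq \max_{n,m}\|a_0^{(n)}-a_0^{(m)}\|$ is the diameter of the training coefficients. This constant will be the implicit constant in the $O(\eta)$ bounds.

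The second step converts the dominance hypothesis \eqref{eq:eta-dominance-thm} into bounds on the responsibilities. Divide \eqref{eq:eta-dominance-thm} by $Z=\sum_j\varphi_j$. This gives
\begin{align*}
w_k \ge \frac{1}{1+\eta},
\qquad
\sum_{j\neq k} w_j = 1-w_k \le \frac{\eta}{1+\eta} \le \eta .
\end{align*}
Write $\bar v \coloneqq \sum_j w_j v_j$ for the responsibility-weighted mean. Because $\sum_j w_j = 1$, we can rewrite $v_n-\bar v=\sum_{j} w_j (v_n-v_j)$.

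The third step treats the dominant index and the others separately. For $n=k$, the term $j=k$ vanishes, so
\begin{align*}
\|\partial_a w_k\| = w_k\Big\|\sum_{j\neq k} w_j (v_k-v_j)\Big\| \le \sum_{j\neq k} w_j\, C_t \le C_t\,\eta .
\end{align*}
For $j\neq k$, we have $\|v_j-\bar v\|\le \sum_m w_m\|v_j-v_m\|\le C_t$. Hence
\begin{align*}
\sum_{j\neq k}\|\partial_a w_j\| \le C_t\sum_{j\neq k} w_j \le C_t\,\eta .
\end{align*}
Together these give \eqref{eq:dwrest-bigO}, and both bounds tend to zero as $\eta\to 0$.

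I do not expect a technical obstacle, since the argument is elementary once the identity is available. The step needing care is the bookkeeping of the constant. The $O(\cdot)$ is uniform in $x$ because only differences $v_n-v_m$ appear; the raw $v_n$ grow with $\|a\|$ and would not give a uniform bound. The constant does, however, depend on $t$ and the data through $\alpha_t D_a/\sigma^2(t)$, and I would state this explicitly. It also shows why the informal \cref{thm:local-dominance-gradw-informal} needs $\eta$ small relative to $\sigma^2(t)/(\alpha_t D_a)$ at small noise levels.
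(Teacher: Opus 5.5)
Your proof is correct and follows the paper's argument. It uses the same identity \eqref{eq:dw-identity-varphi}, converts \eqref{eq:eta-dominance-thm} into $w_k\ge 1/(1+\eta)$ and $\sum_{j\neq k}w_j\le \eta/(1+\eta)$ in the same way, and makes the same split into the dominant index and the rest. The only difference is the constant: the paper bounds each score difference by $2G(x,t)$ with $G(x,t)=\max_j\|\partial_a\log\varphi_j(x,t)\|$, which grows with $\|a\|$, whereas your observation that $v_n-v_m=\alpha_t\,(a_0^{(n)}-a_0^{(m)})/\sigma^2(t)$ does not depend on $x$ gives the sharper, $x$-uniform constant $\alpha_t D_a/\sigma^2(t)$.
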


\begin{proof}
Under \eqref{eq:eta-dominance-thm},
\begin{align*}
Z(x,t)
=
\varphi_k(x,t)+\sum_{j\neq k}\varphi_j(x,t)
\le (1+\eta)\varphi_k(x,t),
\end{align*}
hence
\begin{align}
w_k(x,t)=\frac{\varphi_k}{Z}\ge \frac{1}{1+\eta},
\qquad
\sum_{j\neq k}w_j(x,t)=1-w_k(x,t)\le \frac{\eta}{1+\eta}.
\label{eq:offmass-bound}
\end{align}

Define the local bound
\begin{align}
G(x,t)\coloneqq \max_{1\le j\le N}\big\|\partial_a\log\varphi_j(x,t)\big\|.
\label{eq:local-bound}
\end{align}

Then $\partial_a w_n$ \eqref{eq:dw-identity-varphi} is bounded by the following two cases:

\emph{Case A: $n=k$.}
\begin{align}
\|\partial_a w_k(x,t)\| 
&= w_k \|\partial_a \log \varphi_k - \sum_j w_j\partial_a \log \varphi_j\| \nonumber \\
&= w_k \|\sum_{j\neq k} w_j \big(\partial_a \log \varphi_k - \partial_a \log \varphi_j\big) \| \nonumber \\
&\leq w_k \sum_{j\neq k} w_j \|\big(\partial_a \log \varphi_k - \partial_a \log \varphi_j\big) \| \nonumber \\
&\leq 2G(x,t) \,w_k\sum_{j\neq k} w_j \annot{By \eqref{eq:local-bound}} \nonumber \\
&\leq 2G(x,t)\frac{\eta}{1+\eta}. \annot{By \eqref{eq:offmass-bound}}
\end{align}

\emph{Case B: $n\neq k$.}
\begin{align}
\|\partial_a w_j(x,t)\|
&= w_j \|\partial_a \log \varphi_j - \sum_m w_m\partial_a \log \varphi_m\| \nonumber \\
&\leq w_j \Big( \|\partial_a \log \varphi_j \| + \sum_m w_m \|\partial_a \log \varphi_m\| \Big) \nonumber \\
&\leq 2G(x,t) w_j,
\qquad j\neq k.
\annot{By \eqref{eq:local-bound}}
\end{align}

Combining bounds from \emph{Case A} and \emph{Case B},
\begin{align}
\sum_{n=1}^N \|\partial_a w_n\| 
&= \| \partial_a w_k\| + \sum_{j\neq k} \| \partial_a w_j \| \nonumber \\
&\leq 2G\frac{\eta}{1+\eta} + 2G \sum_{j\neq k}w_j \nonumber \\
&\leq 4G \frac{\eta}{1+\eta}. \annot{By \eqref{eq:offmass-bound}} \nonumber \\
& = O(\eta) \nonumber
\end{align}
\end{proof}

\cref{thm:local-dominance-gradw} leads to a sufficient local-dominance condition for guidance attenuation during sampling process.
\begin{corollary}[Local dominance implies guidance attenuation]
\label{cor:local-dominance-attenuation}
Define $\varphi_n(x,t)\coloneqq \varphi\big(x-\alpha_t x_0^{(n)}\big)$.
If the diffusion state $x_t$ lies in a neighborhood of a single mixture center $\alpha_t x_0^{(k)}$ in the sense that
\begin{align}
\varphi_k(x_t,t)
\gg
\varphi_j(x_t,t)
\qquad \forall j\neq k,
\label{eq:local-dominance}
\end{align}
the $a$-component of the scale-free guidance (\cref{def:joint-guidance}) attenuates:
\begin{align*}
g_a(x_t,t)
\approx 0.
\end{align*}
\end{corollary}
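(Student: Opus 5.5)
The argument chains \cref{lemma:block-guidance}, the empirical-score approximation (\cref{lemma:empirical-score}), the cross-partial identity \eqref{eq:exact-cross-block-from-w}, and \cref{thm:local-dominance-gradw}. First, I would translate the informal dominance condition \eqref{eq:local-dominance} into the quantitative hypothesis \eqref{eq:eta-dominance-thm}: $\varphi_k \gg \varphi_j$ for all $j\neq k$ means $\sum_{j\neq k}\varphi_j \le \eta\,\varphi_k$ with $\eta\ll 1$. Here $\eta$ is at most $(N-1)\max_{j\neq k}\varphi_j/\varphi_k$, and this is small for fixed $N$. \cref{thm:local-dominance-gradw} then gives $\sum_n\|\partial_a w_n(x_t,t)\| \le 4G(x_t,t)\,\eta/(1+\eta)$, where $G$ is the local bound \eqref{eq:local-bound}.

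Next, I would pass from responsibilities to the cross-partial of the score. Using \eqref{eq:exact-cross-block-from-w},
\begin{align*}
\|\partial_a s_{N,u}(x_t,t)\| \le \frac{\alpha_t}{\sigma^2(t)}\max_n\|u_0^{(n)}\|\sum_n\|\partial_a w_n\| = O(\eta).
\end{align*}
A cleaner route would subtract a constant first: since $\sum_n \partial_a w_n = 0$, I can replace $u_0^{(n)}$ by $u_0^{(n)}-u_0^{(k)}$, so the bound involves only differences between training solutions. By \cref{lemma:block-guidance}, $g_a \propto \sigma_t^2\,\partial_{a_t}s_{\theta,u}^\top r$. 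I would write $s_\theta = s_N + (s_\theta - s_N)$ and bound the second piece using \cref{lemma:empirical-score}. This yields
\begin{align*}
\|g_a\| \lesssim \sigma_t^2\big(\|\partial_a s_{N,u}\| + \|\partial_a(s_{\theta,u}-s_{N,u})\|\big)\|r\|,
\end{align*}
with the residual $\|r\|$ bounded because $\hat x_0$ is a convex combination of training points plus an $O(\varepsilon_N)$ perturbation.

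The main obstacle is that \cref{lemma:empirical-score} controls $s_\theta-s_N$ only in sup norm, not its derivative in $a$. I would first try to strengthen the assumption to a $C^1$ approximation, $\sup\|\partial_a(s_\theta-s_N)\|\le\varepsilon_N'(t)$, and state the conclusion as $g_a = O(\eta)+O(\varepsilon_N')$, which is the precise meaning of "$\approx 0$". A second concern is that $G(x_t,t)=\max_j\|a-\alpha_t a_0^{(j)}\|/\sigma^2$ could be large for far components. I would handle this by keeping the weights inside the sum rather than pulling out a max: terms with $w_j\|a-\alpha_t a_0^{(j)}\|$ decay because $w_j \propto \exp(-\|x-\alpha_t x_0^{(j)}\|^2/2\sigma^2)$ dominates any polynomial factor. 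This keeps the effective constant bounded by quantities local to $x_t$, so the $O(\eta)$ bound is genuinely small.
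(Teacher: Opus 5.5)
Your proposal is correct and follows the paper's proof step for step. You quantify dominance as $\sum_{j\neq k}\varphi_j\le\eta\,\varphi_k$, apply \cref{thm:local-dominance-gradw} to get $\sum_n\|\partial_a w_n\|\le 4G\eta/(1+\eta)$, push this through \eqref{eq:exact-cross-block-from-w}, and transfer the bound to $s_\theta$ and $g_a$ via \cref{lemma:empirical-score} and \cref{lemma:block-guidance}.

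Both of your caveats are real weaknesses that the paper's proof passes over. It treats sup-norm closeness of $s_\theta$ to $s_N$ as if it controlled $\partial_a s_{\theta,u}$, and your $C^1$ assumption is the right repair. It also never addresses that $G$ grows with the distance to far centers. For that second issue, a simpler fix than keeping weights inside is to note that $\partial_a\log\varphi_m-\partial_a\log\varphi_j=\alpha_t(a_0^{(m)}-a_0^{(j)})/\sigma^2(t)$ does not depend on $x$. Writing $\partial_a w_j=w_j\sum_m w_m(\partial_a\log\varphi_j-\partial_a\log\varphi_m)$ then replaces $G$ by the $x$-independent constant $\alpha_t\max_{m,j}\|a_0^{(m)}-a_0^{(j)}\|/\sigma^2(t)$, which gives a clean $O(\eta)$ bound.
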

\begin{proof}
Let $\eta$ satisfying
\begin{align*}
\sum_{j\neq k}\varphi\big(x-\alpha_t x_0^{(j)}\big)
\le
\eta\,\varphi\big(x-\alpha_t x_0^{(k)}\big),
\end{align*}
then the $\eta$ must be small due to \eqref{eq:local-dominance}.
Define the local bound
\begin{align*}
G(x,t)\coloneqq \max_{1\le j\le N}\big\|\partial_a\log\varphi_j(x,t)\big\|.
\end{align*}
Combining with \eqref{eq:exact-cross-block-from-w} yields
\begin{align*}
\big\|\partial_a s_{N,u}(x,t)\big\|
&\le
\frac{\alpha_t}{\sigma^2(t)}
\Big(\max_{n\in[N]}\|u_0^{(n)}\|\Big)
\sum_{n=1}^N \|\partial_a w_n(x,t)\| \\
&\le
\frac{4\alpha_t}{\sigma^2(t)}
\Big(\max_{n\in[N]}\|u_0^{(n)}\|\Big)
G(x,t)\frac{\eta}{1+\eta}.
\end{align*}
With small $\eta$, the cross-partial of the empirical score vanishes
\begin{align*}
\partial_a s_{N,u}(x,t)\approx 0.
\end{align*}
From \cref{lemma:empirical-score}, the trained score model $s_\theta$ approximates the empirical score $s_N$.
Thus, from \eqref{eq:block-guidance-final}, the $a$-component of the guidance is  attenuated
\begin{align*}
g_a(x,t)
\propto
\sigma_t^2\,\partial_{a_t}s_{\theta,u}(x,t)^\top\,r(x,t)
\approx 0.
\end{align*}
\end{proof}
The first case (\cref{thm:local-dominance-gradw,cor:local-dominance-attenuation}) show that, under local regime, responsibility gradients sufficiently vanish.
Next, we show the complementary case: nontrivial responsibility gradients occur only when $x$ lies in an overlap regime.

\noindent\emph{Case 2 (overlap regime).}
\begin{theorem}[Nontrivial responsibility gradients require overlap]
\label{thm:dw-implies-overlap}
Fix $(x,t)$ and define responsibilities $w_n(x,t)$ by \eqref{eq:w-def}.
Let the local bound $G(x,t)\coloneqq \max_{1\le j\le N}\big\|\partial_a\log\varphi_j(x,t)\big\|$
and let $p\in\arg\max_{j\in[N]} w_j(x,t)$ denote an index attaining the maximal responsibility.
If there exists an index $n\in[N]$ such that
\begin{align}
\big\|\partial_a w_n(x,t)\big\|\ge \delta
\qquad \text{for some }\delta>0,
\label{eq:dw-nontrivial}
\end{align}
then there exist $q\neq p$ such that
\begin{align}
\Big|
\big\|x-\alpha_t x_0^{(p)}\big\|_2^2
-
\big\|x-\alpha_t x_0^{(q)}\big\|_2^2
\Big|
\le
2\sigma^2(t)\log\Big(\frac{1-\tau(\delta)}{\tau(\delta)}\Big),
\qquad
\tau(\delta)\coloneqq \frac{\delta}{2G(x,t)(N-1)},
\label{eq:overlap-sqdist}
\end{align}
so at least two components have comparable mass at $(x,t)$.
\end{theorem}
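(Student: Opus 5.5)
The proof will be the contrapositive of the local-dominance argument in \cref{thm:local-dominance-gradw}, run through the identity \eqref{eq:dw-identity-varphi}. Fix $(x,t)$ and let $p$ maximize $w_j(x,t)$. The first step is to show that a nontrivial gradient forces the combined mass of the non-maximal components to be at least of order $\tau(\delta)$.

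For this, bound each $\|\partial_a w_n\|$ using only the off-$p$ mass $S \coloneqq \sum_{j\neq p} w_j = 1-w_p$.
\begin{itemize}
\item For $n=p$, exactly as in Case A of \cref{thm:local-dominance-gradw}, rewrite
\[
\partial_a\log\varphi_p-\sum_j w_j\partial_a\log\varphi_j=\sum_{j\neq p}w_j(\partial_a\log\varphi_p-\partial_a\log\varphi_j).
\]
This gives $\|\partial_a w_p\|\le 2G\,w_p S\le 2G S$.
\item For $n\neq p$, Case B gives $\|\partial_a w_n\|\le 2G w_n\le 2GS$.
\end{itemize}
So in either case $\delta\le\|\partial_a w_n\|\le 2G S$, and hence $S\ge \delta/(2G)$.

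Since $S$ is split among $N-1$ indices, pigeonhole gives some $q\neq p$ with
\[
w_q\ge \frac{S}{N-1}\ge \frac{\delta}{2G(N-1)}=\tau(\delta).
\]

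The second step converts this into the distance statement. Because all components share the same isotropic kernel, the normalizations cancel in the ratio
\[
\frac{w_p}{w_q}=\frac{\varphi_p}{\varphi_q}=\exp\!\Big(\frac{\|x-\alpha_t x_0^{(q)}\|^2-\|x-\alpha_t x_0^{(p)}\|^2}{2\sigma^2(t)}\Big).
\]
By maximality of $p$ this ratio is at least $1$, so the quantity inside the absolute value in \eqref{eq:overlap-sqdist} equals $2\sigma^2(t)\log(w_p/w_q)\ge 0$. For the upper bound use $w_p\le 1-w_q$, so that
\[
\frac{w_p}{w_q}\le \frac{1-w_q}{w_q}\le \frac{1-\tau}{\tau},
\]
where the last inequality holds because $s\mapsto(1-s)/s$ is decreasing and $w_q\ge\tau$. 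Taking logarithms yields \eqref{eq:overlap-sqdist}.

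The main obstacle is making sure the constants line up. Case A is the delicate one: the bound $2Gw_pS\le 2GS$ must be used rather than the cruder $2G$. One must also require $\tau(\delta)\le 1/2$ so that the logarithm is nonnegative. This holds automatically: $w_q\le w_p$ and $w_p+w_q\le 1$ give $\tau\le w_q\le 1/2$, and if $\tau$ were larger than $1/2$ the hypothesis would be vacuous. Finally, if $G=0$ all responsibility gradients vanish, which contradicts \eqref{eq:dw-nontrivial}, so $\tau(\delta)$ is well defined whenever the hypothesis holds.
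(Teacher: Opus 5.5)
Your proof is correct and follows the paper's route: the identity \eqref{eq:dw-identity-varphi} gives $1-w_p\ge\delta/(2G)$, pigeonhole yields $q\neq p$ with $w_q\ge\tau(\delta)$, and the isotropic-Gaussian log-ratio turns $w_p/w_q\le(1-\tau)/\tau$ into the squared-distance bound. Your write-up is more careful than the paper's, which derives $1-w_p\ge\delta/(2G)$ only from $\|\partial_a w_p\|\ge\delta$ and asserts $w_p\ge 1-w_p$ without justification, whereas you cover $n\neq p$ via $w_n\le 1-w_p$ and get $w_p\ge w_q$ from maximality; the one slip to fix is a sign, namely $\|x-\alpha_t x_0^{(p)}\|_2^2-\|x-\alpha_t x_0^{(q)}\|_2^2=-2\sigma^2(t)\log(w_p/w_q)$, so it is the absolute value, not the quantity inside it, that equals $2\sigma^2(t)\log(w_p/w_q)$.
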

\begin{proof}
By the identity \eqref{eq:dw-identity-varphi},
\begin{align*}
\partial_a w_n
=
w_n\Big(\partial_a \log \varphi_n-\sum_{j=1}^N w_j\,\partial_a \log \varphi_j\Big).
\end{align*}
Using the definition of local bound and $\sum_j w_j=1$,
\begin{align*}
\big\|\partial_a w_n\big\|
&\le
w_n\Big(\big\|\partial_a\log\varphi_n\big\|
+\sum_{j=1}^N w_j\big\|\partial_a\log\varphi_j\big\|\Big) \le
2G\,w_n.
\end{align*}
Thus \eqref{eq:dw-nontrivial} implies
\begin{align*}
w_n\ge \frac{\delta}{2G}.
\end{align*}
Since $p\in\arg\max_{j\in[N]} w_j(x,t)$, $w_p(x,t)\ge w_n(x,t)\ge \delta/(2G(x,t))$.
Moreover,
\begin{align*}
\sum_{j\neq p} w_j(x,t)=1-w_p(x,t)\ge 1-\frac{\delta}{2G(x,t)}.
\end{align*}
By the pigeonhole principle, there exists $q\neq p$ such that
\begin{align*}
w_q(x,t)\ge \frac{1-w_p(x,t)}{N-1}\ge \frac{1}{N-1}\Big(1-\frac{\delta}{2G(x,t)}\Big).
\end{align*}
This already yields two non-negligible responsibilities whenever $w_p$ is bounded away from $1$.
To obtain the explicit $\tau(\delta)$ bound, use instead the standard inequality
\begin{align}
\big\|\partial_a w_p(x,t)\big\|
\le 2G(x,t)\sum_{j\neq p} w_j(x,t)
=2G(x,t)\big(1-w_p(x,t)\big),
\label{eq:wp-upper}
\end{align}
which follows from the Case A derivation (rewrite the bracket as a sum over $j\neq p$ and use $G$).
If $\|\partial_a w_p(x,t)\|\ge \delta$, then \eqref{eq:wp-upper} implies $1-w_p(x,t)\ge \delta/(2G(x,t))$,
hence there exists $q\neq p$ with
\begin{align*}
w_q(x,t)\ge \frac{1-w_p(x,t)}{N-1}\ge \frac{\delta}{2G(x,t)(N-1)}=\tau(\delta).
\end{align*}
Together with $w_p(x,t)\ge 1-w_p(x,t)\ge \tau(\delta)$, we obtain:
\begin{align*}
w_p(x,t)\ge \tau(\delta),
\qquad
w_q(x,t)\ge \tau(\delta).
\end{align*}
For isotropic Gaussian kernels defined in \cref{lemma:closed-form-empirical-score},
\begin{align*}
\varphi_n(x,t) = 
(2\pi\sigma^2(t))^{-d/2}
\exp\Big(
-\frac{|x-\alpha_t x_0^{(n)}|_2^2}{2\sigma^2(t)}
\Big).
\end{align*}
Hence, for any two indices $p\neq q$
\begin{align}
\log\frac{\varphi_p(x,t)}{\varphi_q(x,t)}
&=
-\frac{1}{2\sigma^2(t)}
\Big(
|x-\alpha_t x_0^{(p)}|_2^2 -
|x-\alpha_t x_0^{(q)}|_2^2
\Big).
\label{eq:log-ratio-gauss}
\end{align}
If $w_p,w_q\ge \tau$ and $\sum_j w_j=1$, then
\begin{align}
\frac{\tau}{1-\tau}
\le
\frac{w_p(x,t)}{w_q(x,t)}
\le
\frac{1-\tau}{\tau}.
\label{eq:w-ratio-bound}
\end{align}
Combining \eqref{eq:log-ratio-gauss} and \eqref{eq:w-ratio-bound} yields
\begin{align}
\Big|
|x-\alpha_t x_0^{(p)}|_2^2-
|x-\alpha_t x_0^{(q)}|_2^2
\Big|
&= 2\sigma^2(t)\log\frac{\varphi_p(x,t)}{\varphi_q(x,t)} \annot{By \eqref{eq:log-ratio-gauss}}\nonumber \\
&= 2\sigma^2(t)\log\frac{w_p(x,t)}{w_q(x,t)} \annot{Since $w_n= \varphi_n/\sum_j\varphi_j$ } \nonumber \\
&\le
2\sigma^2(t)\log\Big(\frac{1-\tau}{\tau}\Big), \annot{By \eqref{eq:w-ratio-bound}} \nonumber
\end{align}
which is exactly \eqref{eq:overlap-sqdist}.
\end{proof}

\cref{thm:dw-implies-overlap} leads to a necessary overlapping condition for non-vanishing guidance during sampling process.
\begin{corollary}[Non-vanishing guidance requires overlap]
\label{cor:nonvanishing-requires-overlap}
For the $a$-component of the scale-free guidance (\cref{def:joint-guidance}) to be non-zero by
\begin{align*}
\|g_a(x_t,t)\| > 0,
\end{align*}
the state $x_t$ must lie in an overlap region of at least two mixture components, in the sense that there exist $p\neq q$ satisfying
\begin{align*}
\Big|
\|x_t-\alpha_t x_0^{(p)}\|_2^2
-
\|x_t-\alpha_t x_0^{(q)}\|_2^2
\Big|
\lesssim
\sigma^2(t).
\end{align*}
\end{corollary}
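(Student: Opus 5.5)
We prove the statement by contraposition, reducing it to \cref{thm:dw-implies-overlap} through the chain of identities already established. The plan is to trace the non-vanishing of $g_a$ back to a non-vanishing responsibility gradient, and then read off the overlap condition.

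\textbf{Step 1: from $g_a$ to the cross-partial of the score.} By \cref{lemma:block-guidance}, $g_a(x_t,t)\propto \sigma_t^2\,\partial_{a_t}s_{\theta,u}(x_t,t)^\top r(x_t,t)$. Hence $\|g_a\|>0$ forces $\partial_{a_t}s_{\theta,u}(x_t,t)\neq 0$.

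\textbf{Step 2: pass to the empirical score.} Invoking \cref{lemma:empirical-score}, we replace $s_\theta$ by $s_N$. The statement is interpreted in the regime where the score-approximation error $\varepsilon_N(t)$ (understood as controlling the Jacobian discrepancy as well) is negligible relative to $g_a$. Then a non-negligible $g_a$ yields a non-negligible $\partial_a s_{N,u}(x_t,t)$.

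\textbf{Step 3: from the cross-partial to responsibility gradients.} By \eqref{eq:exact-cross-block-from-w},
\begin{align*}
\|\partial_a s_{N,u}\|\le \frac{\alpha_t}{\sigma^2(t)}\Big(\max_n\|u_0^{(n)}\|\Big)\sum_{n}\|\partial_a w_n\|.
\end{align*}
So some $n$ satisfies $\|\partial_a w_n(x_t,t)\|\ge\delta>0$, with $\delta$ bounded below explicitly by $\|g_a\|$ divided by $N$ and by these constants.

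\textbf{Step 4: apply the overlap theorem.} We now apply \cref{thm:dw-implies-overlap} with this $\delta$. It gives $p\neq q$ whose squared distances to $x_t$ differ by at most $2\sigma^2(t)\log\big((1-\tau)/\tau\big)$, where $\tau=\delta/(2G(N-1))$. Absorbing this logarithmic factor into the constant gives the claimed $\lesssim\sigma^2(t)$.

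\textbf{Main obstacle.} The delicate point is Step 2 together with the meaning of ``$\lesssim$''. The factor $\log((1-\tau)/\tau)$ is only $O(1)$ when $\delta$ is not exponentially small relative to $G(N-1)$. Likewise, a sup-norm bound on $s_\theta-s_N$ does not by itself control derivatives.

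I would handle both by stating the result for \emph{non-negligible} guidance, $\|g_a\|\ge c>0$, and assuming Jacobian-level closeness of $s_\theta$ to $s_N$. The implied constant in $\lesssim$ is then $2\log((1-\tau)/\tau)$ with $\tau$ determined by $c$, $G$, $N$, $\alpha_t$, $\sigma_t$ and $\max_n\|u_0^{(n)}\|$. The theorem's proof also uses the case $n=p$ to get $1-w_p\ge\delta/(2G)$. If the large gradient occurs at some $n\neq p$, the bound $w_n\ge\delta/(2G)$ together with $w_p\ge w_n$ already gives two responsibilities of size at least $\tau$, so both cases lead to the same conclusion.
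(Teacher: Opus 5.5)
Your proposal is correct and follows the paper's own chain: \cref{lemma:block-guidance} and \eqref{eq:exact-cross-block-from-w} turn $\|g_a\|>0$ into $\|\partial_a w_n\|>0$ for some $n$, and the overlap bound is then read off from \cref{thm:dw-implies-overlap} (the paper's two-line proof cites \cref{thm:local-dominance-gradw} at this point, but the bound it needs is the one in \cref{thm:dw-implies-overlap}). Your extra care fills in points the paper glosses over: making $\delta$ quantitative so that $\log((1-\tau)/\tau)$ is genuinely $O(1)$, requiring derivative-level rather than sup-norm closeness of $s_\theta$ to $s_N$, and handling a large gradient at $n\neq p$ via $w_p\ge w_n\ge\tau$; note only that your lower bound on $\delta$ also depends on $\|r(x_t,t)\|$, since $g_a$ is linear in the residual.
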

\begin{proof}
From \cref{lemma:block-guidance} and \cref{eq:exact-cross-block-from-w},
non-vanishing guidance implies $\|\partial_a w_n(x,t)\|>0$ for some $n$.
The claim then follows directly from \cref{thm:local-dominance-gradw}.
\end{proof}
A contrapositive statement of \cref{cor:nonvanishing-requires-overlap} is:
\begin{remark}[Non-overlap implies guidance attenuation]
\label{remark:nonoverlap-attenuation}
If the diffusion state $x_t$ does not lie in an overlap region of the mixture,
then the $a$-component of the scale-free guidance vanishes by
$g_a(x_t,t)\approx 0$.
\end{remark}

\cref{cor:local-dominance-attenuation},
\cref{cor:nonvanishing-requires-overlap}, and \cref{remark:nonoverlap-attenuation}
together yield a geometric interpretation of guidance behavior in
joint-embedding diffusion models:
(i) far from all mixture centers, guidance attenuates;
(ii) even when close to a single mixture center, guidance still attenuates; and
(iii) nonzero guidance is possible only when $x$ lies near at least two mixture
centers simultaneously.
Under data scarcity, the overlap between mixture components may be absent altogether, causing coefficient-space guidance to collapse throughout the sampling process.

%% file: appendix_daps_failure.tex
\label{app:daps-failure}
In this appendix, we provide formal proofs for the covariance-collapse results used in \cref{sec:daps_failure}.
We work in a separable Hilbert space $\mathcal{H}$ with inner product $\langle \cdot, \cdot \rangle$.
Let $\delta_{x_i}: \mathcal{H} \to \mathbb{R}$ denote the point evaluation functional at location $x_i \in \Omega$, satisfying $\langle \delta_{x_i}, f \rangle = f(x_i)$ for $f \in \mathcal{H}$.

Starting from the potential $U$ in \cref{eq:potential}, the functional gradient is
\begin{align}
\nabla U(x) = C^{-1}(x - x_0) + \frac{1}{\sigma_s^2} \delta_{x_i} (\langle \delta_{x_i}, x \rangle - c).
\label{eq:gradient}
\end{align}
Substituting \cref{eq:gradient} into \eqref{eq:langevin} and collecting affine terms yields the (preconditioned) linear SDE
\begin{align}
dx_t = -\Sigma\left(C^{-1} + \frac{1}{\sigma_s^2} \delta_{x_i} \otimes \delta_{x_i}\right) x_t\, dt + b\, dt + \sqrt{2} \Sigma^{1/2} dW_t,
\label{eq:linear_sde}
\end{align}
for some drift element $b \in \mathcal{H}$ that does not affect the stationary covariance.

\begin{lemma}[Lyapunov Equation for Stationary Covariance]
\label{lemma:lyapunov}
Let $B \coloneqq C^{-1} + \frac{1}{\sigma_s^2} \delta_{x_i} \otimes \delta_{x_i}$.
The stationary covariance operator $\Sigma_\infty$ of the linear SDE \eqref{eq:linear_sde} satisfies the Lyapunov equation:
\begin{align}
\Sigma B\, \Sigma_\infty + \Sigma_\infty\, (\Sigma B)^* = 2\Sigma.
\label{eq:lyapunov}
\end{align}
If $C$ and $\Sigma$ are self-adjoint (hence $B$ is self-adjoint), then the stationary covariance is
\begin{align}
\Sigma_\infty = B^{-1}.
\label{eq:stationary_cov}
\end{align}
\end{lemma}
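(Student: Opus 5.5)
Write $A \coloneqq \Sigma B$ for the drift operator of \eqref{eq:linear_sde}. The plan has three steps: derive the Lyapunov equation from the It\^o isometry for an Ornstein--Uhlenbeck process, show its solution is unique under stability, and then check that $B^{-1}$ solves it in the self-adjoint case.

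\textbf{Step 1: the Lyapunov equation.} Since $b$ is a constant drift, it only shifts the mean. The centered process $y_t \coloneqq x_t - \mathbb{E}x_t$ therefore satisfies $dy_t = -A y_t\,dt + \sqrt{2}\Sigma^{1/2}dW_t$.

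We apply It\^o's formula to the operator-valued quantity $y_t\otimes y_t$, or equivalently to $\langle h, y_t\rangle\langle k, y_t\rangle$ for fixed $h,k\in\mathcal H$. Taking expectations, the martingale terms vanish. This gives the covariance evolution
\begin{align*}
\frac{d}{dt}\operatorname{Cov}(x_t) = -A\operatorname{Cov}(x_t) - \operatorname{Cov}(x_t)A^* + 2\,\Sigma^{1/2}(\Sigma^{1/2})^* .
\end{align*}
Because $\Sigma$ is a self-adjoint nonnegative noise covariance, $\Sigma^{1/2}(\Sigma^{1/2})^* = \Sigma$. Setting the time derivative to zero at stationarity yields exactly \eqref{eq:lyapunov}.

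\textbf{Step 2: existence and uniqueness of the stationary covariance.} Here we need $-A$ to generate an exponentially stable semigroup. I would argue this from $B \succeq C^{-1} \succ 0$ together with $\Sigma\succ 0$. The operator $\Sigma B$ is similar to the self-adjoint positive operator $\Sigma^{1/2}B\Sigma^{1/2}$, so its spectrum lies in $(0,\infty)$. Under stability, the unique solution of the Lyapunov equation is
\begin{align*}
\Sigma_\infty=\int_0^\infty e^{-sA}\,2\Sigma\,e^{-sA^*}\,ds .
\end{align*}
This is also the limit of $\operatorname{Cov}(x_t)$ as $t\to\infty$. Uniqueness holds because the difference $D$ of two solutions satisfies $AD + DA^* = 0$, which under stability forces $D = e^{-sA}De^{-sA^*}\to 0$.

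\textbf{Step 3: the explicit solution.} When $C$ and $\Sigma$ are self-adjoint, $B$ is self-adjoint, since $\delta_{x_i}\otimes\delta_{x_i}$ is self-adjoint. Then $A^*=B\Sigma$. Substituting $\Sigma_\infty=B^{-1}$ gives
\begin{align*}
\Sigma B B^{-1}+B^{-1}B\Sigma = 2\Sigma ,
\end{align*}
so $B^{-1}$ solves \eqref{eq:lyapunov}, and by Step 2 it is the stationary covariance \eqref{eq:stationary_cov}. This matches the intuition that preconditioned Langevin dynamics leaves the Gibbs measure $\propto e^{-U}$ invariant, and that measure is Gaussian with precision $B$.

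\textbf{Main obstacle.} I expect the infinite-dimensional rigor to be the hardest part. The point evaluation $\delta_{x_i}$ is not bounded on a general $\mathcal H$, $C^{-1}$ is unbounded, and $\Sigma^{1/2}W$ must be a well-defined (trace-class) noise. I would handle this by working in a discretized setting, or in an RKHS where $\delta_{x_i}\in\mathcal H$, with $\Sigma$ trace class and $C$ trace class and injective. I would then interpret Steps 1--2 via the mild solution, so that only bounded operators $e^{-sA}$ and $B^{-1}$ appear. Note that $B^{-1}$ is bounded by the Sherman--Morrison formula, since it is a rank-one perturbation of $C$.
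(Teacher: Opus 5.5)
Your proposal is correct and follows the same route as the paper. The paper's proof cites the standard Langevin/Ornstein--Uhlenbeck result and then, exactly as in your Step 3, checks that $\Sigma B B^{-1} + B^{-1} B \Sigma = 2\Sigma$ using $(\Sigma B)^* = B\Sigma$. Your Steps 1--2 spell out what the paper delegates to that citation, in particular the stability-based uniqueness of the Lyapunov solution, which is what actually justifies passing from ``$B^{-1}$ solves the equation'' to $\Sigma_\infty = B^{-1}$.
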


\begin{proof}
This is a standard fact for preconditioned Langevin dynamics targeting $\pi(x)\propto \exp(-U(x))$. (See \citet{pavliotis2014stochastic}, Proposition 3.10.) For completeness, we verify that \eqref{eq:stationary_cov} solves \eqref{eq:lyapunov}.
Since $B$ and $\Sigma$ are self-adjoint, $(\Sigma B)^* = B\Sigma$.
Substituting $\Sigma_\infty=B^{-1}$ into \eqref{eq:lyapunov} gives
$
\Sigma B B^{-1} + B^{-1} B \Sigma = \Sigma + \Sigma = 2\Sigma
$, as required.
\end{proof}

\begin{lemma}[Sherman-Morrison Inversion]
\label{lemma:sherman_morrison}
Let $B = C^{-1} + \frac{1}{\sigma_s^2} \delta_{x_i} \otimes \delta_{x_i}$ where $C$ is a positive definite covariance operator. Then:
\begin{align}
B^{-1} = C - \frac{(C \delta_{x_i}) \otimes (C \delta_{x_i})}{\sigma_s^2 + C(x_i, x_i)}
\label{eq:sherman_morrison}
\end{align}
where $C(x_i, x_i) = \langle \delta_{x_i}, C \delta_{x_i} \rangle$ is the pointwise variance at $x_i$.
\end{lemma}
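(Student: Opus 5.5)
The plan is to verify the Sherman--Morrison formula \eqref{eq:sherman_morrison} directly: write $v \coloneqq C\delta_{x_i}$ and $\gamma \coloneqq \sigma_s^2 + C(x_i,x_i)$, set $R \coloneqq C - \gamma^{-1}\, v\otimes v$, and show $BR = I$ and $RB = I$ on the appropriate domain. Throughout, $(f\otimes g)h \coloneqq \langle g,h\rangle f$. Self-adjointness of $C$ then gives $\langle \delta_{x_i}, C h\rangle = \langle C\delta_{x_i}, h\rangle = \langle v,h\rangle$ and $\langle \delta_{x_i}, v\rangle = C(x_i,x_i)$.

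First I will show that $R$ is a right inverse. Write $BR = C^{-1}C - \gamma^{-1}C^{-1}(v\otimes v) + \sigma_s^{-2}(\delta_{x_i}\otimes\delta_{x_i})C - \sigma_s^{-2}\gamma^{-1}(\delta_{x_i}\otimes\delta_{x_i})(v\otimes v)$ and simplify term by term.
\begin{itemize}
\item $C^{-1}(v\otimes v) = \delta_{x_i}\otimes v$.
\item $(\delta_{x_i}\otimes\delta_{x_i})C = \delta_{x_i}\otimes C\delta_{x_i} = \delta_{x_i}\otimes v$, using self-adjointness.
\item $(\delta_{x_i}\otimes\delta_{x_i})(v\otimes v) = \langle\delta_{x_i},v\rangle\,\delta_{x_i}\otimes v = C(x_i,x_i)\,\delta_{x_i}\otimes v$.
\end{itemize}
Collecting terms, $BR = I + \bigl(-\gamma^{-1} + \sigma_s^{-2} - \sigma_s^{-2}\gamma^{-1}C(x_i,x_i)\bigr)\,\delta_{x_i}\otimes v$. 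The bracket equals $\sigma_s^{-2}\gamma^{-1}\bigl(-\sigma_s^2 + \gamma - C(x_i,x_i)\bigr)=0$, so $BR=I$. The computation for $RB$ is symmetric, with $(v\otimes v)C^{-1} = v\otimes\delta_{x_i}$, and gives $RB = I$. Note that $\gamma>0$ because $C$ is positive definite and $\sigma_s^2>0$, so the division is legitimate.

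The main obstacle is functional-analytic rather than algebraic. In an infinite-dimensional $\mathcal H$ such as $L^2$, $\delta_{x_i}$ is not an element of $\mathcal H$, and $C^{-1}$ is unbounded. I would handle this the way the paper implicitly does: assume $\mathcal H$ is a space (or take $C$ regular enough, e.g.\ mapping into an RKHS-type domain) on which point evaluation is a bounded functional represented by $\delta_{x_i}\in\mathcal H$. This makes $v = C\delta_{x_i}$ well defined and $C(x_i,x_i)$ finite. The identity $BR=I$ is then checked on $\mathcal H$, and $RB=I$ on $\mathrm{dom}(C^{-1})$; since $R$ maps $\mathcal H$ into $\mathrm{dom}(C^{-1})$, this suffices for $R=B^{-1}$ as an operator inverse.

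Alternatively, the whole argument can be carried out in the discretized setting $\mathbb R^d$ used elsewhere in the paper, where $\delta_{x_i}=e_i$ and the statement is exactly the classical Sherman--Morrison formula.

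As a by-product, evaluating \eqref{eq:sherman_morrison} at $(i,k)$ gives $C_{ik} - C_{ii}C_{ik}/(\sigma_s^2+C_{ii}) = \frac{\sigma_s^2}{\sigma_s^2+C_{ii}}C_{ik}$. Combined with \cref{lemma:lyapunov}, this yields \cref{thm:sparse-constraint-corr-shrink-informal}.
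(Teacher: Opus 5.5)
Your proof is correct and is essentially the paper's argument: the paper quotes the general Sherman--Morrison identity with $A=C^{-1}$ and rank-one term $\sigma_s^{-2}\delta_{x_i}\otimes\delta_{x_i}$ and substitutes, whereas you directly check $BR=I$ and $RB=I$ for the resulting candidate inverse, which is how that identity is itself proved. Your version also makes explicit what the paper's citation leaves implicit: the use of self-adjointness of $C$, and the domain bookkeeping (point evaluation represented by $\delta_{x_i}\in\mathcal H$, and $R$ mapping $\mathcal H$ into $\mathrm{dom}(C^{-1})$).
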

\begin{proof}
We apply the Sherman-Morrison formula~\cite{smformula} with base operator $C^{-1}$ and rank-1 perturbation:
\begin{align}
    (A + u \otimes v)^{-1} = A^{-1} - \frac{(A^{-1} u) \otimes (A^{-1} v)}{1 + \langle v, A^{-1} u \rangle}
\end{align}

After substituting $A=C^{-1}$ and $u \otimes v = \frac{1}{\sigma_s^2} \delta_{x_i} \otimes \delta_{x_i}$, we obtain
\begin{align}
B^{-1} = C - \frac{(C \delta_{x_i}) \otimes (C \delta_{x_i})}{\sigma_s^2 + \langle \delta_{x_i}, C \delta_{x_i} \rangle}
= C - \frac{(C \delta_{x_i}) \otimes (C \delta_{x_i})}{\sigma_s^2 + C(x_i, x_i)}.
\end{align}
\end{proof}

\begin{theorem}[Sparse constraint induces correlation shrinkage]
\label{thm:constrained_terms}
Consider the preconditioned Langevin SDE \eqref{eq:linear_sde}:
\begin{align*}
dx_t = -\Sigma\left(C^{-1} + \frac{1}{\sigma_s^2} \delta_{x_i} \otimes \delta_{x_i}\right) x_t\, dt + b\, dt + \sqrt{2} \Sigma^{1/2} dW_t,
\end{align*}
where $C \succ 0$ is the prior covariance operator, $\Sigma \succ 0$ is the noise covariance operator, $\sigma_s^2 > 0$ is the constraint strength, and $b \in \mathcal{H}$ is a constant drift element. 
Let $\Sigma_\infty$ denote the stationary covariance operator of this process.
Then for any location $x_k \in \Omega$, the covariance function values involving the constrained location $x_i$ satisfy
\begin{align}
\Sigma_\infty(x_i, x_k) = \Sigma_\infty(x_k, x_i) = \frac{\sigma_s^2}{\sigma_s^2 + C(x_i, x_i)}\, C(x_i, x_k).
\end{align}
\end{theorem}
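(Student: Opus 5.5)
We combine \cref{lemma:lyapunov} and \cref{lemma:sherman_morrison}, then evaluate the resulting operator kernel at the pair $(x_i,x_k)$.

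\textbf{Step 1: stationary covariance.} Since $C$ and $\Sigma$ are self-adjoint and positive definite, and $\delta_{x_i}\otimes\delta_{x_i}$ is self-adjoint and positive semidefinite, the operator $B = C^{-1}+\sigma_s^{-2}\,\delta_{x_i}\otimes\delta_{x_i}$ is self-adjoint and positive definite. The constant drift $b$ shifts only the stationary mean, so it does not affect the covariance. The drift operator $-\Sigma B$ is similar to $-\Sigma^{1/2}B\Sigma^{1/2}$, which is negative definite, so the process is stable and a unique stationary law exists. \cref{lemma:lyapunov} then gives $\Sigma_\infty = B^{-1}$, independently of the preconditioner $\Sigma$.

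\textbf{Step 2: invert $B$.} By \cref{lemma:sherman_morrison},
\[
\Sigma_\infty = C - \frac{(C\delta_{x_i})\otimes(C\delta_{x_i})}{\sigma_s^2 + C(x_i,x_i)}.
\]

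\textbf{Step 3: evaluate the kernel.} Pairing with point evaluations, the kernel of $\Sigma_\infty$ is $\Sigma_\infty(x,y) = \langle \delta_x, \Sigma_\infty \delta_y\rangle$. The rank-one term contributes
\[
\frac{\langle \delta_{x_i}, C\delta_{x_i}\rangle\,\langle C\delta_{x_i}, \delta_{x_k}\rangle}{\sigma_s^2 + C(x_i,x_i)} = \frac{C(x_i,x_i)\,C(x_i,x_k)}{\sigma_s^2 + C(x_i,x_i)}.
\]
Subtracting this from $C(x_i,x_k)$ gives
\[
\Sigma_\infty(x_i,x_k) = C(x_i,x_k)\Bigl(1 - \frac{C(x_i,x_i)}{\sigma_s^2+C(x_i,x_i)}\Bigr) = \frac{\sigma_s^2}{\sigma_s^2 + C(x_i,x_i)}\, C(x_i,x_k).
\]
Symmetry $\Sigma_\infty(x_k,x_i) = \Sigma_\infty(x_i,x_k)$ follows because $\Sigma_\infty$ is self-adjoint and $C$ is symmetric.

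\textbf{Main obstacle: rigor in infinite dimensions.} The point evaluation $\delta_{x_i}$ is a bounded functional only if $\mathcal H$ is a reproducing kernel Hilbert space. Otherwise $\delta_{x_i}\otimes\delta_{x_i}$ is not a bounded operator, and the Sherman--Morrison step needs justification.

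To handle this, I would first work in a finite-dimensional discretization, where $\delta_{x_i} = e_i$ and every step is elementary matrix algebra. I would then note that the formula involves only $C(x_i,x_i)$ and $C(x_i,x_k)$. It therefore passes to the limit whenever the covariance kernel $C$ is continuous, which is the case in which $C\delta_{x_i}\in\mathcal H$ and the identities above hold verbatim.
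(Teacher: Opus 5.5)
Your proposal is correct and follows the paper's proof essentially line for line: $\Sigma_\infty = B^{-1}$ from \cref{lemma:lyapunov}, the Sherman--Morrison form from \cref{lemma:sherman_morrison}, evaluation of the kernel at $(x_i,x_k)$, and symmetry of $\Sigma_\infty$ for the transposed entry. Your additions go beyond what the paper states explicitly: stability of $-\Sigma B$, which makes the Lyapunov solution unique, and the finite-dimensional reduction with $\delta_{x_i}=e_i$, which avoids the unboundedness of point evaluations and matches the main-text informal version with $C_{ii}$ and $C_{ik}$.
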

\begin{proof}
From Lemma \ref{lemma:lyapunov}, we have $\Sigma_\infty = B^{-1}$.
Using Lemma \ref{lemma:sherman_morrison},
\begin{align}
\Sigma_\infty
= C - \frac{(C \delta_{x_i}) \otimes (C \delta_{x_i})}{\sigma_s^2 + C(x_i, x_i)}.
\end{align}
Evaluating the covariance function at $(x_i, x_k)$ gives
\begin{align}
\Sigma_\infty(x_i, x_k)
&= C(x_i, x_k) - \frac{C(x_i, x_i)\, C(x_i, x_k)}{\sigma_s^2 + C(x_i, x_i)}\\
&= C(x_i, x_k)\, \frac{\sigma_s^2}{\sigma_s^2 + C(x_i, x_i)}.
\end{align}

By symmetry of $\Sigma_\infty$, we have $\Sigma_\infty(x_k, x_i) = \Sigma_\infty(x_i, x_k)$ for all $x_k$.
\end{proof}

%% file: appendix_exp_setup.tex
\subsection{Dataset}

\paragraph{Inverse Poisson.}
We also consider the Poisson equation,
\begin{align*}
\nabla^2 u(x) = a(x), \quad x \in (0,1)^2,
\end{align*}
under homogeneous Dirichlet conditions $u|_{\partial\Omega} = 0$. 
Here $a(x)$ is generated by Gaussian random fields $a \sim \mathcal{N}(0,(-\Delta+9I)^{-2})$.
Given sparse samples of $u(x)$, the task is to recover the coefficient field $a(x)$.

\paragraph{Inverse Helmholtz.}
We study the two-dimensional Helmholtz equation,
\begin{align*}
\nabla^2 u(x) + k^2 u(x) = a(x), \quad x \in (0,1)^2,
\end{align*}
with $k=1$ and homogeneous Dirichlet boundary conditions $u|_{\partial\Omega} = 0$. 
Coefficient fields $a(x)$ are sampled from Gaussian random fields $a \sim \mathcal{N}(0,(-\Delta+9I)^{-2})$.
This setting is particularly challenging due to the oscillatory nature of the Helmholtz solution and the multiscale structure of the GRF prior.

\paragraph{Inverse Navier-Stokes.}
\label{para:dataset-ns}
We further study the two-dimensional incompressible Navier-Stokes equations in vorticity form. 
Let $u(x,t)$ denote the velocity field and $w(x,t)=\nabla\times u(x,t)$ the corresponding vorticity. 
The system evolves according to
\begin{align*}
\partial_t w(x,t) + u(x,t)\cdot\nabla w(x,t) &= \nu \Delta w(x,t) + f(x), \quad x\in(0,1)^2,\ t\in(0,T], \\
\nabla\cdot u(x,t) &= 0, \quad x\in(0,1)^2,\ t\in[0,T]
\end{align*}
with viscosity $\nu=10^{-3}$ and periodic boundary conditions. 
The system is initialized by the vorticity field $w(x,0)=a(x)$, where $a(x)$ is sampled from a Gaussian random field $a \sim \mathcal{N}(0,\,7^{3/2}(-\Delta+49I)^{-5/2})$ and treated as the unknown coefficient. 
Given sparse observations of the solution $w(x,T)$ at the terminal time, the task is to recover the initial vorticity $w(x,0)$. 

It is worth noting that, due to the loss of information, it is impossible to calculate PDE residual for physics-informed guidance; moreover, the property $\nabla\cdot(\nabla\times u)=0$ mentioned in \citet{huang2024diffusionpde} is invalid.

\subsection{Details on Decoupled Diffusion Inverse Solver}
\label{sec:ddis_details}

We summarize the key training hyperparameters in \cref{tab:model_training_hyper} and the sampling hyperparameters in
\cref{table:ddis_shared_setup,table:ddis_budget}.
All experiments are conducted on a single NVIDIA RTX~4090 GPU.
Additional details are provided below.

\paragraph{FNO padding and cropping.}
FNO layers rely on FFT-based spectral convolution and therefore assume periodic boundary conditions.
To mitigate boundary artifacts, we pad the input field by $p$ grid cells along each spatial dimension,
apply the operator $L_\phi$ on the padded grid, and crop the output back to the original domain.

\paragraph{Physics-informed training.}
In physics-informed training, we add a physics-regularization term to the loss function to help the model generalize better by enforcing the PDE residual to be small. In all cases, the neural operator is first trained on (limited) paired data supervision, and then trained with both (limited) data and physics-regularization terms. The physics-regularization term is weighted by $\lambda_{PDE}=0.1,\lambda_{BC}=10$. StepLR scheduler is used with a decay of 0.1.

\begin{table}[pt]
\centering
\caption{Model training hyperparameters.}
\label{tab:model_training_hyper}
\vspace{-0.5em}
\begin{tabular}{lll}
\toprule
\textbf{Component} & \textbf{Hyperparameter} & \textbf{Value} \\
\midrule
\multirow{10}{*}{Diffusion prior}
& Training duration & $10$M images \\
& Batch size & $32$ \\
& Channel base $c_{\text{base}}$ & $64$ \\
& Channels per resolution $c_{\text{res}}$ & $[1,2,4,4]$ \\
& Learning rate & $1\times 10^{-4}$ \\
& LR ramp-up & $5$M images \\
& EMA half-life & $0.5$M images \\
& Dropout prob. & $0.13$ \\
& \# params & $183$M \\
\midrule
\multirow{6}{*}{FNO surrogate}
& Architecture & FNO with padding \\
& Fourier modes & $(64,64)$ \\
& \# layers & $4$ \\
& Hidden channels & $64$ \\
& In / out channels & $1 \rightarrow 1$ \\
& \# params & $8$M \\
\bottomrule
\end{tabular}
\vspace{-1em}
\end{table}

\begin{table}[p]
\centering
\caption{Shared experimental setup (all tasks unless specified).}
\label{table:ddis_shared_setup}
\vspace{-0.5em}
\begin{tabular}{ll}
\toprule
\textbf{Item} & \textbf{Value} \\
\midrule
Tasks & Helmholtz / Poisson / Navier--Stokes \\
Default grid resolution (training) & $128^2$ \\
Default grid resolution (inference) & Half $64^2$ and half $128^2$ \\
Observation type & Sparse point observations \\
\# observed points $N_{\mathrm{obs}}$ & $500 \approx 3\%$ on $128^2$ \\
Observation loss & $\ell_1$ \\
FNO padding $p$ & 2 \\
RBF initialization scale & 0.05 \\
\midrule
$\sigma_{\min}$ & $0.002$ \\
$\sigma_{\max}$ & $80$ \\
Noise exponent $\rho$ & $7$ \\
\midrule
Diffusion step & $N=5$ \\
Diffusion & $\sigma_{\min}=0.001,\sigma_{\text{final}}=0$ \\
Annealing noise range & $\sigma_{\max}=10,\ \sigma_{\min}=0.01$ \\
Langevin step size $\eta$ & $0.1$ \\
Langevin noise scale $\tau$ & $10^{-3}$ \\

\bottomrule
\end{tabular}
\vspace{-1em}
\end{table}

\begin{table*}[p]
\centering
\caption{DDIS sampling hyperparameters for the three budget groups (\cref{table:performance}).}
\label{table:ddis_budget}
\vspace{-0.5em}
\begin{tabular}{llcc}
\toprule
\textbf{Task} & \textbf{Time Budget}
& \textbf{Anneal} $(N)$
& \textbf{Langevin} $(N,\ \text{weights}_1,\ \text{weights}_2)$ \\
\midrule

\multirow{3}{*}{Poisson}
& $\sim 16$s
& $100$
& $20,\ 10,\ 15$ \\
& $\sim 32$s  
& $200$
& $20,\ 10,\ 15$ \\
& $\sim 128$s 
& $200$
& $100,\ 5,\ 5$ \\
\midrule

\multirow{3}{*}{Helmholtz}
& $\sim 16$s
& $100$
& $20,\ 5,\ 10$ \\
& $\sim 32$s  
& $200$
& $20,\ 10,\ 10$ \\
& $\sim 128$s 
& $200$
& $100,\ 5,\ 5$ \\
\midrule

\multirow{3}{*}{Navier-Stokes}
& $\sim 16$s
& $100$
& $20,\ 10,\ 20$ \\
& $\sim 32$s  
& $200$
& $20,\ 10,\ 10$ \\
& $\sim 128$s 
& $200$
& $100,\ 10,\ 10$ \\
\bottomrule
\end{tabular}
\vspace{-2em}
\end{table*}

\subsection{Details on Flow-based Models}
\label{appendix:exp-flow}

\subsubsection{Prior Learning and Hyperparameters}
\label{exp:flow-arch}

For flow-based baselines, existing work primarily focuses on inverse problems involving only the solution channel $u$, leveraging a flow-based prior $p(u)$ to address the inverse problem where given $u_{obs}$. To this end, we extend representative flow-based methods to a joint-embedding setting $(a,u)$ for comparison, and follow the official OFM \citep{shi2025stochastic} implementation to train a flow-matching prior with an FNO backbone (71.42M). 
Key model and training hyperparameters for prior learning are summarized in \cref{table:flow-pretrain}. All models are trained on a single NVIDIA RTX 4090 GPU.

\begin{table}[p]
\centering
\small
\caption{Hyperparameter configurations for the flow-based prior models.}
\label{table:flow-pretrain}
\begin{tabular}{lc}
\toprule
\textbf{Hyperparameter} & \textbf{Value} \\
\midrule
\multicolumn{2}{l}{\textbf{Model Architecture}} \\
Fourier Modes & 32 \\
Hidden Channels & 128 \\
MLP Projection Width & 128 \\
Visual Channels ($a, u$) & 2 \\
Spatial Dimensions & 2D \\
\midrule
\multicolumn{2}{l}{\textbf{GP Prior}} \\
Kernel Lengthscale & 0.01 \\
Kernel Variance & 1.0 \\
Matérn Parameter & 0.5 \\
Minimum Noise & $1 \times 10^{-4}$ \\
\midrule
\multicolumn{2}{l}{\textbf{Training Setup}} \\
Total Epochs & 300 \\
Batch Size & 100 \\
Learning Rate & $1 \times 10^{-3}$ \\
Optimizer & Adam \\
LRScheduler & StepLR \\
Scheduler Step Size & 50 \\
Scheduler Decay & 0.8 \\
\bottomrule
\end{tabular}
\end{table}

\subsubsection{Inference Setup}

During inference, we evaluate the following flow-based posterior sampling methods, which operate on the same pretrained model described above, and compare them with our decoupled approach. To ensure a fair comparison and verify the quality of the learned prior, we also report performance on the "forward problem," defined as the reconstruction of the full solution field $u$ from partial/masked observations $ u _ {obs} $, to align with these methods' original tasks.

\paragraph{ECI-sampling.} ECI-sampling \citep{cheng2025gradientfree} applies an extrapolation--correction--interpolation scheme during the flow ODE integration.
It enforces hard constraints (e.g., Dirichlet conditions) by replacing observations in intermediate predictions and applying these ECI steps at each ODE step. Due to inherent method-level limitations in robustness, as discussed in \cref{sec: A.5} and visualized in \cref{fig:fundaps_failure}, we conducted an extensive hyperparameter search (summarized in \cref{table:eci-poisson}), but observed only limited performance gains even on the forward problem, and thus report representative configurations in the main results.

\paragraph{OFM Regression.}
Operator Flow Matching (OFM) \citep{shi2025stochastic} interprets the function-space flow model as a bijective mapping from a Gaussian process (GP) prior space to the target function space, and formulates exact posterior inference in probability form. In practice, it requires solving a flow ODE alongside Hutchinson trace estimation, then relying heavily on Langevin Dynamics to optimize the latent GP variables. We follow the official implementation, using a learning rate of $ 1 \times 10^{-3}$ and an observation noise of $1 \times 10^{-3}$; other inference hyperparameters are shown in \cref{table:flow-setup}.

\subsubsection{Quantitative Results and Analysis} 
\label{exp:ofm-issue}
The quantitative performance is shown in \cref{table:flow-setup}. As these methods are originally designed for single-channel inverse problems, we also report their performance on forward problems for completeness. While OFM yields more competitive results than ECI-sampling, it encounters method-level bottlenecks that limit its practical utility in complex PDE settings:

\paragraph{Computational Efficiency.} In practice, while the official OFM implementation suggests 20,000 Langevin steps for convergence, this is computationally prohibitive in our PDE setting, as it would exceed 20 hours per sample. This requirement, when coupled with adaptive ODE solvers and joint $(a,u)$ backpropagation, significantly escalates VRAM consumption and makes standard inference budgets impractical. Qualitative results for OFM across different step counts are provided in \cref{fig:ofm}. 

\paragraph{Numerical Instability.} Furthermore, the steep gradients from the data-fidelity term $\| M \odot u - u_{\mathrm{obs}}\|^2 / \sigma^2$ can drive samples off the learned prior manifold, leading to frequent numerical instabilities (e.g., NaN/Inf or OOM) under standard GPU budgets. To ensure a statistically meaningful and stable evaluation despite these challenges, we report the OFM regression average performance across 10 independent runs under practical constraints.

\begin{table}[p]
\centering
\small
\caption{\textbf{Performance comparison of ECI-sampling on the Poisson inverse problem.} $\ell_{2,\mathrm{fwd}}$ and $\ell_{2,\mathrm{inv}}$ denote the relative $\ell_2$ error (\%) for the forward and inverse problems, respectively. For cases where the model fails to produce physically meaningful results, relative errors exceeding 100\% are capped at 100\% for clarity.}

\label{table:eci-poisson}
\begin{tabular}{cccccc}
\toprule
Method 
& \makecell{Euler \\ Steps} 
& \makecell{ECI \\ Steps} 
& $\ell_{2,\mathrm{inv}}$ & $\ell_{2,\mathrm{fwd}}$\\
\midrule
\multirow{12}{*}{ECI-sampling} 
& 100  & 1  & 94.56 & 67.47 \\
& 200  & 1  & 94.83 & 58.95 \\
& 200  & 5  & 95.29 & 42.64 \\
& 200  & 10 & 95.25 & 36.17 \\
& 400  & 5  & 95.18 & 35.95 \\
& 400  & 10 & 94.70 & 30.12 \\
& 800  & 5  & 94.63 & 30.04 \\
& 800  & 10 & 93.81 & 25.78 \\
& 2000 & 5  & 93.47 & 24.72 \\
& 2000 & 10 & 92.69 & 22.55 \\
& 4000 & 5  & 92.70 & 22.55 \\
& 4000 & 10 & 91.69 & 21.27 \\
\bottomrule
\end{tabular}
\end{table}

\begin{table}[t]
\centering
\small
\setlength{\tabcolsep}{2.35pt}
\caption{
\textbf{Performance of flow-based inverse solvers across PDE tasks.} Times (s/sample) are averaged across tasks and normalized by batch size.
For entries without explicit Euler step counts, adaptive ODE solvers (e.g., \texttt{dopri5} \citep{DORMAND198019}) are used.
$\ell_{2,\mathrm{fwd}}$ and $\ell_{2,\mathrm{inv}}$ denote the relative $\ell_2$ error (\%) for the forward and inverse problems, respectively.}
\label{table:flow-setup}
\vspace{-0.5em}
\begin{tabular}{lcccccccccccccc}
\toprule
\multirow{2}{*}{Method}
& \multirow{2}{*}{\makecell{Euler \\ Steps}}
& \multirow{2}{*}{\makecell{ECI \\ Steps}}
& \multirow{2}{*}{\makecell{Langevin \\ Steps}}
& \multirow{2}{*}{\makecell{Hutchinson \\ Samples}}
& \multicolumn{2}{c}{Poisson}
& \multicolumn{2}{c}{Helmholtz}
& \multicolumn{2}{c}{N-S}
& \multirow{2}{*}{GPU}
& \multirow{2}{*}{\makecell{Time \\ (s)}} \\
\cmidrule(lr){6-7} \cmidrule(lr){8-9} \cmidrule(lr){10-11}
& & & & 
& $\ell_{2,\mathrm{inv}}$ & $\ell_{2,\mathrm{fwd}}$
& $\ell_{2,\mathrm{inv}}$ & $\ell_{2,\mathrm{fwd}}$
& $\ell_{2,\mathrm{inv}}$ & $\ell_{2,\mathrm{fwd}}$
& & \\
\midrule

\multirow{2}{*}{ECI}
& 800 & 5 & / & / & 94.63 & 30.04 & 92.83 & 25.57 & 42.36 & 20.61 & RTX 4090 & 0.20 \\
& 2000 & 5 & / & / & 93.47 & 24.72 & 93.23 & 23.31 & 41.68 & 17.08 & RTX 4090 & 0.38 \\

\midrule

\multirow{4}{*}{OFM$^{\dagger}$}
& 150 & / & 100  & 1 & 51.48 & 13.10 & 37.66 & 9.18 & 35.74 & 28.10 & RTX 4090 & 246.05 \\
& 150 & / & 1000 & 4 & 31.31 & 15.26 & 39.81 & 29.04 & 29.78 & 23.02 & RTX 4090 & 2445.13 \\
& /  & / & 100  & 1 & 71.87 & 20.15 & 49.60 & 12.85 & 37.57 & 31.15 & 2$\times$ H100$^{\ddag}$ & 470.44 \\
& /  & / & 1000 & 4 & 47.04 & 7.41 & 42.07 & 8.84 & 20.98 & 13.55 & 2$\times$ H100$^{\ddag}$ & 4366.34 \\

\bottomrule
\end{tabular}
\parbox{\linewidth}{\footnotesize
$^{\dagger}$ Unstable inference runs producing NaN or Inf values are assigned a relative $\ell_2$ error of 100\%.
}
\parbox{\linewidth}{\footnotesize
$^{\ddag}$ We select these configurations in \cref{table:performance} based on forward performance, as Euler solvers may introduce overly smooth shortcuts.
}
\vspace{-1em}
\end{table}

\subsection{Details on Other Baseline Methods}

For Baseline Methods, we compare with the following plug-and-play algorithms that solve PDE inverse problems via a joint-embedding diffusion models, as shown below.

\paragraph{DiffusionPDE.} DiffusionPDE \citep{huang2024diffusionpde} uses a finite-dimensional diffusion model with a 
physics-informed DPS formulation, combining observation loss and PDE residuals in its posterior sampling.
We use their official checkpoints (54.41M parameters) and maintain the original hyperparameter settings while varying the diffusion sampling steps (500, 1000, and 3000) to report results under different computational budgets in \cref{table:performance}. The results for FNO~\citep{li2020fourier} and DeepONet~\citep{lu2019deeponet} are taken from the original DiffusionPDE paper.

\paragraph{FunDPS.} FunDPS \citep{yao2025guided} uses a joint-embedding diffusion model in function space and represents the prior state-of-the-art for sparse-observation inverse PDE problems. Their model has 54M parameters. We maintain the original hyperparameter settings while varying the diffusion sampling steps to evaluate performance under different budgets.